\newcites{app}{References for the Supplementary Material} % Defines \citeapp, \bibliographystyleapp, \bibliographyapp
\newcites{app}{Appendix References} % new bib defined for appendix
\theoremstyle{plain}
\newtheorem{theorem}{Theorem}[section]
\newtheorem{proposition}[theorem]{Proposition}
\newtheorem{lemma}[theorem]{Lemma}
\newtheorem{corollary}[theorem]{Corollary}
\newtheorem{definition}[theorem]{Definition}
\theoremstyle{remark}
\newtheorem{remark}[theorem]{Remark}
\Crefname{assumption}{\textbf{H}\hspace{-3pt}}{\textbf{H}\hspace{-3pt}}
\crefname{algorithm}{\text{Alg.}}{\text{Alg.}}
\crefname{assumption}{\textbf{H}}{\textbf{H}}
\crefname{equation}{\text{Eq}}{\text{Eq.}}
\crefname{definition}{\text{Definition}}{\text{Definition}}
\crefname{theorem}{\text{Theorem}}{\text{Theorem}}
\crefname{lemma}{\text{Lemma}}{\text{Lemma}}
\crefname{dfn}{\text{Definition}}{\text{Definition}}
\crefname{thm}{\text{Theorem}}{\text{Theorem}}
\crefname{tab}{\text{Table}}{\text{Table}}
\crefname{fig}{\text{Figure}}{\text{Figure}}
\crefname{table}{\text{Table}}{\text{Table}}
\crefname{figure}{\text{Figure}}{\text{Figure}}
\crefname{section}{\text{Section}}{\text{Section}}
\title{On the Interaction of Compressibility and Adversarial Robustness}
 \author{\name Melih Barsbey \\
      \addr{Department of Computing, Imperial College London, UK} \\[0.5em]
\name Ant\^onio H. Ribeiro \\
      \addr{Department of Information Technology, Uppsala University, Sweden} \\[0.5em]      
    \name Umut Şimşekli \\
      \addr{INRIA, CNRS, Département d'Informatique de l'Ecole Normale Supérieure / PSL, France} \\[0.5em]
   \name Tolga Birdal \\
      \addr{Department of Computing, Imperial College London, UK}   
 }
\newcommand{\vv}[1]{\boldsymbol{#1}}
\newcommand{\x}{\vv{x}}
\newcommand{\va}{\vv{a}}
\newcommand{\z}{\vv{z}}
\newcommand{\p}{\vv{p}}
\newcommand{\ps}{p^{*}}
\newcommand{\s}{\vv{s}}
\newcommand{\param}{\vv{\theta}}
\newcommand{\paramk}{\param^{(k)}}
\newcommand{\W}{\mathbf{W}}
\newcommand{\D}{\mathbf{D}}
\newcommand{\A}{\mathbf{A}}
\newcommand{\B}{\mathbf{B}}
\newcommand{\C}{\mathbf{C}}
\newcommand{\PP}{\mathbf{P}}
\newcommand{\PD}{\PP(\D)}
\newcommand{\w}{\mathbf{w}}
\newcommand{\bnu}{\pmb{\nu}}
\newcommand{\bsigma}{\pmb{\sigma}}
\newcommand{\Sg}{\mathbf{\Sigma}}
\newcommand{\U}{\mathbf{U}}
\newcommand{\V}{\mathbf{V}}
\newcommand{\slack}{\beta}
\newcommand{\qke}{$(q,k,\epsilon)$}
\newcommand{\brac}[1]{#1}
\newcommand{\bracl}{\brac{l}}
\newcommand{\UST}[2]{\U_{#1}^{#2}\Sg_{#1}^{#2}\bigl(\V_{#1}^{#2}\bigr)^{\!\top}}
\newcommand{\USTrt}[2]{\U_{#1}^{#2}\sqrt{\Sg_{#1}^{#2}}\sqrt{\Sg_{#1}^{#2}}\bigl(\V_{#1}^{#2}\bigr)^{\!\top}}
\newcommand{\USrt}[2]{\U_{#1}^{#2}\sqrt{\Sg_{#1}^{#2}}}
\newcommand{\SrtV}[2]{\sqrt{\Sg_{#1}^{#2}}\bigl(\V_{#1}^{#2}\bigr)^{\!\top}}
\newcommand{\fadvp}[1]{f^{\mathrm{adv}}_{#1}}
\newcommand{\advriskp}[1]{F^{\mathrm{adv}}_{#1}}
\DeclareMathOperator*{\argmax}{arg\,max}
\DeclareMathOperator*{\argmin}{arg\,min}
\newcommand{\norm}[2]{\|#1\|_{#2}}
\renewcommand{\paragraph}[1]{{\vspace{0.1mm}\noindent \bf #1}.}
\newcommand{\ie}{\textit{i}.\textit{e}.}
\begin{document}

\maketitle

 % \begin{minipage}{0.95\textwidth} 
\begin{abstract}
% Given resource-efficiency and performance requirements from modern machine learning (ML) systems, compressibility without compromising performance is an increasingly desirable property. Accordingly, there are an ever-increasing number of methods being proposed to induce structured compressibility/sparsity, which can then be exploited by modern hardware to increase memory and computation efficiency of ML models. In this paper, we show that structured compressibility can come at an unexpected cost, in leading to a particular brand of adversarial vulnerability by creating potent directions in the latent space that adversarial examples exploit. Our analysis shows a direct relationship between row compressibility, (i.e. uneven neuron weights), spectral compressibility (i.e. uneven singular values), and $\ell_\infty$ and $\ell_2$ Lipschitz constants respectively. We empirically confirm that these factors indeed increase the success of adversarial attacks, and these effects cannot be offset by adversarial training. Our analysis leads to specific hypotheses about how such vulnerabilities can expedite transfer of adversarial examples from pretrained models, as well as the appearance of universal adversarial examples, which we also confirm through systematic experiments. Lastly, we show that simple regularization and pruning methods derived from our adversarial robustness gap can address these vulnerabilities, and compete with novel state of the art adversarial pruning methods in combining robustness and compressibility.
\begin{center}
\begin{minipage}{0.78\textwidth}  
Modern neural networks are expected to simultaneously satisfy a host of desirable properties: accurate fitting to training data, generalization to unseen inputs, parameter and computational efficiency, and robustness to adversarial perturbations. While compressibility and robustness have each been studied extensively, a unified understanding of their interaction still remains elusive. In this work, we develop a principled framework to analyze how different forms of compressibility - such as neuron-level sparsity and spectral compressibility - affect adversarial robustness. We show that these forms of compression can induce a small number of highly sensitive directions in the representation space, which adversaries can exploit to construct effective perturbations. Our analysis yields a simple yet instructive robustness bound, revealing how neuron and spectral compressibility impact $\ell_\infty$ and $\ell_2$ robustness via their effects on the learned representations. Crucially, the vulnerabilities we identify arise irrespective of how compression is achieved - whether via regularization, architectural bias, or implicit learning dynamics. Through empirical evaluations across synthetic and realistic tasks, we confirm our theoretical predictions, and further demonstrate that these vulnerabilities persist under adversarial training and transfer learning, and contribute to the emergence of universal adversarial perturbations.  Our findings show a fundamental tension between structured compressibility and robustness and highlight new pathways for designing models that are both efficient and safe.
\end{minipage}
\end{center}
\end{abstract}
% \end{minipage}
%\input{content/01_intro}
\section{Introduction}

Machine learning (ML) systems are increasingly deployed in high-stakes domains such as healthcare~\citep{rajpurkar_ai_2022} and autonomous driving~\citep{hussain_autonomous_2019}, where reliability is paramount. With their growing social impact, modern neural networks are now expected to meet a suite of often conflicting demands: they must {fit the data} (explain observations), {generalize} to unseen inputs, remain efficient in storage and inference, \ie, be {compressible}, and exhibit {robustness} against adversarial perturbations, as well as other distribution shifts. While each of these desiderata has been studied extensively in isolation, a mature and unified understanding of how they interact - and in particular, how compressibility shapes robustness - remains elusive. 
 
As desirable as adversarial robustness and compressibility both are, the research has been  equivocal regarding whether/when/how their simultaneous achievement is possible \citep{guoSparseDNNs2018, baldaAdversarialRisk2020, liAdversarialRobustness2020, merklePruningFace2022,liaoAchievingAdversarial2022, pirasAdversarialPruning2024}. However, recent work has started to provide mechanism-based explanations for the relationship between the two, highlighting how compressibility impacts models' vulnerability to adversarial noise. For example, \citet{savostianova2023robust} demonstrate that low-rank parameterizations may inadvertently amplify local Lipschitz constants, increasing sensitivity to perturbations. \citet{nernTransferAdversarial2023} connect adversarial transferability to layer-wise operator norms and their impact on representation geometry. \citet{fengLipschitzConstant2025} further shows that while moderate sparsity can enhance robustness, excessive sparsity causes ill-conditioning that reintroduces fragility and vulnerability. These results hint at a delicate, regime-dependent relationship between compressibility and robustness - but a principled and general framework is still lacking.

In this work, we develop a framework to investigate the effect of structured sparsity on adversarial robustness through its effect on parameter operator norms and network's Lipschitz constant. We jointly study how different forms of compressibility - particularly neuron-level sparsity and spectral compression - affect adversarial robustness. Our central result is an intuitive and instructive adversarial robustness bound that reveals how compressibility can induce a small set of highly sensitive directions in the representation space. These ``adversarial directions'' dramatically amplify perturbations and are readily exploited by adversaries.
Empirically, we confirm that these axes are not merely theoretical constructs: adversarial attacks reliably identify and exploit them across architectures, datasets, and attack models. \cref{fig:teaser} provides a visual preview of our findings. Previous research tightly links compressibility to generalization \citep{arora2018stronger, barsbeyHeavyTails2021}; however, our findings imply that the very mechanisms that promote generalization can also introduce structural weaknesses. 
%Interestingly, we show that this vulnerability arises regardless of how compressibility is induced: whether through explicit regularization. \eg, group sparsity regularizers, architectural choices, \eg, low-rank factorization, or implicit inductive biases during training.
%Our analysis also sheds light on the interaction between generalization and robustness. 
% We show that the detrimental effects of structured compressibility appear not only under standard training, but also under adversarial training. We show that the effects persist through transfer learning, where the structural vulnerabilities picked up during pre-training can affect downstream tasks. We also leverage our bound to predict the conditions in which universal adversarial examples appear. Lastly, we leverage our insights to propose simple yet effective compression and and regularization schemes that jointly promote robustness and compressibility.
In summary, our contributions are:
\begin{enumerate}[noitemsep,topsep=0em,leftmargin=*]
% \item We provide rigorous adversarial robustness bounds in the form of:
% \begin{equation}
%     \mathrm{Adv. Rob} \leq \mathrm{comp} \cdot \mathrm{weight~norm} \cdot \mathrm{inter-layer-alignment}.
% \end{equation}
% ....
\item We provide an \textbf{adversarial robustness bound} that decomposes into analytically interpretable terms, and predicts that neuron and spectral compressibility create adversarial vulnerability against $\ell_\infty$ and $\ell_2$ attacks, through their effects on networks' Lipschitz constants.
\item Utilizing various compressibility-inducing interventions, we empirically validate our predictions regarding the \textbf{emergence of adversarial vulnerability under structured compressibility} with various datasets and models, {including commonly used modern encoder architectures}.
\item We demonstrate that the \textbf{detrimental effects of compressibility persist under adversarial training and transfer learning}, and contribute to the appearance of universal adversarial examples.
\item We demonstrate and discuss our findings' implications for compression in practice, and highlight promising paths for \textbf{designing models that reconcile efficiency and safety}.
\end{enumerate}
% We will make our implementation publicly available upon publication.

\begin{figure}[t]
\vspace{-0.4cm}
\centering
  \begin{subfigure}[b]{0.35\textwidth}
	\centering
	\includegraphics[width=\linewidth]{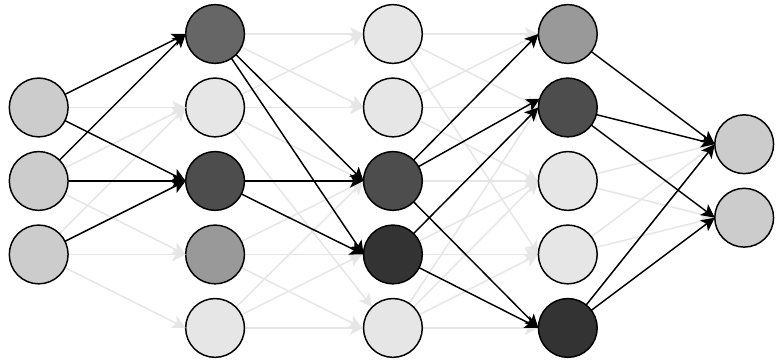}
	% \caption{Sparsification expedites compression but creates sensitive latent directions.}
	% \label{fig:l2_tm}
\end{subfigure}
\hfill
\begin{subfigure}[b]{0.36\textwidth}
	\centering
	\includegraphics[width=\linewidth]{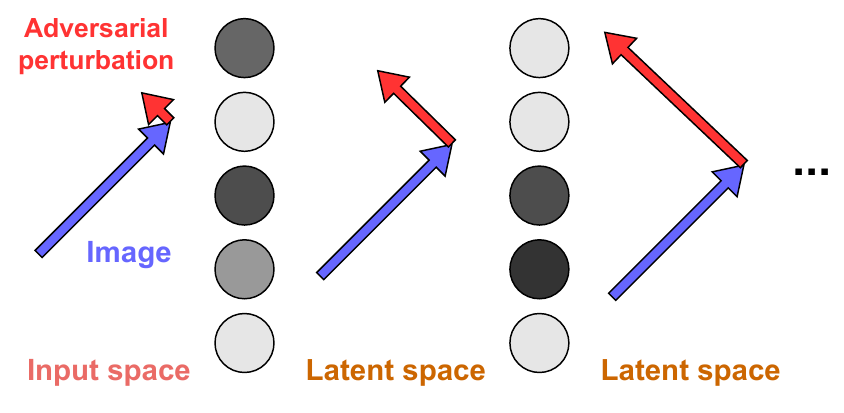}
	% \caption{Adversaries exploit these sensitive directions to increase their potency.}
	% \caption{Row sparsity reg. with FCN and CIFAR-10}
	% \label{fig:l2_tm}
\end{subfigure}
\hfill
\begin{subfigure}[b]{0.22\textwidth}
	\centering
	\includegraphics[width=\linewidth]{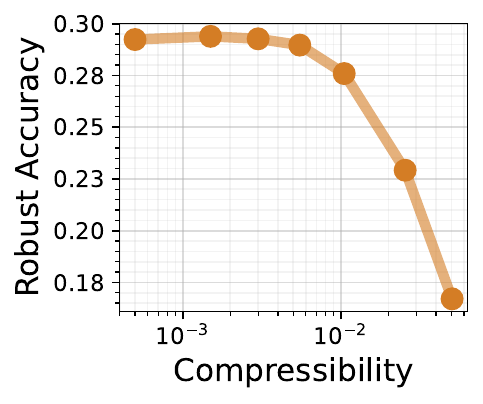}
	% \caption{This leads to decreased adversarial robustness.}
	% \label{fig:l2_tm}
\end{subfigure}
\caption{A visual preview of our findings. (Left) Sparsification expedites compression but creates sensitive latent directions. (Center) Adversaries exploit these sensitive directions to increase their potency. (Right) This leads to decreased adversarial robustness. \vspace{-4mm}}
\label{fig:teaser}
\end{figure}
\vspace{-1mm}\section{Setup}\vspace{-1mm}
\label{sec:background}
\paragraph{Notation}
We denote scalars by lower case italic ($k$), vectors with lower case bold ($\x$), and matrices with upper case bold ($\W$) characters respectively. Vector $\ell_p$ norms are denoted by $\norm{\x}{p}$. For matrices, $\norm{\W}{F}, \norm{\W}{2}, \norm{\W}{\infty}$ correspond to Frobenius, spectral, and $\ell_\infty$ operator norms, respectively. We denote the $i^\mathrm{th}$ element of a vector $\x$ with $x_i$, and row $i$ of a matrix $\W$ with $\w_i$. Elements of a sequence of matrices (e.g. layer matrices) are referred to by $\W^{l}, l \in [\lambda]$. For an integer $n$, we use $[n]:=(1, \dots, n)$. %For any norm $p$, its dual norm is denoted by $\ps$, i.e. $1/p + 1/\ps = 1$. Note that $\ps=1$ and $\ps=2$ for $p=\infty$ and $p=2$ respectively.

Unless otherwise specified, we will be focusing on supervised classification problems, which will involve the input $\x \in \mathcal{X}$ and label $y \in \mathcal{Y}$.
A predictor $g: \mathcal{X} \to \mathbb{R}^{|\mathcal{Y}|}$, parametrized by $\param \in \Theta$ produces output logits $\s  = g(\x, \param)$, the maximum of which is the predicted label $\hat{y} = \argmax_{i\in |\mathcal{Y}|} s_i$. Predictions are evaluated by a loss function $\ell: \mathbb{R}^{|\mathcal{Y}|} \times \mathcal{Y} \to \mathbb{R}_{+}$. 
For brevity, we define the composite loss function $f(\x, \param):= \ell(g(\x, \param),y)$. 

\paragraph{Risk and adversarial robustness}
Assuming a data distribution $\pi$ on $\mathcal{X} \times \mathcal{Y}$, we define the population and empirical risks accordingly:
	$F(\param) := \mathbb{E}_{\x, y \sim \pi} [f(\x,\param)]$, and $\widehat{F}(\param, S) := \frac{1}{n}\sum_{i=1}^n f(\x_i,\param)$, 
	where $(\x_i, y_i)_{i=1}^n$ denotes a set of i.i.d.\ samples from $\pi$. 
	% \begin{assumption}
	% 	$f$ is Lipschitz continuous with respect to $\param$ and $\x$.
	% \end{assumption}
	% 	This implies that there exist non-negative constants $\Lt, \Lx$, such that the following inequalities hold:
	% 	\begin{align}
 %        \label{eq:lipschitz_x}
	% 		|f(\x; \param)-f(\x; \param')|&\leq \Lt  \|\param -\param'\|, \forall \x  \in \mathcal{X}, \forall \param, \param' \in \mathbb{R}^d,\nonumber \\
	% 		|f(\x,\param)-f(\x',\param)|&\leq \Lx  \|\x -\x'\|, \forall \param \in \mathbb{R}^d, \forall \x, \x'  \in \mathcal{X}\nonumber
	% 	\end{align}
	% 	Here, the norm $\|\param -\param'\|$ can be chosen differently for mathematical convenience.
Adversarial attacks are minimal perturbations to input that dramatically disrupt a model's  predictions \citep{szegedyIntriguingProperties2014}. In this paper, we focus on bounded $p$-norm attacks, which we define as
\begin{align}
    \va^* = \underset{\|\va \|_p \leq \delta}{\arg\max} f(\x + \va, \param).
\end{align}
Given the adversarial loss $\fadvp{p} (\x, \param; \delta):= f(\x + \va^*, \param)$, we define adversarial risk and empirical adversarial risk as $\advriskp{p}(\param; \delta) := \mathbb{E}_{\x, y \sim \pi} [\fadvp{p} (\x, \param; \delta)]$ and $\widehat{F}^{\mathrm{adv}}_p(\param, S; \delta) := \frac{1}{n}\sum_{i=1}^n \fadvp{p}(\x_i,\param; \delta)$, respectively. The type of the selected \emph{attack norm} $p$ for the \textit{attack budget} $\delta$, determines the type of adversarial attack in question, with $p=2$ and $p=\infty$ as the most common choices. In this paper, we are primarily interested in what we call the \textit{adversarial robustness gap}: $\Delta^{\mathrm{adv}}_p:=\advriskp{p}(\param, \delta)-F(\param)$. A model with small $\Delta^{\mathrm{adv}}_p$ is considered \textit{adversarially robust}.

\paragraph{Neural networks} Our analyses will focus on neural networks under classification. We define a fully connected neural network (FCN) with $\lambda$ hidden layers of $h$ units as below:
\begin{align}
	\label{eq:classifier}
	g(\x, \param) = \C\phi(\W^{{\lambda}} \phi( \dots \W^{{1}}\x)),
\end{align}
where $\param := (\C, \W^{1}, \dots, \W^{{\lambda}})$, $\phi$ is elementwise ReLU activation function.
%, and $\W = (\W^{l})^{\lambda}_{l=1}$, where $\W^{l} \in \mathbb{R}^{h\times h}$.
We can write $g$ as the composition of two functions, a linear classifier head $c: \mathbb{R}^h \to \mathbb{R}^{|\mathcal{Y}|}$, and a feature encoder $\Phi: \mathcal{X} \to \mathbb{R}^h$, such that $g(\x, \param):= c(\cdot, \C) \circ \Phi(\cdot, \W^1 \dots \W^\lambda)(\x)$. To avoid notational clutter and without loss of generality, throughout our analyses we assume that $\x\in\mathbb{R}^h$, and omit bias parameters.

\paragraph{Lipschitz continuity} Given two $L^p$ spaces $\mathcal{X}$ and $\mathcal{Y}$, a function $g: \mathcal{X}\to\mathcal{Y}$ is called Lipschitz continuous if there exists a constant $K_p$ such that $\|g(\x^1) - g(\x^2)\|_p\leq K_p\|\x^1 - \x^2\|_p, \forall \ \x^1, \x^2 \in \mathcal{X}$. Said $K_p$ is called the (global) Lipschitz constant. Any $\bar{K}_p$ that is valid for a subspace $\mathcal{U} \subset \mathcal{X}$ is called a local Lipschitz constant. Although its computation is NP-hard for even the simplest neural networks \citep{scamanLipschitzRegularity2018}; as a notion of input-based volatility, estimation, utilization, and regularization of the Lipschitz constant have been a staple of robustness research \citep{cisseParsevalNetworks2017, bubeckLawRobustness2020,muthukumarAdversarialRobustness2023,grishinaTightEfficient2025}. Note that the FCN as defined in \cref{eq:classifier} is Lipschitz continuous in $\ell_p$ for $p \in [1, \infty]$, along with other commonly used architectures such as convolutional neural networks (CNN) \citep{zuhlkeAdversarialRobustness2025}.

\paragraph{Compressibility}
%	\melih{Maybe we should not assume that our algorithm will output compressible parameters by default since we want to compare settings where they do vs. where they do not.}
Various prominent approaches to neural network compression exist, such as pruning, quantization, distillation, and conditional computing, \citep{oneillOverviewNeuralNetwork2020}. Here we focus on {pruning and low-rank approximation, two of the most commonly used and researched forms of compression \citep{hohmanModelCompression2024}. More specifically, we focus on inherent properties of network parameters that make them amenable to pruning or low-rank approximation, i.e. their \textit{compressibility}. We will first present a formal definition of a  \textit{compressible} vector, and then will show how this definition can be utilized to describe both structured prunability and low-rankness.}
%As such, the notion is inherently related to pruning, removal of model parameters based on a notion of importance, including but not limited to parameter norms. 
%in the form of prunability, 
% Compressibility is a generalization/relaxation of sparsity,  closely related to sparsity, For the purposes of this paper, whether the resulting parameter vector is \textit{compressible} will be crucial. 

\begin{definition}[($q, k, \epsilon$)-compressibility]
		\label{def:compressibility}
		Given a vector $\param \in \mathbb{R}^d$ and a non-negative integer $k \leq d$, let 
		$\param_k$ denote the compressed vector which contains  the largest (in magnitude) $k$ elements of $\param$ with all the other elements set to $0$. Then, $\param$ is ($q, k, \epsilon$)-compressible if and only if
		\begin{equation}
		{\|\param - \param_k\|_q} \,/\, {\|\param\|_q} \,\leq\, \epsilon.    
		\end{equation}
		In the case of equality, we call $\param$ to be strictly ($q, k, \epsilon$)-compressible. {Complementarily, the spread variable $\slack \in [0, 1]$ can be used to characterize the dispersion of top-$k$ terms, such that $|\theta_{m_k}| = (1 - \slack)|\theta_{m_1}|$, where $m_i$ indexes the $i$'th largest magnitude element in the vector.}
        %Moreover, the $\param$ is considered ($q, k, \epsilon$)-incompressible iff this inequality does not hold.  %When convenient, this definition can be modified to involve $\kappa:=k/d$ instead of $k$. 
	\end{definition}
	%		
	%		\begin{lemma}
		%			Compressibility reverted at extreme $q$'s.
		%		\end{lemma}
	%		
    
Moving forward we will assume any vector denoted as compressible is strictly compressible, unless otherwise noted. See the Appendix for a more in-depth discussion of our compressibility definition and how it relates to other notions of approximate sparsity.

\paragraph{Structured compressibility} Importantly, given that the $\param$ can be any vector, the above definition can be used flexibly to describe different notions of compressibility, including those of structured compressibility, where particular substructures in the model dominate the rest. More specifically, given a layer parameter matrix $\W\in\mathbb{R}^{h\times h}$ from \cref{eq:classifier}, let $\bnu := (\norm{\w_1}{1}, \dots, \norm{\w_h}{1})$ denote $\ell_1$ norms of rows of the matrix $\W$. The compressibility of $\bnu$ would correspond to \textit{row/neuron compressibility}, which is a desirable property for neural network parameters as it expedites pruning of whole neurons, with tangible computational gains. Note that this also would correspond to filter compressibility/prunability in CNNs with a matricization of the convolution tensor. Similarly, let $\bsigma:=(\sigma_1, \sigma_2, \dots)$ denote the singular values of matrix $\W$. Compressibility of $\bsigma$ would correspond to \textit{spectral compressibility}, closely related to the notion of approximate/numerical low-rankness.

\vspace{-1mm}\section{Norm-based adversarial robustness bounds}
\vspace{-1mm}
\label{sec:theory}
\paragraph{Motivating hypothesis}
%We start with a motivating example for our analysis. 
Our analysis relies on a simple intuition: Although structured (neuron, spectral) compressibility is desirable from a computational perspective, it also focuses the total energy of the parameter on a few dominant terms (rows/filters, singular values).
\begin{wrapfigure}[13]{r}{0.28\textwidth}
\centering
\vspace{-.3cm}
\includegraphics[width=\linewidth]{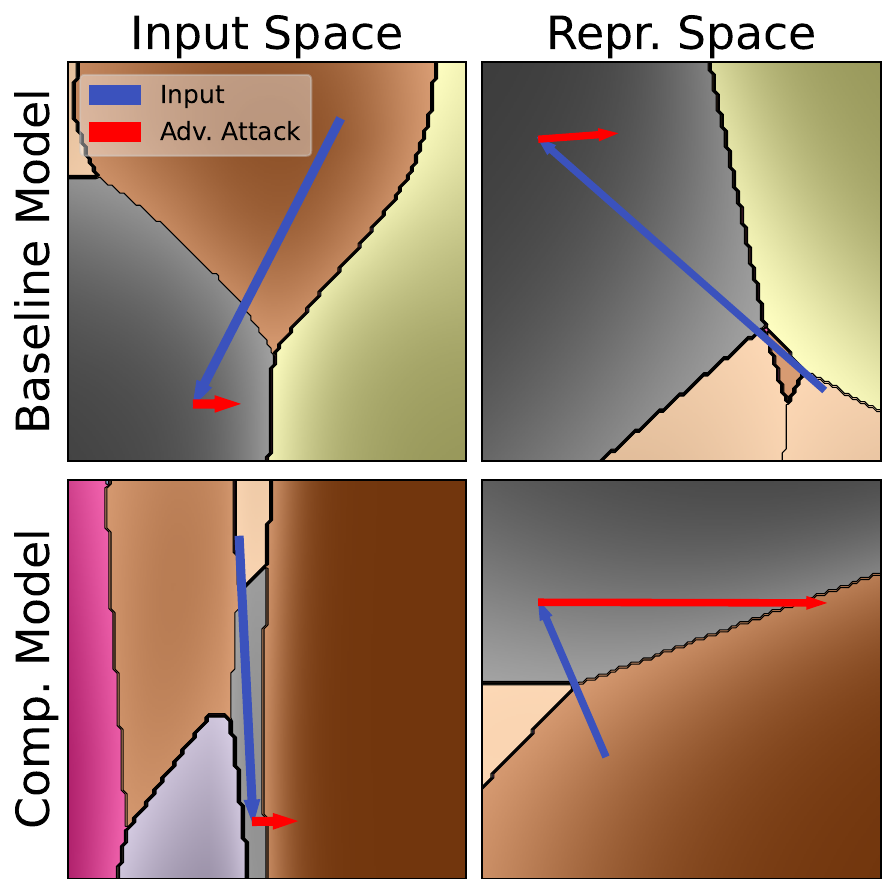}
\caption{Decision boundaries under compressibility.}
\label{fig:nucnorm_mnist_boundaries}
\end{wrapfigure}
This in turn creates a few, potent directions in the latent space and increases the operator norms of the parameters ($\ell_\infty$, $\ell_2$ operator norms respectively). This increases their sensitivity to worst-case perturbations: adversarial attacks exploiting these directions are amplified in the representation space, and can more easily disrupt the predictions of the model. 
{Taken from an experiment presented in full detail in \cref{sec:experiments}, \cref{fig:nucnorm_mnist_boundaries} visualizes the input image, adversarial perturbation, and decision boundaries for a single sample under a baseline vs. compressible (low-rank) model, obtained through a PCA of the representations. The top row visualizes the baseline model, where the minuscule adversarial perturbation fails to move the perturbed image across class boundaries. The bottom row however, illustrates the compressible model under attack. Here, although attack budget is identical in the input space, the adversarial perturbation is dramatically amplified in the representation space, leading to a successful adversarial attack. Note that the decision boundaries in compressible model's input space is much more contracted to reflect this  vulnerability}.

\noindent\textbf{Compressibility-based Lipschitz bounds.}
Our theory will relate structured compressibility to robustness through its effect on the network's operator norms and Lipschitz constants. However, this brings about a particular conceptual challenge. Our notion of \qke-compressibility, like others' \citep{diaoPruningDeep2023}, is a \textit{scale-independent} measure. Therefore, any direct relation between compressibility and Lipschitz constants would be rendered void by the arbitrary scaling of the parameters. Therefore, we characterize $\ell_\infty$ and $\ell_2$ operator norms of the parameters by an upper bound that decomposes into (compressibility $\times$ Frobenius norm) terms. This ``structure vs. scale'' decomposition allows us to meaningfully relate compressibility and robustness, and also allows us to develop concrete hypotheses regarding the effect of various interventions in neural network training.
\begin{theorem}
\label{thm:op_norm_rels}
The following statements relate operator norms and structured compressibility. 
\begin{enumerate}[noitemsep,leftmargin=*,topsep=0em,align=left]
    \item [\textbf{(a) Neuron compressibility (i.e. row-sparsity):}] Let $\w_i, i\in[h]$ denote the rows of the matrix $\W$, and  let $\bnu := (\norm{\w_1}{1}, \dots, \norm{\w_h}{1})$ denote $\ell_1$ norms of its rows. Assuming $\bnu$ is $(1, k_{\bnu}, \epsilon_{\bnu})$ compressible and each row $\w_i$ is $(2,k_r,\epsilon_r)$-compressible implies:
\begin{align}
\label{eq:opnorm_bound_across_rows}
	  \norm{\W}{\infty} \leq \frac{(1-\epsilon_{\bnu})}{(1-\beta_{\bnu})}\left(\frac{\sqrt{hk_r} + h\epsilon_r}{k_{\bnu}}\right)\norm{\W}{F}.
\end{align}

\item [\textbf{(b) Spectral compressibility (i.e. low-rankness):}] Let $\bsigma:=(\sigma_1, \sigma_2, \dots)$ denote the singular values of matrix $\W$. Assuming  $\bsigma$ is $(1, k_{\bsigma}, \epsilon_{\bsigma})$-compressible implies:
\begin{align}
\label{eq:opnorm_bound_spectral}
	 \norm{\W}{2}  \leq \frac{(1-\epsilon_{\bsigma})}{(1-\beta_{\bsigma})}\left(\frac{\sqrt{h}}{k_{\bsigma}}\right)\norm{\W}{F}.%\prod^{\lambda-1}_{l=1}\left(\|\left(\mathbf{U}_k^{l}\right)^T\mathbf{V}_k^{\{l+1\}}\|_{2} + \epsilon_{\bsigma}(1+\epsilon_{\bsigma})\right).
\end{align} 
\end{enumerate}
\end{theorem}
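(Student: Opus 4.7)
The plan is to prove both parts with a common two-stage strategy: first use the $(1,k,\epsilon)$-compressibility together with the spread variable of the sorted magnitude vector ($\bnu$ in part (a), $\bsigma$ in part (b)) to turn an $\ell_1$ mass statement into an $\ell_\infty$ or spectral bound on the maximum entry, and then convert that $\ell_1$ norm back to $\norm{\W}{F}$ via standard norm inequalities (invoking the row-wise $(2,k_r,\epsilon_r)$-compressibility for part (a), and the unitary invariance $\norm{\W}{F}=\norm{\bsigma}{2}$ for part (b)). The key is that every step preserves the \textbf{structure vs.\ scale} separation that the theorem demands: the compressibility parameters ($k$'s, $\epsilon$'s, $\beta$'s) end up in a prefactor, while $\norm{\W}{F}$ sits outside.

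For part (a), after sorting $\bnu$ by magnitude I note that $\norm{\W}{\infty}=\max_i\norm{\w_i}{1}=\nu_1$. Strict $(1,k_{\bnu},\epsilon_{\bnu})$-compressibility gives $\norm{\bnu_{k_{\bnu}}}{1}=(1-\epsilon_{\bnu})\norm{\bnu}{1}$, and the spread identity $\nu_{k_{\bnu}}=(1-\beta_{\bnu})\nu_1$ combined with sortedness gives $\norm{\bnu_{k_{\bnu}}}{1}\geq k_{\bnu}(1-\beta_{\bnu})\nu_1$; chaining these yields $\nu_1\leq \frac{1-\epsilon_{\bnu}}{k_{\bnu}(1-\beta_{\bnu})}\norm{\bnu}{1}$. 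For the second stage I decompose each row as $\norm{\w_i}{1}=\norm{(\w_i)_{k_r}}{1}+\norm{\w_i-(\w_i)_{k_r}}{1}$, apply Cauchy--Schwarz to the top-$k_r$ part to get $\sqrt{k_r}\,\norm{\w_i}{2}$ and to the tail to get at most $\sqrt{h-k_r}\,\epsilon_r\,\norm{\w_i}{2}\leq \sqrt{h}\,\epsilon_r\,\norm{\w_i}{2}$ (using $(2,k_r,\epsilon_r)$-compressibility), then sum over $i$ and apply Cauchy--Schwarz once more with $\sum_i\norm{\w_i}{2}\leq \sqrt{h}\,\norm{\W}{F}$ to obtain $\norm{\bnu}{1}\leq (\sqrt{hk_r}+h\epsilon_r)\,\norm{\W}{F}$; multiplying the two inequalities gives the stated bound. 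Part (b) is exactly the first stage applied to $\bsigma$: the identical rearrangement produces $\sigma_1\leq \frac{1-\epsilon_{\bsigma}}{k_{\bsigma}(1-\beta_{\bsigma})}\norm{\bsigma}{1}$, after which $\norm{\bsigma}{1}\leq \sqrt{h}\,\norm{\bsigma}{2}=\sqrt{h}\,\norm{\W}{F}$ closes the chain since $\W\in\mathbb{R}^{h\times h}$ has at most $h$ singular values.

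The main conceptual hurdle is making the spread variable earn its keep. Compressibility alone is scale-independent and places no constraint on the ratio between the largest entry and the rest---a single spike is trivially $(q,k,\epsilon)$-compressible for any $k\geq 1$---so converting an $\ell_1$ statement into an $\ell_\infty$ or spectral statement is impossible without an auxiliary flatness hypothesis, and $\beta$ supplies exactly this flatness by lower-bounding $\nu_{k_{\bnu}}$ (resp.\ $\sigma_{k_{\bsigma}}$) in terms of $\nu_1$ (resp.\ $\sigma_1$). The remaining technical care in part (a) lies in composing the two compressibility assumptions---one across rows, one within each row---so that the final factor still cleanly separates structure from scale; this is where the double application of Cauchy--Schwarz (once inside each row, once across rows) and the crude estimate $\sqrt{h-k_r}\leq \sqrt{h}$ do the heavy lifting.
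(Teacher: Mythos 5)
Your proof is correct and follows essentially the same two-stage route as the paper: the spread variable plus strict $(1,k,\epsilon)$-compressibility to bound $\nu_1$ (resp.\ $\sigma_1$) by $\tfrac{1-\epsilon}{k(1-\beta)}\norm{\bnu}{1}$, then norm inequalities to reach $\norm{\W}{F}$. The only (cosmetic) difference is in part (a)'s second stage, where you apply Cauchy--Schwarz and the $(2,k_r,\epsilon_r)$ hypothesis row by row before summing, whereas the paper first passes to the vector of row $\ell_2$ norms; both orderings yield the identical constant $\sqrt{hk_r}+h\epsilon_r$, and your version hews slightly more closely to the hypothesis as literally stated.
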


Intuitively, \cref{thm:op_norm_rels} describes how increasing compressibility affects layer operator norms: Neuron compressibility, \ie \ a small number of rows dominating the matrix increases $\ell_\infty$ operator norm of the matrix, especially if the spread within these dominant rows are high. Similarly, increased spectral compressibility and spread increases  the $\ell_2$ operator norm. Note that the latter result is closely related to results from the literature that connect stable rank or condition number to robustness \citep{savostianova2023robust, fengLipschitzConstant2025}. We highlight that although \cref{thm:op_norm_rels} directly relates neuron and spectral compressibility  to perturbations defined in $\ell_\infty$ and $\ell_2$ norms, we do not claim that relationships across attack and operator norms do not hold. Indeed in our Appendix, we show that the two operator norms are likely to move together under compressibility, connecting structured compressibility to a broader notion of adversarial vulnerability. {Lastly, while we utilize the upper bounds for our following theoretical results, additional theoretical results in the Appendix characterize lower bounds on the operator norm with similar implications.}

%we will use  Whenever we refer to any of the quantities mentioned in within a neural network, we will denote the particular layer with a superscript, as in $\W^{l}, \bnu^{l}, \bsigma^{l}$. 
As we move on to characterizing layers within a neural network, $\W^{l}_k$ will be used to denote the \textit{compressed} version of the parameter matrix of layer $l$. In the case of row compression, this will correspond to keeping the $k$ dominant rows as is, and setting the $h-k$ trailing rows to $\mathbf{0}$. In the case of spectral compression, given the singular value decomposition (SVD), $\mathbf{W}^{l} = \mathbf{U}^{l}\mathbf{\Sigma}^{l}\mathbf{V}^{l^T}$, the compressed matrix would correspond to $\mathbf{W}^{l}_k:=\mathbf{U}_k^{l}\mathbf{\Sigma}_k^{l}\mathbf{V}_k^{l^T}$, where the $h-k$ smallest singular values are truncated. 

Note that the sensitivity of the network not only relies on the characteristics of layer parameters, but also on the interactions between them. As an informative extreme case, assume that layer $\W^l$ greatly amplifies the input in the direction $\mathbf{u}_1$, due to spectral compressibility producing a large $\sigma_1$. Ignoring nonlinearities for now, if $\mathbf{u}_1$ is in the null space of $\W^{l+1}$, this amplification will have no effect on the sensitivity of the overall network. Thus, potent attack directions in the network are determined not only through layers' inherent properties, but how well the dominant directions in consecutive layers ``align'', in consideration with the nonlinearities between them. We will characterize this crucial interaction with the \textit{interlayer alignment terms} $A_\infty^*$ and $A_2^*$. With $\mathcal{D}$ as the set of all diagonal binary matrices, standing for all possible ReLU activation patterns, these are defined as:
\begin{align}
\label{eq:interlayer_alignment_inf}
    A_\infty^*(\W_k^{l+1}, \W_k^{l}) &\triangleq  \max_{\D\in\mathcal{D}} \frac{\|\mathbf{W}_k^{l+1} \mathbf{D} \mathbf{W}_k^{l}\|_{\infty}}{\|\mathbf{W}^{l+1}\|_\infty \|\mathbf{W}^{l}\|_\infty} + R_\infty(\epsilon)\\
    A_2^*(\W_k^{l+1}, \W_k^{l}) &\triangleq \max_{\mathbf{D}\in\mathcal{D}}\frac{\|\sqrt{\Sigma_k^{l+1}}\mathbf{V}_k^{l+1^T}\mathbf{D}\mathbf{U}_k^{l}\sqrt{\Sigma_k^{l}}\|_{2}}{\sqrt{\|\mathbf{W}^{l+1}\|_2 \|\mathbf{W}^{l}\|_2}}
    + R_2(\epsilon),
    \label{eq:interlayer_alignment_2}
\end{align}
where $R_\infty(\epsilon) := \nu^{l}_{k+1}/\nu^{l}_1 + \nu^{l+1}_{k+1}/\nu^{l+1}_1 + \nu^{l}_{k+1}\nu^{l+1}_{k+1}/\nu^{l}_1\nu^{l+1}_1$ is a remainder alignment term and likewise, $R_2(\epsilon) := \sqrt{\sigma^{l}_{k}/{\sigma^{l}_1}} + \sqrt{\sigma^{l+1}_{k+1}/\sigma^{l+1}_1} + \sqrt{\sigma^{l}_{k+1}\sigma^{l+1}_{k+1}/\sigma^{l}_1\sigma^{l+1}_1}$. In the Appendix, we show that for $p\in\{2, \infty\}$, $R_p(\epsilon)\to 0$ as $\epsilon \to 0$. There, we also show that for all layers $A^{*}_p \leq 1$; alignment terms can therefore be interpreted to act as a normalized function that corrects the worst-case bound based on the dominant terms' misalignment. Next theorem will use \cref{thm:op_norm_rels} and \cref{eq:interlayer_alignment_inf,eq:interlayer_alignment_2} to provide an upper bound to the Lipschitz constant of the network. 
%These include a compressibility related term that involves both the dominance and spread of the leading terms, Frobenius norm, and an interlayer alignment term. While the norm-based terms characterize the vulnerability of individual layers in particular directions, interlayer alignment term adjusts for how much these directions are aligned in consecutive layers. In both cases, the alignment term takes the following form.

\begin{theorem}
\label{thm:lipschitz_fcn_op_full}
Let $L^{p}_{\Phi}$ be the Lipschitz constant of the encoder $\Phi$ defined following \cref{eq:classifier}. Let $\mathcal{D}$ denote the set of all diagonal binary matrices, corresponding to ReLU activation layers. Then:
\begin{enumerate}[noitemsep,leftmargin=*,topsep=0em,align=left]
    \item [\textbf{(a) Neuron compressibility:}]
    %Let $\bnu^{\bracl} := (\norm{\w^{\bracl}_1}{1}, \dots, \norm{\w^{\bracl}_h}{1})$ denote $\ell_1$ norms of rows of parameter matrix $\W^{\bracl}$ for layer $l \in [L]$.
    The $\ell_\infty$ Lipschitz constant of $\Phi$ can be upper bounded by:
\begin{align}
\label{eq:lipschitz_bound_across_rows}
	L^{\infty}_{\Phi} \leq \hat{L}^{\infty}_{\Phi}  := \prod^{\lambda}_{l=1} \frac{(1-\epsilon_{\bnu})}{(1-\beta_{\bnu})}\left(\frac{\sqrt{hk_r} + h\epsilon_r}{k_{\bnu}}\right)\norm{\W}{F}  \prod_{l=1}^{\lambda-1} \tilde{A}_\infty^*(\W_k^{\{l+1\}}, \W_k^{\bracl}),
\end{align}
where $\tilde{A}_\infty^*(\W_k^{\{l+1\}}, \W_k^{\bracl}) = A_\infty^*(\W_k^{\{l+1\}}, \W_k^{\bracl})$ if $l \in S_{opt}$, and $1$ otherwise. $S_{opt} \subseteq \{1, 2, \dots, L-1\}$ is the optimal alignment partition set (See~\cref{def:s_opt_final_final}) that can be determined in $O(\lambda)$ time. 

\item [\textbf{(b) Spectral compressibility:}] 
%Let $\bsigma^{\bracl}:=(\sigma^{\bracl}_1, \sigma^{\bracl}_2, \dots)$ denote the singular values of matrix $\W^{\bracl}$.  Then:
The $\ell_2$ Lipschitz constant of $\Phi$ can be upper bounded by:
\begin{align}
\label{eq:lipschitz_bound_spectral}
	 L^{2}_{\Phi}\leq \hat{L}^{2}_{\Phi}  :=\prod^{\lambda}_{l=1} \frac{(1-\epsilon_{\bsigma})}{(1-\beta_{\bsigma})}\left(\frac{\sqrt{h}}{k_{\bsigma}}\right)\norm{\W}{F}\prod^{\lambda-1}_{l=1}A_2^*(\W_k^{\{l+1\}}, \W_k^{\bracl}).
\end{align}
\end{enumerate}
\end{theorem}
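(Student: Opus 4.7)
My strategy is to reduce both bounds to bounding the worst-case operator norm of the piecewise-linear Jacobian of the ReLU network, and then to invoke \cref{thm:op_norm_rels} together with the inter-layer alignment factors at each junction. Since $\phi$ is ReLU, on each activation region $\Phi$ is affine with Jacobian of the form $\mathbf{W}^{\lambda}\mathbf{D}^{\lambda-1}\mathbf{W}^{\lambda-1}\mathbf{D}^{\lambda-2}\cdots\mathbf{D}^{1}\mathbf{W}^{1}$ for some binary diagonal $\mathbf{D}^{l}\in\mathcal{D}$ (the ReLU activation pattern at that point). Integrating along the line segment between any two inputs yields
\[
L^{p}_{\Phi} \;\leq\; \sup_{\mathbf{D}^{1},\dots,\mathbf{D}^{\lambda-1}\in\mathcal{D}}\bigl\lVert \mathbf{W}^{\lambda}\mathbf{D}^{\lambda-1}\cdots \mathbf{D}^{1}\mathbf{W}^{1}\bigr\rVert_{p},
\]
so the problem reduces to bounding a product of linear maps interlaced with binary diagonals.

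For part~(b), I write the SVD $\mathbf{W}^{l}=\mathbf{U}^{l}\mathbf{\Sigma}^{l}\mathbf{V}^{l\top}$ and split $\mathbf{\Sigma}^{l}=\sqrt{\mathbf{\Sigma}^{l}}\sqrt{\mathbf{\Sigma}^{l}}$, regrouping the telescoping product so that each junction $l \to l{+}1$ contributes the factor $\sqrt{\mathbf{\Sigma}^{l+1}}\mathbf{V}^{l+1\top}\mathbf{D}^{l}\mathbf{U}^{l}\sqrt{\mathbf{\Sigma}^{l}}$, while the outermost pieces $\mathbf{U}^{\lambda}\sqrt{\mathbf{\Sigma}^{\lambda}}$ and $\sqrt{\mathbf{\Sigma}^{1}}\mathbf{V}^{1\top}$ contribute only $\sqrt{\|\mathbf{W}^{\lambda}\|_2}$ and $\sqrt{\|\mathbf{W}^{1}\|_2}$ by orthogonality of $\mathbf{U}^{\lambda}$ and $\mathbf{V}^{1\top}$. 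Sub-multiplicativity of $\|\cdot\|_2$ then separates the product; a bookkeeping check that each $\sqrt{\|\mathbf{W}^l\|_2}$ appears exactly twice (once per adjacent junction, or once as an outer factor plus once at the neighboring junction) yields $\prod_{l=1}^{\lambda}\|\mathbf{W}^{l}\|_{2}\,\prod_{l=1}^{\lambda-1}A_{2}^{*}(\mathbf{W}^{l+1}_{k},\mathbf{W}^{l}_{k})$, with the contribution of the truncated tail of each $\mathbf{\Sigma}^{l}$ absorbed into $R_{2}(\epsilon)$. Applying \cref{thm:op_norm_rels}(b) layer-wise to each $\|\mathbf{W}^{l}\|_{2}$ closes the estimate.

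For part~(a), the analogous sub-multiplicativity gives $\|A\mathbf{D}B\|_\infty \leq A_\infty^{*}(A,B)\|A\|_\infty\|B\|_\infty$, but unlike the $\ell_{2}$ case there is no square-root splitting: applying the tighter alignment bound at two adjacent junctions $(l,l{+}1)$ and $(l{+}1,l{+}2)$ would double-count the shared $\|\mathbf{W}^{l+1}\|_\infty$. Hence the improvement can only be applied along a pairwise non-adjacent subset of junctions, and maximizing the resulting reduction is equivalent to a maximum-weight independent set on a path with nonnegative weights $-\log A_\infty^{*}(\mathbf{W}^{l+1}_{k},\mathbf{W}^{l}_{k})$. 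This is solved by a standard linear-time dynamic program and defines $S_{\mathrm{opt}}$. Using the alignment factor at each $l\in S_{\mathrm{opt}}$ and plain sub-multiplicativity elsewhere, then invoking \cref{thm:op_norm_rels}(a) layer-wise, delivers the stated bound.

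The main obstacle is that the alignment factors are defined via the compressed matrices $\mathbf{W}^{l}_{k}$, whereas $L_{\Phi}^{p}$ is driven by the full $\mathbf{W}^{l}$. Bridging the two requires a perturbative SVD (part~b) or row-decomposition (part~a) argument that controls the cross terms between dominant and tail components; these cross terms are precisely what the remainders $R_{p}(\epsilon)$ encode, and verifying both the correct telescoping structure and the promised decay $R_{p}(\epsilon)\to 0$ as $\epsilon\to 0$ is the technical heart of the argument. A secondary subtlety is rigorously justifying the supremum-over-activations reduction across the non-smooth ReLU, which is standard via Clarke generalized gradients or a mollification argument but still worth stating carefully.
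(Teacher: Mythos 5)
Your proposal follows essentially the same route as the paper's proof: the reduction of $L^{p}_{\Phi}$ to a supremum of $\|\W^{\lambda}\D^{\lambda-1}\cdots\D^{1}\W^{1}\|_{p}$ over binary activation patterns, the square-root SVD splitting with junction factors $\sqrt{\Sg^{l+1}}\V^{l+1\top}\D^{l}\U^{l}\sqrt{\Sg^{l}}$ and doubled outer factors for part (b), the non-adjacent parsing of junctions cast as a maximum-weight independent set on a path defining $S_{opt}$ for part (a), and the dominant-plus-tail decomposition whose cross terms produce $R_{p}(\epsilon)$, all coincide with the paper's argument, finished by applying \cref{thm:op_norm_rels} layer-wise. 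The remaining work you flag --- the four-term expansion controlling the cross terms --- is exactly what the paper carries out, so the sketch is correct and not a different approach.
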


We note that this upper bound can be directly used in conjunction with other results from the literature \citep{ribeiroRegularizationProperties2023} to characterize adversarial robustness gap:
\begin{corollary}
\label{thm:fcn_bound_pi}
	Under a binary classification task with cross-entropy loss, $\ell(y, \x^\top \param) = \ell(y, \hat{y}) = \log\left(1 + e^ {-y\hat{y}}\right)$, given a neural network classifier as described in \eqref{eq:classifier}, under the same assumptions with \eqref{eq:lipschitz_bound_across_rows}, $
		F^{\mathrm{adv}}_{\infty}(\param; \delta) \le F(\param)  + \delta \hat{L}^{\infty}_{\Phi}  \|\param\|_1.$ 
    Similarly, under the assumptions of \eqref{eq:lipschitz_bound_spectral}, we have
    $F^{\mathrm{adv}}_{2}(\param; \delta) \le F(\param)  + \delta \hat{L}^{2}_{\Phi}  \|\param\|_2.$
\end{corollary}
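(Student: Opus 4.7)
The plan is to reduce the bound to a linear-model adversarial bound in feature space and then lift it through the encoder using the Lipschitz estimate of \cref{thm:lipschitz_fcn_op_full}. Writing $g(\x,\param)=\C\,\Phi(\x)$, the composite loss becomes $f(\x,\param)=\ell(y,\,\C\Phi(\x))$. The key observation is that the binary logistic loss $s\mapsto \log(1+e^{-ys})$ is $1$-Lipschitz in its argument uniformly in $y\in\{\pm1\}$, since its derivative has magnitude $|{-y}/(1+e^{ys})|\le 1$.

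First, for any admissible perturbation $\va$ with $\norm{\va}{p}\le\delta$, I would chain three inequalities: (i) $1$-Lipschitzness of $\ell$ in the logit, (ii) H\"older's inequality $|\C\vvu|\le\norm{\C}{p^*}\norm{\vvu}{p}$ applied to $\vvu=\Phi(\x+\va)-\Phi(\x)$, and (iii) the defining Lipschitz property of the encoder $\norm{\Phi(\x+\va)-\Phi(\x)}{p}\le L^{p}_{\Phi}\norm{\va}{p}$. This yields the uniform estimate $f(\x+\va,\param)-f(\x,\param)\le \norm{\C}{p^*}\,L^{p}_{\Phi}\,\delta$; since the right-hand side is free of $\va$, taking the supremum over admissible $\va$ and then expectation over $(\x,y)\sim\pi$ gives $\advriskp{p}(\param;\delta)\le \poprisk(\param) + \delta\,L^{p}_{\Phi}\,\norm{\C}{p^*}$.

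Then I would substitute the compressibility-based Lipschitz upper bounds of \cref{thm:lipschitz_fcn_op_full}, namely $L^{\infty}_{\Phi}\le \hat{L}^{\infty}_{\Phi}$ and $L^{2}_{\Phi}\le\hat{L}^{2}_{\Phi}$, to recover the two stated inequalities. The dual exponents align correctly with the statement: $p=\infty$ pairs with the $\ell_1$ norm on the classifier side, and $p=2$ is self-dual. I interpret the $\norm{\param}{p^*}$ appearing in the corollary as referring to the classifier-head weights $\C$, which is consistent both with the hypothesis's ``$\x^\top\param$'' notation (where $\param$ plays the role of a logit-producing weight vector) and with the quantity produced naturally by H\"older's inequality; if $\param$ is instead read as the full parameter collection, one simply uses $\norm{\C}{p^*}\le\norm{\param}{p^*}$.

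The main obstacle here is bookkeeping rather than a new technical step: one must carefully track that the uniform-in-$\va$ upper bound survives the supremum defining $\fadvp{p}$, and that the H\"older pairing places the correct dual exponent on the classifier-head norm so that the $L^{\infty}$--$\ell_1$ and $L^{2}$--$\ell_2$ combinations land in the right slots. Beyond $1$-Lipschitzness of the logistic loss, H\"older's inequality, and the plug-in of \cref{thm:lipschitz_fcn_op_full}, no further ingredients are required, and the two cases of the corollary collapse into a single dual-norm argument differentiated only by the choice of $p^{*}$.
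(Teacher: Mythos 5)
Your proof is correct and follows essentially the same route as the paper: the paper first proves a linear-classifier dual-norm bound (its Proposition~\ref{thm:adv_bound_dual_cls}, whose log-ratio estimate is exactly your $1$-Lipschitzness of the logistic loss combined with H\"older duality), and then lifts it through the encoder by defining an effective feature-space budget $\delta^{\Phi}_p \le \delta \hat{L}^{p}_{\Phi}$, which is your step (iii). Your reading of $\|\param\|_{p^*}$ as the norm of the classifier head is also the intended one, so the only difference from the paper is the order in which the Lipschitz and duality steps are packaged.
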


\begin{wrapfigure}[10]{r}{0.28\textwidth}
\centering
\vspace{-.6cm}
\includegraphics[width=\linewidth]{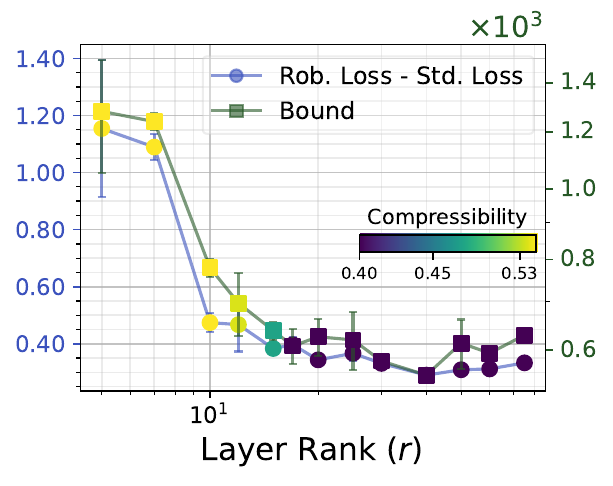}
\vspace{-.7cm}
\caption{\cref{thm:fcn_bound_pi} vs. empirical robustness gap.}
\label{fig:bound_vs_empirical}
\end{wrapfigure}
Note that although bounds provided in \cref{thm:lipschitz_fcn_op_full} are tighter than the pessimistic ``product-of-norms'' bounds, it deliberately \textit{trades off} some tightness by utilizing \cref{thm:op_norm_rels}. However, in return, this results in a bound that decomposes into analytically interpretable and actionable terms. Such bounds have proven valuable in analyzing adversarial robustness in deep learning~\citep{wenUnderstandingRegularization2020}. 
%For instance, while \citet{scamanLipschitzRegularity2018} derive a tighter Lipschitz constant using a comparable analytical framework focused on the $L^2$ norm, their bound lacks interpretability—it offers little insight into the distinct contributions of various architectural and parametric factors to network sensitivity. 
{Regardless, \cref{fig:bound_vs_empirical} demonstrates the close correlation our bound shows with the empirical robustness gap ($\rho=0.947$), in a 2-hidden-layer neural network with varying spectral compressibility (obtained through systematically varying the rank of factorized layer matrices). We provide full details in the Appendix, where we also show that as the global Lipschitz constant increases, empirically estimated local Lipschitz constants scale accordingly.} There, we also explore the  alignment terms' empirical behavior and estimation techniques, although a detailed analysis thereof lies beyond our primary focus. We now translate these theoretical insights into concrete hypotheses and test them through  experiments.

\section{Experimental evaluation}\vspace{-1mm}
\label{sec:experiments} 
We now validate our theoretical findings through systematic experimentation. We first validate our \emph{motivating hypothesis} and then empirically show that (i) neuron and spectral compressibility-inducing interventions will reduce adversarial robustness against $\ell_\infty$ and $\ell_2$ adversarial attacks; (ii) the negative effects of compressibility to persist under adversarial training, (iii) the compressibility-related vulnerabilities being baked into the learned representations during pretraining, will impact any downstream task in transfer learning; (iv) increasing compressibility creates vulnerable directions in the latent space, further enabling universal adversarial examples (UAEs), while increasing Frobenius norm will create vulnerability without leading to UAEs; and (v) compressed models will inherit the vulnerability of the original models, and conducting compression based on \qke-compressibility and reducing the spread of the dominant terms will improve robustness.

% We now propose and test specific empirical hypotheses based on \cref{thm:lipschitz_fcn_op_full}: (1) We predict that neuron and spectral compressibility-inducing interventions will reduce adversarial robustness against $\ell_\infty$ and $\ell_2$ adversarial attacks. (2) We predict the negative effects of compressibility to persist under adversarial training, as it is not designed to address this structural problem. (3) Since the compressibility-related vulnerability is baked into the learned representations during pretraining, any downstream task will inherit this vulnerability in transfer learning. (4) Since increasing compressibility creates vulnerable directions in the latent space, it will also enable UAEs; while increasing Frobenius norm will create vulnerability without leading to UAEs. (5) Pruned/compressed models will inherit the vulnerability of the original models, and. Reducing the spread of the dominant terms and conducting compression based on \qke-compressibility will improve robustness.

% The predictor can be written as $g(\x) := \param\phi(\W\x)$, where the prediction is $\hat{y} = \argmax_i g(\x)_i$. 

\paragraph{Datasets, architectures, and training} We conduct our experiments in the most commonly used datasets and architectures in the literature on adversarial robustness and compression \citep{pirasAdversarialPruning2024}. Datasets we use include MNIST \citep{dengMNISTDatabase2012}, CIFAR-10, CIFAR-100 \citep{krizhevskyLearningMultiple2009}, SVHN \citep{netzerReadingDigits2011}, Flickr30k \citep{youngImageDescriptions2014}, and ImageNet-1k \citep{dengImageNetLargescale2009a}. Architectures we utilize include fully connected networks (FCN), ResNet18 \citep{he2016deep}, VGG16 \citep{simonyan2014very}, WideResNet-101-2 \citep{zagoruyko2016wide}, vision transformer (ViT) - both as a standalone classifier \citep{dosovitskiyImageWorth2021} and as part of a CLIP encoder \citep{radfordLearningTransferable2021}, and Swin Transformer \citep{liuSwinTransformer2021c}. Unless otherwise noted, we use softmax cross-entropy loss, the AdamW optimizer with a weight decay of $0.01$, a learning rate of $0.001$, and use a validation set based model selection for early stopping. See the associated code base for additional implementation details\footnote{\url{https://github.com/mbarsbey/advcomp}}.

\paragraph{Evaluating and training for adversarial robustness} When evaluating adversarial robustness, we utilize AutoPGD as the primary adversarial attack algorithm for evaluation \citep{croceReliableEvaluation2020}, through its implementation by \cite{nicolae2018adversarial}. When training for adversarial robustness, we utilize a PGD attack to generate adversarial samples at every iteration \citep{madryDeepLearning2018}. Unless otherwise noted, we use a ratio of 0.5 for adversarial samples in a training minibatch. We use $\epsilon=8/255$ and $\epsilon=0.5$ for $\ell_\infty$ and $\ell_2$ attacks respectively for end-to-end adversarially trained models. We use $0.25\times$ of these budgets for standard trained or adversarially fine-tuned models to allow a visible comparison ({See Appendix for qualitatively identical results under different budgets}). By default, we present results for $\ell_\infty$ and $\ell_2$ attacks when evaluating robustness under neuron and spectral compressibility respectively, and defer the cross-norm results to the supplementary material, which also includes further details on our experiment settings and implementation.
\vspace{-3mm}
\subsection{Results}
\noindent\textbf{Testing the motivating hypothesis.}
We start our empirical analysis with a demonstrative experiment to visually investigate the implications of our initial hypothesis. 
For this, we train a single 400-width hidden layer FCN with ReLU activations on the MNIST dataset. 
%We focus on spectral compressibility, and provide qualitatively identical results in the Appendix. 
We use nuclear norm regularization (NNR) to encourage spectral compressibility, adding the term $\alpha\norm{\bsigma}{1}$ to the training objective, with $\alpha$ as a hyperparameter. To avoid confounding by NNR decreasing overall parameter norms, we apply Frobenius norm normalization to $\W^1$ at every iteration  \citep{miyatoSpectralNormalization2018}. 

\begin{figure}[t]
	\centering%		
\begin{subfigure}[b]{0.20\linewidth}
	\centering
	 \includegraphics[width=\linewidth]{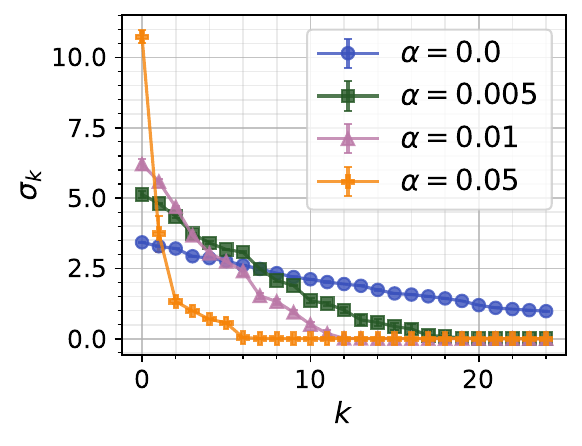}\vspace{-1.5mm}
	% \caption{adf}
     \label{fig:toy_model_sv}
\end{subfigure}
\hfill
\begin{subfigure}[b]{0.25\linewidth}
	\includegraphics[width=\linewidth]{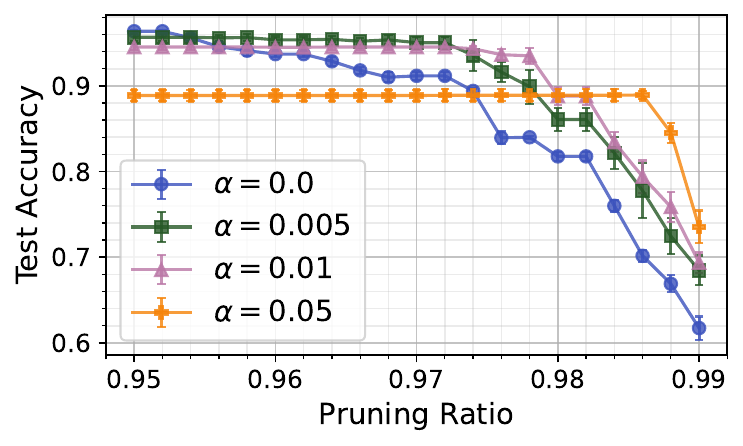}\vspace{-1.5mm}
    \label{fig:toy_model_comp}
\end{subfigure}
\hfill
\begin{subfigure}[b]{0.26\linewidth}
	\includegraphics[width=\linewidth]{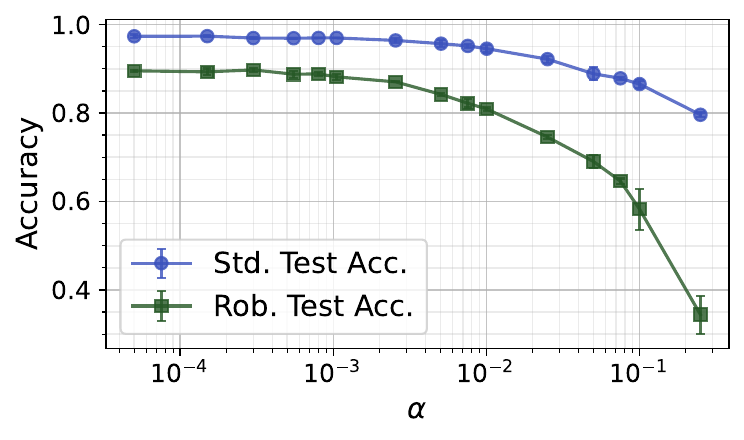}\vspace{-1.5mm}
    \label{fig:toy_model_acc_sv}
\end{subfigure}
\hfill
\begin{subfigure}[b]{0.26\linewidth}
	\centering
	\includegraphics[width=\linewidth]{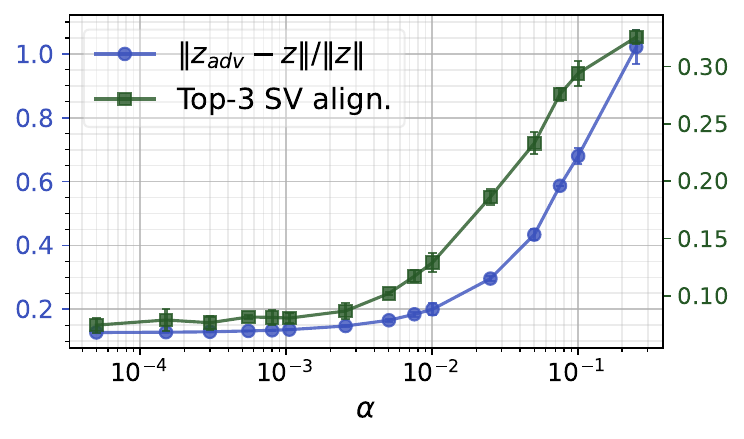}\vspace{-1.5mm}
    \label{fig:toy_model_repr_align_sv}
\end{subfigure}
\vspace{-1.5mm}
\caption{Model statistics under increasing strength of nuclear norm regularization ($\alpha$).\vspace{-4mm}}
\label{fig:motivation}
\end{figure}\vspace{-1mm}
In \cref{fig:motivation} (left) we validate that our intervention indeed increases spectral norm compressibility. As expected, \cref{fig:motivation} (center left) shows that spectral compressibility actually allows pruning: the more compressible models retain their performance under stronger neuron pruning. 
\cref{fig:motivation} (center right) shows that increased compressibility comes at the cost of adversarial robustness: as $\alpha$ increases, adversarial accuracy dramatically falls. We further investigate whether this fall is due to our hypothesized mechanism. Let $\z=\Phi(\x)$ and $\z_{\mathrm{adv}}=\Phi(\x + \va^*)$ denote the learned representations of clean and perturbed input images. If the adversarial attacks are taking advantage of the potent directions created by compressibility, then as compressibility increases: (1) The perturbations $\va^*$ should align more with the dominant singular directions, \ie, $\mathbf{v}^\intercal_{i} \va^* \gg  \mathbf{v}^\intercal_{j} \va^* \ \forall i \in [k], j \notin [k]$, (2) representations of adversarial perturbations should grow stronger in relation to the original image's representation, \ie \ $\norm{\z_{\mathrm{adv}} - \z}{2}/\norm{\z}{2}$ should increase. Results presented in \cref{fig:motivation} (right) confirms both predictions, further supporting our motivating hypothesis. Lastly, the previously presented \cref{fig:nucnorm_mnist_boundaries} visualizes the effect of compressibility in the input and representation space.

% \label{sec:exp_standard_training}
\begin{wrapfigure}[16]{r}{0.56\textwidth}
\centering
\vspace{-5mm}
\begin{subfigure}[b]{0.27\textwidth}
	\centering
	\includegraphics[width=\linewidth]{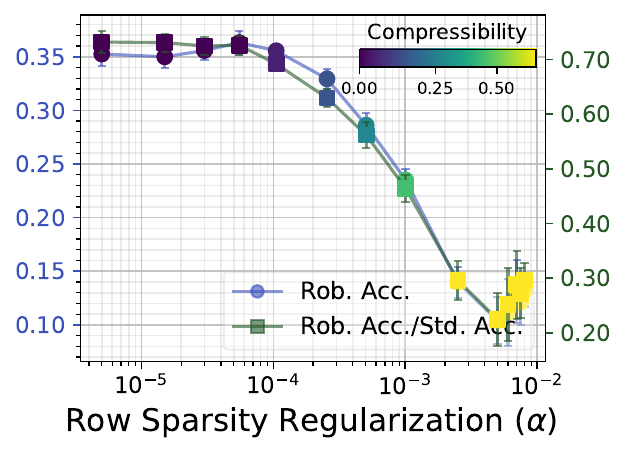}
	% \caption{Row sparsity reg. with FCN and CIFAR-10}
	% \label{fig:l2_tm}
\end{subfigure}
\hfill
\begin{subfigure}[b]{0.27\textwidth}
	\centering
	\includegraphics[width=\linewidth]{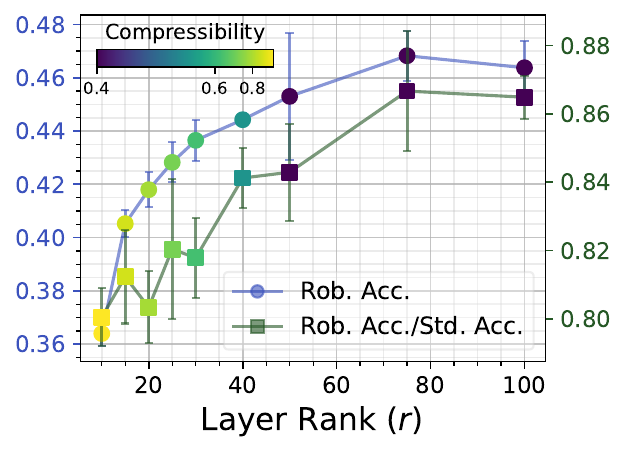}
	% \caption{Low-rank Fact. with FCN and CIFAR-10}
	% \label{fig:l2_tm}
\end{subfigure}
\begin{subfigure}[b]{0.27\textwidth}
	\centering
	\includegraphics[width=\linewidth]{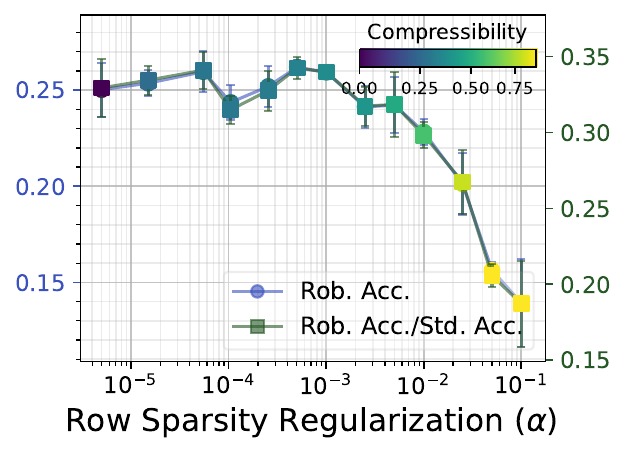}
	% \caption{Row sparsity reg. with ResNet50 and CIFAR-10}
	% \label{fig:l2_tm}
\end{subfigure}
\hfill
\begin{subfigure}[b]{0.27\textwidth}
	\centering
	\includegraphics[width=\linewidth]{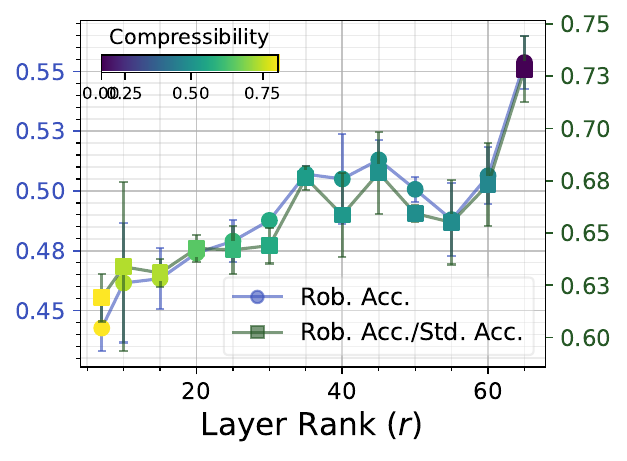}
	% \caption{Low-rank Fact. with ResNet50 and CIFAR-10}
	% \label{fig:l2_tm}
\end{subfigure}\vspace{-1.75mm}
\caption{Results with FCN (top) and ResNet18 (bottom) trained on CIFAR-10 dataset.}
  \label{fig:RobCompStd}
\end{wrapfigure}
\noindent\textbf{Adversarial robustness and compressibility under standard training.}%\vspace{-4mm}\\
%\noindent\textbf{ibility under standard training.}
\ For implications of our analysis under more realistic settings, we start by investigating
the effects of compressibility on adversarial robustness in fully connected networks (FCN). We induce neuron and spectral compressibility through group lasso regularization\footnote{Group lasso regularization penalizes the $\ell_1$ norm of row $\ell_2$ norms of each layer, promoting row-sparsity.} and low-rank factorization, respectively (latter avoids the excessive cost of nuclear norm regularization). As above, we conduct Frobenius norm normalization at every iteration. \cref{fig:RobCompStd} (top) presents the results of these experiments: The reduction in adversarial robustness as a function of increasing compressibility is clear in both cases, confirming our main hypothesis. Note that we present robust accuracy / standard accuracy ratio alongside robust accuracy to highlight that the obtained results are not due to baseline standard accuracy being lower under compressibility.

\begin{wrapfigure}[9]{r}{0.56\textwidth}
\centering
\vspace{-0.75em}
\begin{subfigure}[b]{0.27\textwidth}
	\centering
	\includegraphics[width=\linewidth]{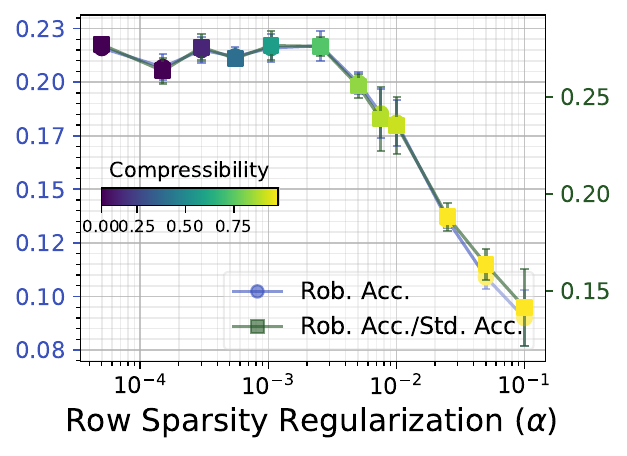}
	% \caption{Low-rank Fact. with FCN and CIFAR-10}
	% \label{fig:l2_tm}
\end{subfigure}
\hfill
\begin{subfigure}[b]{0.27\textwidth}
	\centering
	\includegraphics[width=\linewidth]{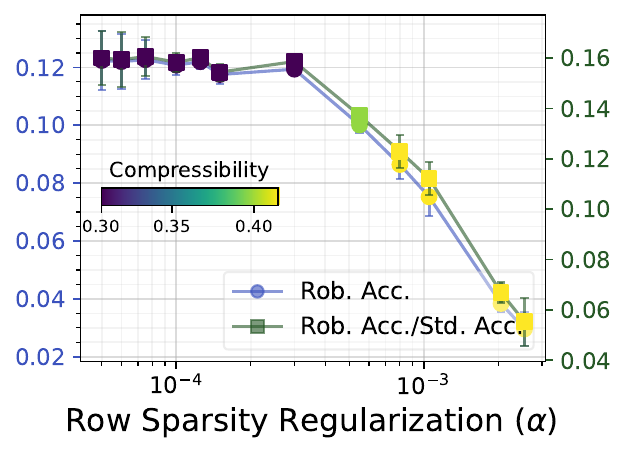} 
	% \caption{Row sparsity reg. with FCN and CIFAR-10}
	% \label{fig:l2_tm}
\end{subfigure}
\vspace{-2mm}
\caption{Results with ViT (left) and CLIP (right).}\label{fig:adv_training}
  \vspace{1em}
\end{wrapfigure}
We then investigate whether our hypotheses apply beyond the context of our theory, starting with convolutional neural networks (CNNs). We first test our predictions in ResNet18 models trained on CIFAR-10 datasets. Here we eschew Frobenius norm normalization for standard weight decay. However, to prevent confounding from group lasso's effect on general parameter scales, we create a scale-invariant version that regularizes row norms' $\ell_1/\ell_2$ norm ratio.\footnote{In the Appendix, we show that standard group lasso creates a ``tug-of-war'' between increasing compressibility and decreasing parameter scales; the former eventually wins, resulting in decreased robustness.} \cref{fig:RobCompStd} (bottom) demonstrates that the effects described above clearly translate to this setting as well, further solidifying the relationship between structured compressibility and adversarial robustness. We present similar results on two other architectures (VGG16, WideResNet-101) and two other datasets (CIFAR-100, SVHN) in the Appendix. Going forward, for brevity we will focus on neuron compressibility results, and defer corresponding spectral compressibility results to the Appendix, where we also discuss unstructured compressibility and inductive-bias based emergent compressibility. 

\noindent\textbf{Experiments with transformers.} {We next test our hypotheses under transformer architectures. \cref{fig:adv_training} (left) replicates our results under a ViT classifier model trained on CIFAR-10 dataset. Further, to test whether our hypothesis holds under a zero-shot classification setting, we fine-tune a pre-trained CLIP model on Flickr30k dataset under varying degrees of sparsification regularization, and conduct standard and adversarial zero-shot classification using ImageNet-1k dataset. We find that our results (\cref{fig:adv_training}, right) replicate here as well. That simply fine-tuning with sparsification can create this vulnerability with commonly repurposed encoder backbones highlights the safety implications of our results. See Appendix for further details and findings under other training settings.}

\noindent\textbf{Effects of compressibility on robustness under  adversarial training.}
%\label{sec:exp_adv_training}
Given that adversarial training is the primary method for obtaining models that are robust against adversaries, we next investigate whether the effects we have observed will persist under this regime. 
%We first take two models from the setting presented in \cref{fig:RobCompStd} with ResNet18s trained on CIFAR-10 under row sparsity regularization, and take the baseline model as well as a model with high sparsity regularization (regularization parameter = 0.05). 
%Afterwards, we fine-tune both models for 10 epochs under adversarial training, using various adversarial sample ratios $\in [0, 1]$. The results are presented in \cref{fig:adv_training} (left), and show a remarkable pattern: While both models demonstrate a large variability in terms of robust vs. standard accuracy trade-off based on the sample ratio in the fine-tuning, the original difference in their robustness performance remains, as the different versions of the two models form two pareto fronts. Next, we investigate whether the impact of compressibility on robustness will disappear under adversarial training from initialization. 
To make this setting as close to practice as possible, we also include a learning rate annealing schedule (Cosine annealing) and basic data augmentation (random horizontal flip and crops). The results almost identically replicate our observations under standard training (\cref{fig:transfer_uae}, left). Although adversarial training increases adversarial robustness overall, the relative effect of compressibility remains as it is. 

\begin{figure}[t]
% \vspace{-4em}
\centering
\hfill
\begin{subfigure}[b]{0.24\textwidth}
	\centering
	\includegraphics[width=\linewidth]{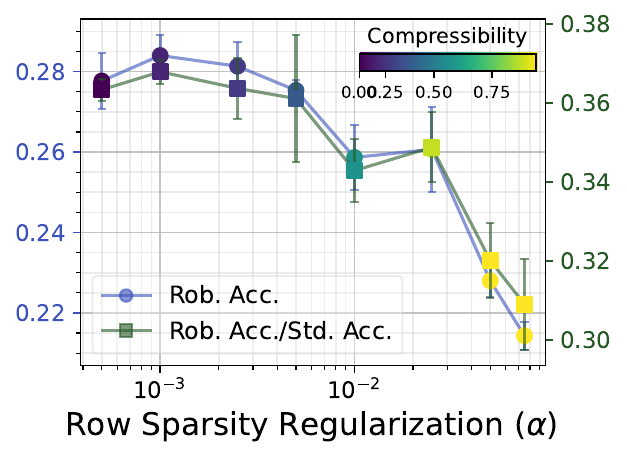}
	% \caption{Low-rank Fact. with FCN and CIFAR-10}
	% \label{fig:l2_tm}
\end{subfigure}
\hfill
\begin{subfigure}[b]{0.24\textwidth}
	\centering
	\includegraphics[width=\linewidth]{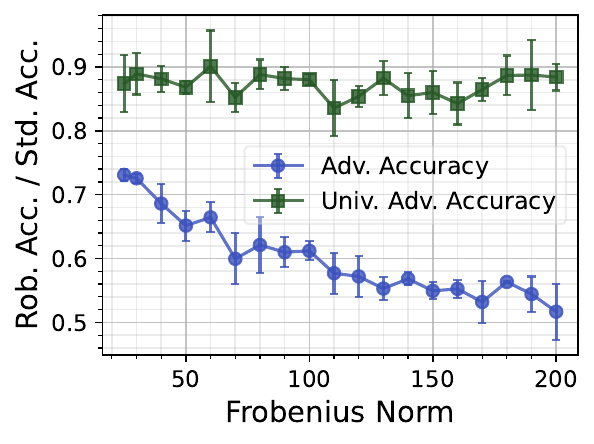}
	% \caption{Row sparsity reg. with FCN and CIFAR-10}
	% \label{fig:l2_tm}
\end{subfigure}
\hfill
\begin{subfigure}[b]{0.24\textwidth}
	\centering
	\includegraphics[width=\linewidth]{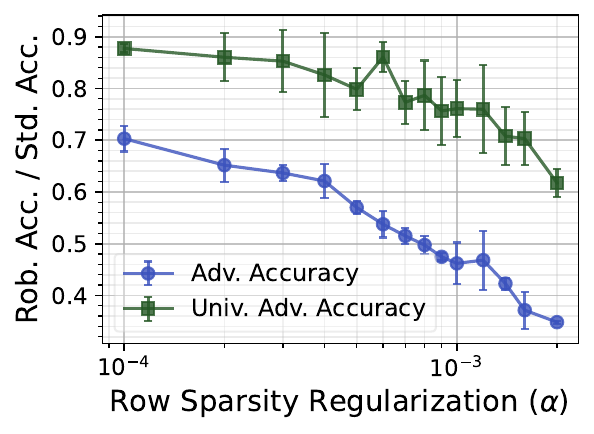}
	% \caption{Low-rank Fact. with FCN and CIFAR-10}
	% \label{fig:l2_tm}
\end{subfigure}
\hfill
  \begin{subfigure}[b]{0.24\textwidth}
	\centering
	\includegraphics[width=\linewidth]{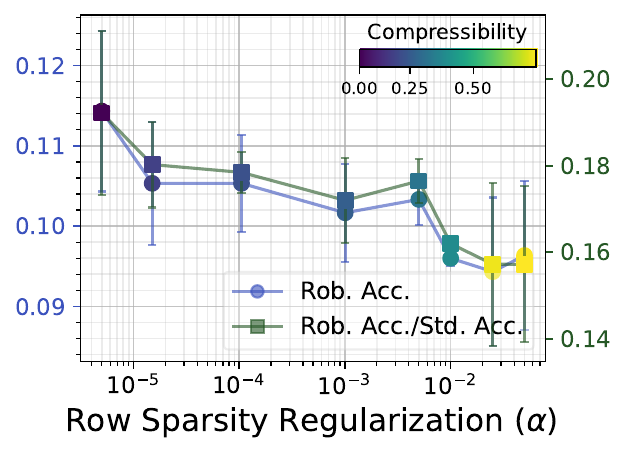}
	% \caption{Row sparsity reg. with FCN and CIFAR-10}
	% \label{fig:l2_tm}
\end{subfigure}
% \vspace{-3mm}
\caption{(Left) Effects of compressibility under adversarial training. UAEs under increasing (center left) compressibility vs. (center right) parameter scale. (Right) Robustness under transfer learning.\vspace{-8mm}}
\label{fig:transfer_uae}
% \vspace{1em}
\end{figure}
\paragraph{Universal adversarial examples}
%\label{sec:exp_uae}
Examining the terms in \cref{thm:lipschitz_fcn_op_full}, we predict that while both compressibility and Frobenius norm are likely to increase vulnerability, only the former is likely to lead to universal adversarial examples (UAEs) \citep{moosavi2017universal}, due to the global vulnerable directions it creates. To test our hypothesis, we modify the setting of FCN experiments presented above: In contrast to increasing row sparsity regularization under a fixed Frobenius norm, in an alternative set of experiments we systematically increase the constant to which Frobenius norm of the layers is fixed, without any row sparsity regularization. We utilize a FGSM-based \citep{goodfellowExplainingHarnessing2015} UAE computation to develop adversarial samples. \cref{fig:transfer_uae} (center left, center right) confirms our hypothesis: while increasing Frobenius norm only decreases standard adversarial robustness, increasing compressibility \textit{additionally} creates vulnerability to UAEs. %\newc{Importantly, in the Appendix we share additional results that demonstrate the causality in the reverse direction: Unlike standard adversarial training, utilizing UAEs to conduct adversarial training leads to a decrease in the operator norms of the neural network.} 

\paragraph{Adversarial vulnerability under transfer learning}
%\label{sec:exp_transfer}
Next, we investigate our hypothesis that the effects of compressibility should persist under transfer learning due to the structural effects created on representations. We train a ResNet18 model on CIFAR-100 dataset with increasing row sparsity regularization. After the training is complete, we train a linear classifier head for prediction on CIFAR-10 dataset and evaluate the robustness of the resulting model. \cref{fig:transfer_uae} (right) shows that the effects of compressibility observed above directly translate to the context of transfer learning, where increased compressibility in pretraining affects robustness performance in the downstream task, for which the network is fine-tuned.

% \begin{figure}[h]
% 	\centering%		
% \begin{subfigure}[b]{0.32\linewidth}
% 	\centering
% 	% \caption{Row sparsity reg. with ResNet50 and CIFAR-10}
% 	% \label{fig:l2_tm}
% \end{subfigure}
% \begin{subfigure}[b]{0.33\linewidth}
% 	\centering
% 	\includegraphics[width=\linewidth]{figures/uae_rowwise_l2.pdf}
% 	% \caption{Low-rank Fact. with ResNet50 and CIFAR-10}
% 	% \label{fig:l2_tm}
% \end{subfigure}

% \caption{UAEs and transfer learning.}
% \label{fig:uae_transfer}
% \end{figure}

% \begin{figure}[b]
% \centering
% \begin{subfigure}[b]{0.34\linewidth}
% 	\centering
% 	\includegraphics[width=\linewidth]{figures/adv_training.pdf}
% 	% \label{fig:l2_tm}
% \end{subfigure}
% \begin{subfigure}[b]{0.32\linewidth}
% 	\centering
% 	\includegraphics[width=\linewidth]{}
% 	% \label{fig:l2_tm}
% \end{subfigure}
% \begin{subfigure}[b]{0.32\linewidth}
% 	\centering
% 	\includegraphics[width=\linewidth]{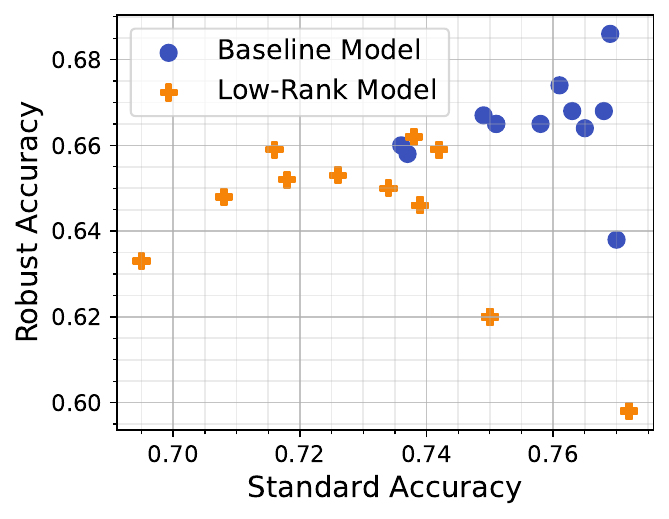}
% 	% \label{fig:l2_tm}
% \end{subfigure}
% \caption{Adversarial training (left) and fine-tuning (center, right).}
% \label{fig:adv_training}
% \end{figure}

% \begin{figure}[h]
% 	\centering%		
% \begin{subfigure}[b]{0.4\linewidth}
% 	\centering
% 	\includegraphics[width=\linewidth]{figures/.pdf}
% \end{subfigure}
% \caption{Transfer learning based on a ResNet50 model pretrained on CIFAR-100 and fine-tuned on CIFAR-10. \wip Incoming replications, results with low-rank fact., CIFAR-10 to CIFAR-100.}
% \label{fig:TransLearn}
% \end{figure}

% \label{sec:exp_compression}
% \vspace{-0.25em}

\paragraph{Compression and robustness} We now investigate the behavior of models under layerwise filter pruning. 
Using the ResNet18 and CIFAR-10 combination under adversarial training, in \cref{fig:compression} (left), we compare 
the baseline model ($\alpha=0.0$) to a model regularized to be compressible ($\alpha=0.1$). 
\begin{wrapfigure}[10]{r}{0.60\textwidth}
\vspace{-4mm}
\centering
\includegraphics[width=\linewidth]{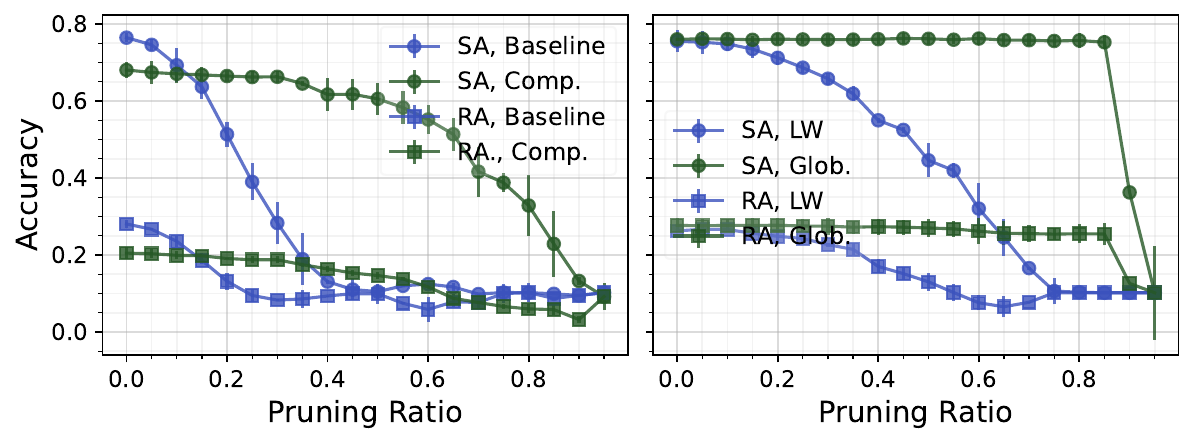}
\vspace{-7mm}
\caption{Robustness under compression. SA/RA: Standard/Robust Acc. LW/Glob.: Layerwise vs. global pruning. \vspace{-4mm}}
  \label{fig:compression}
  \vspace{0.5em}
\end{wrapfigure}
We see that at no point the compressed models surpass the 
uncompressed performance of the baseline model in terms of standard and robust accuracy. 
However, as pruning ratio increases, the baseline model fails to retain its standard and robust performance, whereas the compressible (sparsified) model does considerably better, demonstrating the {fundamental tension} between {robustness and compressibility}. 
%Although we utilize a 1-shot pruning with no fine-tuning in this example, in the Appendix we show that the difference remains even after post-pruning fine-tuning. 

In \cref{fig:compression} (right), we show that conducting pruning based on two simple interventions inspired by our bounds results in {tangible improvements} in standard and robust performance under pruning. Given the fact that layerwise pruning is known to produce harmful bottlenecks that lead to layer collapse \citep{blalockWhatState2020}, instead of targeting a pruning ratio and pruning each layer accordingly, we set a target $\epsilon$ for each layer, and for each compute $k$ that satisfies this $\epsilon$ level. Given a target global pruning ratio, we scan over different levels of $\epsilon$ and determine the level that gets closest to the target ratio. Moreover, during training we control the spread of the dominant terms, $\beta$, which our analyses show to be harmful for robustness, without decreasing compressibility. We accomplish this through regularizing the variance of the top $0.05$ of each layer's filters' norms. \cref{fig:compression} (right) demonstrates that our interventions create a tangible improvement in performance retention. However, as useful as such interventions can be, we also highlight the fundamental dangers of concentrating parameter energy in very few substructures that our findings reveal. Therefore, while pruning and low-rank approximation remain valuable compression methods, combining intermediate levels thereof with other compression methods such as quantization or knowledge distillation seems to be the most promising approach in reconciling safety and robustness, which is in line with other recent findings in the literature \citep{kuzminPruningVs2023, hongDecodingCompressed2024}.
%See the Appendix for further comparisons involving global pruning and neuron/filter variance regularizations.
% \begin{itemize}
%     \item Fixed norm results with FCN, structured vs. unstructured comp. \melih{Representative results in \cref{fig:synth_reg}, extension and replication incoming}
%     \item Naturalistic training results with FCN, ResNet18, VGG16 \melih{Representative results in \cref{fig:wip_fro_ra_oa}, extension and replication incoming}
%     \item Regularization methods \melih{Representative results in \cref{fig:wip_reg}, extension and replication incoming}
%     \item Transfer learning results \melih{We have preliminary results, rest incoming}
%     \item UAEs \melih{We have preliminary results, rest incoming}
%     \item Transformer and/or LoRA results  \melih{Representative results in \cref{fig:wip_vit}, extension and replication incoming -- slower}
% \end{itemize}

% We exclude experiments where the test performance of the original network is below 0.1 of the original, non-regularized network.

\vspace{-2mm}\section{Related work}\vspace{-2mm}
\label{sec:related}
\paragraph{Adversarial robustness}
%\textit{Adversarial attacks} refer to the design of inputs contaminated with small perturbations, imperceptible to the human eye, chosen to maximize the model error. 
The susceptibility of the neural network models to adversarial examples  created through small perturbations~\citep{szegedyIntriguingProperties2014} engendered a lot of research investigating the issue~\citep{madryDeepLearning2018}. To this day adversarial robustness remains one of the most important topics in machine learning safety \citep{malikSystematicReview2024}. The literature ranges from the development of new attacks and defenses \citep{moosavi-dezfooliDeepFoolSimple2016, abdollahpoorrostamSuperDeepFoolNew2024}, to investigating sources/mechanisms of adversarial vulnerability, to implications of AEs for the inductive biases of modern machine learning architectures \citep{ilyasAdversarialExamples2019, ortiz-jimenezOptimismFace2021, xuUnitedWe2024}, to developing strategies to retain model expressivity and generalization while defending against adversarial attacks \citep{tsiprasRobustnessMay2019, zhangStabilityGeneralization2024}.

\paragraph{Compressibility and pruning}
%Two concurrent trends in ML, namely increasing resource requirements of ML models and their widening deployment in edge devices, make model \textit{compressibility} a concern of ever-increasing practical and theoretical importance
%\footnote{We use the term ``compressibility'' as a relaxation of ``sparsity'', i.e. the domination of a vector/matrix/model by a small number of substructures or elements, as in \citep{aminiCompressibilityDeterministic2011, aroraStrongerGeneralization2018, wanImplicitCompressibility2024}. While we reserve the latter term for the more narrow notion of $k$-sparsity for conceptual clarity (see \cref{sec:background}), we highlight that the term sparsity is increasingly used in ML literature in this wider sense, sometimes with qualifications such as \textit{effective sparsity}, \textit{approximate sparsity}, or $\epsilon$-\textit{sparsity}, and oftentimes without \citep{hoyerNonnegativeMatrix2004, hoeflerSparsityDeep2021, diaoPruningDeep2023} \melih{Figure?}. On the other hand, the term ``neural network compression'' is commonly used to collectively refer to different families of methodologies for reducing the memory and compute footprint of neural networks, including but not limited to pruning, quantization, distillation, specialized architecture design etc. \citep{oneillOverviewNeuralNetwork2020}. See \cref{sec:related} for details.}. 
%As models' sizes increase and their use becomes more prevalent, neural network compression increases in importance as a research area. 
Prominent compression approaches include pruning, quantization, distillation, conditional computing, and efficient architecture development \citep{oneillOverviewNeuralNetwork2020}. Out of these, pruning remains among the most actively researched compression approaches due to its versatility \citep{chengSurveyDeep2024}.  
% and applied methods of model compression, owing its attractiveness to its flexibility and widespread applicability, as well as being easily combinable with other compression methods \citep{chengSurveyDeep2024}. 
Inducing compressibility / sparsity at training time is the easiest way to obtain prunable models \citep{hohmanModelCompression2024}. Compressibility across different substructures, a.k.a group sparsity \citep{liGroupSparsity2020}, allows for structured pruning (e.g. neuron/row, filter/channel, kernel pruning), which is computationally efficient \citep{yangDesignPrinciples2018}, yet lead to a sharp reduction in network connectivity, threatening performance \citep{blalockWhatState2020}. Lastly, spectral compressibility relaxes the notion of low-rankness, utilized for approximating large matrices with appealing theoretical properties \citep{ suzukiSpectralPruning2020, schotthoferLowrankLottery2022}. %Different kinds of compressibility can be achieved through explicit regularization and model design, or it can emerge naturally through inductive biases of learning algorithm, training loss, data distribution, and architecture choice \citep{hoeflerSparsityDeep2021}. 

\paragraph{Compressibility and robustness}
Whereas some research argues that compressibility is beneficial for adversarial robustness \citep{guoSparseDNNs2018, baldaAdversarialRisk2020, liaoAchievingAdversarial2022}, others indicate the relation is \textit{at best} highly dependent on the degree and type of compressibility, as well as attack type \citep{liAdversarialRobustness2020, merklePruningFace2022, savostianova2023robust, fengLipschitzConstant2025}. While a stream of new methods that incorporate adversarial robustness in novel ways to pruning, newly emerging systematic benchmarks reveal at best marginal benefits for such methods compared to weight-based pruning \citep{leeLayeradaptiveSparsity2020, pirasAdversarialPruning2024}. Whereas some methods demonstrate benefits of adversarial training-aware sparsification  \citep{guiModelCompression2019, sehwagHYDRAPruning2020, pavlitskaRelationshipModel2023}, infamous problems adversarial training poses for standard generalization, transferability, as well as computational feasibility especially for larger models still plague such methods \citep{tsiprasRobustnessMay2019, wenUnderstandingRegularization2020, yangSpectralRegularization2024}.

\vspace{-2mm}\section{Conclusion and future work}\vspace{-2mm}
\label{sec:conclusion}
%\melih{To be completed.}
% Our weaknesses are:
% \begin{itemize}
% 	\item Pessimistic bounds
% 	\item No theoretical study of multi-class classification. 
% 	\item Need to study more realistic architectures, advanced layer structures such as attention, layer norm, or residual connections.
% \end{itemize}
% Future works should include a joint analysis of generalization and adversarial robustness.
In this paper, we present a unified theoretical and empirical treatment of how structured compressibility shapes adversarial robustness. Via a novel analysis of neuron-level and spectral compressibility, we uncover a fundamental mechanism: compression concentrates sensitivity along a small number of directions in representation space, rendering models more vulnerable—even under adversarial training and transfer learning. Our norm-based robustness bounds offer interpretable decompositions that predict both standard and universal adversarial vulnerability, and shed light on the trade-offs between efficiency and safety in modern neural networks. Empirically, we validate these insights across datasets, architectures, and training regimes, showing how both compressibility and its spread determines adversarial susceptibility. Inspired by our bounds, we outline simple, targeted strategies that can mitigate these vulnerabilities.

Our work provides a novel insight into the relationship between structured compressibility and adversarial vulnerability. A limitation is our theory's reliance on global Lipschitz constants to characterize network performance: future work should focus on providing a unified view that incorporates both structural/global weaknesses, as well as the localization of sensitivity in the input space. Moreover, while the simple interventions suggested by our theory provides cost-effective improvements to the compressibility-robustness trade-off, these insights should be combined with alternative, novel compression methods to improve the frontiers of robust compression.

% \paragraph{Broader impact statement} Our research is largely theoretical and raises no direct societal or ethical concerns. To the extent that it has any downstream effects, we expect them to be positive by increasing the robustness and resource-efficiency of machine learning models.

% \paragraph{Reproducibility statement} We enable the reproduction of our work through a detailed description of our methods in the main paper and the Appendix, as well as the source code for our main experiments provided as supplementary material.
\bibliography{refs.bib}
\bibliographystyle{plainnat}
\newpage
\appendix
\setcounter{page}{1}
\onecolumn

{\centering
\Large\bfseries On the Interaction of\\Compressibility and Adversarial Robustness\\ --Appendix--\par\vspace{1em}}
\addcontentsline{toc}{section}{Appendix}

{\large\bfseries Contents}

% Create a mini TOC with titletoc
\startcontents[appendix]
\printcontents[appendix]{}{1}{}

\clearpage

\section{Proofs}
\label{apx:proofs}
We start with a number of auxiliary results that are used in the theorems and corollary presented in \cref{sec:theory}.

\begin{lemma}
	\label{thm:strict-comp}
	For any strictly \qke compressible vector $\param$ and for all $q\geq 1$, $\|\param^{(k)}\|_q = (1-\epsilon^q)^{1/q}\|\param\|_q$.
\end{lemma}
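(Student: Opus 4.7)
The plan is to exploit the disjoint support of $\param_k$ and $\param - \param_k$ together with the strict compressibility hypothesis, so that the identity reduces to a one-line algebraic rearrangement. (I read $\param^{(k)}$ as the same object as $\param_k$ in \cref{def:compressibility}, i.e.\ the truncation to the top-$k$ magnitudes.)

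First I would observe that the vector $\param_k$ keeps the $k$ largest-magnitude coordinates of $\param$ and zeros out the rest, while $\param - \param_k$ does exactly the opposite: its only nonzero coordinates are those not selected by the truncation. Hence the two vectors have disjoint supports, and for any $q \geq 1$ the $q$-norm is additive on vectors with disjoint supports (since each coordinate contributes to at most one of the two sums). This gives the Pythagoras-type identity
\begin{equation*}
\|\param\|_q^q \;=\; \|\param_k\|_q^q + \|\param - \param_k\|_q^q.
\end{equation*}

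Next I would plug in the strict-compressibility hypothesis $\|\param - \param_k\|_q = \epsilon \|\param\|_q$, which yields
\begin{equation*}
\|\param\|_q^q \;=\; \|\param_k\|_q^q + \epsilon^q \|\param\|_q^q,
\end{equation*}
and hence $\|\param_k\|_q^q = (1-\epsilon^q)\|\param\|_q^q$. Taking $q$-th roots (valid since $\epsilon \in [0,1]$ so $1-\epsilon^q \geq 0$) delivers the claim $\|\param_k\|_q = (1-\epsilon^q)^{1/q}\|\param\|_q$.

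There is no real obstacle here; the only point requiring any care is justifying the disjoint-support decomposition and noting that it genuinely requires additivity on disjoint supports rather than a more general triangle-type inequality. A brief sentence identifying the coordinates of the two vectors (top-$k$ indices versus their complement) suffices, after which the rest is pure algebra.
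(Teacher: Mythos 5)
Your proof is correct and follows essentially the same route as the paper's: both use the disjoint-support additivity $\|\param\|_q^q = \|\param_k\|_q^q + \|\param - \param_k\|_q^q$, substitute the strict-compressibility equality $\|\param - \param_k\|_q = \epsilon\|\param\|_q$, and rearrange. Your write-up is if anything slightly more careful, since it explicitly flags the disjoint-support justification and the nonnegativity of $1-\epsilon^q$ before taking $q$-th roots.
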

\begin{proof}
	$\|\param - \param^{(k)}\|^q_q = \epsilon^q\|\param\|^q_q$ follows from the definition of compressibility. Adding  $\|\param^{(k)}\|^q_q$ to both sides leads to $\|\param\|^q_q = \epsilon^q\|\param\|^q_q + \|\param^{(k)}\|^q_q$, with LHS due to elements of $\x$ and $\param - \param^{(k)}$ populating disjoint sets of coordinates. Result follows with simple algebraic manipulation.
\end{proof}
 Note that for the results in this section, we use $\param^{(k)}$ and $\param_k$ equivalently to denote a vector that includes only the $k$ dominant terms.

\begin{lemma}
\label{thm:ps_lt_q}
    For $\ps < q$, given the $(2, k, \epsilon)$-compressible vector $\param \in \mathbb{R}^d$, we have:
    \begin{align}
		\|\param\|_{\ps} \leq k^{\frac{1}{\ps}-\frac{1}{q}} \|\paramk\|_q + d^{\frac{1}{\ps} - \frac{1}{q}}\epsilon  \|\param\|_q.
	\end{align}
\end{lemma}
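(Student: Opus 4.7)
The plan is to decompose $\param = \paramk + (\param - \paramk)$ and use the triangle inequality to get $\|\param\|_{\ps} \leq \|\paramk\|_{\ps} + \|\param - \paramk\|_{\ps}$, then bound each $\ps$-norm on the right-hand side by a $q$-norm using the standard power-mean inequality. The key subroutine is that for any $v \in \mathbb{R}^m$ and $\ps < q$, one has $\|v\|_{\ps} \leq m^{1/\ps - 1/q} \|v\|_q$; this follows from a one-line Hölder argument writing $\sum_i |v_i|^{\ps} \cdot 1$ and applying Hölder with conjugate exponents $q/\ps$ and $q/(q-\ps)$.

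For the first term, since $\paramk$ is supported on exactly $k$ coordinates, I would apply the subroutine with $m = k$ to obtain $\|\paramk\|_{\ps} \leq k^{1/\ps - 1/q}\|\paramk\|_q$. For the second term, the residual $\param - \paramk$ lives (at most) in $\mathbb{R}^d$, so applying the subroutine with $m = d$ yields $\|\param - \paramk\|_{\ps} \leq d^{1/\ps - 1/q}\|\param - \paramk\|_q$. The compressibility hypothesis (which should be read as $(q,k,\epsilon)$-compressibility, consistent with the $q$-norms appearing on the right-hand side of the claim) then gives $\|\param - \paramk\|_q \leq \epsilon\|\param\|_q$, and adding the two bounds yields the stated inequality.

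The step I expect to require the most care is simply identifying the correct support size to plug into the power-mean inequality for each piece ($k$ for the dominant part, $d$ for the residual) and tracking the exponents; everything else is a routine application of Hölder and the definition of \qke-compressibility, with no interaction between the two pieces required since the decomposition is over disjoint coordinate sets.
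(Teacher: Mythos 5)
Your proposal is correct and follows essentially the same route as the paper: triangle (Minkowski) inequality on the decomposition $\param = \paramk + (\param-\paramk)$, the H\"older/power-mean comparison $\|v\|_{\ps}\le m^{1/\ps-1/q}\|v\|_q$ with support size $k$ for the dominant part and $d$ for the residual, and the compressibility hypothesis to bound $\|\param-\paramk\|_q$. Your reading of the hypothesis as $(q,k,\epsilon)$-compressibility (rather than the literal $(2,k,\epsilon)$ in the statement) also matches what the paper's own proof actually uses.
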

\begin{proof}
    We start by applying Minkowski's inequality to $\norm{\param}{\ps}$:
	\begin{align}
		\label{eq:minko_ps_gen}
		\|\param\|_{\ps} \leq \|\paramk\|_{\ps} + \|\param - \paramk\|_{\ps}.
	\end{align}
	We now bound the terms on RHS separately. For the first term, since $\ps<q$ by H\"{o}lder's inequality for k-sparse vectors we have
	$$
	\|\paramk\|_{\ps} \leq  k^{\frac{1}{\ps}-\frac{1}{q}} \|\paramk\|_{q}.
	$$
	For the next term, we can write
	$$
	\|\param - \paramk\|_{\ps} \leq  d^{\frac{1}{\ps}-\frac{1}{q}} \|\param - \paramk\|_{q} \leq  d^{\frac{1}{\ps}-\frac{1}{q}} \epsilon \|\param\|_{q},
	$$
	with the left inequality due to H\"{o}lder's inequality, and the right due to $\paramk$'s  $(q,k,\epsilon)$ compressibility. Combining the expressions for both terms, we have
	\begin{align}
		\label{eq:p_smaller_q_proof}
		\|\param\|_{\ps} \leq k^{\frac{1}{\ps}-\frac{1}{q}} \|\paramk\|_q + d^{\frac{1}{\ps} - \frac{1}{q}}\epsilon  \|\param\|_q.
	\end{align}
\end{proof}

\begin{proposition}\label{thm:adv_bound_dual_cls} Given a linear binary classifier and binary cross-entropy loss function, we have the following bound:
\begin{equation}
	\label{eq:adv_bound_dual_cls}
	\advriskp{p} (\param; \delta) \leq F(\param; \delta) + \delta\|\param \|_{\ps}
\end{equation}
\end{proposition}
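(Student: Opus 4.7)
The plan is to reduce the claim to a short scalar inequality after collapsing the adversarial maximization through H\"older duality. First I would fix a data point $(\x, y)$ with $y \in \{-1, +1\}$ and write the binary cross-entropy composite loss as $f(\x, \param) = \log(1 + e^{-y \param^\top \x})$. The adversarial loss becomes
\begin{equation*}
\fadvp{p}(\x, \param; \delta) = \max_{\|\va\|_p \leq \delta} \log\!\bigl(1 + e^{-y\param^\top(\x + \va)}\bigr).
\end{equation*}
Since $t \mapsto \log(1 + e^t)$ is strictly increasing, the inner maximization reduces to maximizing the linear functional $-y\param^\top \va$ over the $\ell_p$-ball of radius $\delta$. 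By the definition of the dual norm (with the extremum attained by compactness of the ball for $p \in [1,\infty]$ and conjugacy $1/p + 1/\ps = 1$), this supremum equals $\delta \|{-y\param}\|_{\ps} = \delta \|\param\|_{\ps}$, giving
\begin{equation*}
\fadvp{p}(\x, \param; \delta) = \log\!\bigl(1 + e^{-y\param^\top \x + \delta\|\param\|_{\ps}}\bigr).
\end{equation*}

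Second I would establish the scalar inequality: for any $u \in \mathbb{R}$ and $c \geq 0$, $\log(1 + e^{u+c}) \leq \log(1 + e^u) + c$. Exponentiating both sides reduces this to $1 + e^{u+c} \leq e^c + e^{u+c}$, i.e.\ $1 \leq e^c$, which holds since $c \geq 0$. Applying this with $u = -y\param^\top \x$ and $c = \delta\|\param\|_{\ps}$ yields the pointwise bound $\fadvp{p}(\x, \param; \delta) \leq f(\x, \param) + \delta\|\param\|_{\ps}$.

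Finally I would take expectation over $(\x, y) \sim \pi$. Since $\delta\|\param\|_{\ps}$ is deterministic in $\x$, linearity of expectation yields $\advriskp{p}(\param; \delta) \leq F(\param) + \delta\|\param\|_{\ps}$, which is the claimed bound. I do not foresee a real obstacle; the only subtlety is justifying that the H\"older supremum is attained so the reduction is exact rather than a loose upper bound, which follows immediately from compactness of the $\ell_p$-ball. A natural alternative route would be to observe that $t \mapsto \log(1 + e^{-yt})$ is $1$-Lipschitz and invoke a generic Lipschitz-plus-dual-norm argument, which would generalize more cleanly to other margin losses, but the direct calculation above is tighter for this specific loss and connects more transparently with the dual norm $\|\param\|_{\ps}$ that appears in \cref{thm:fcn_bound_pi}.
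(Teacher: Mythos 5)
Your proof is correct and follows essentially the same route as the paper: collapse the inner maximization via H\"older duality to get the closed form $\fadvp{p}(\x,\param;\delta)=\log\bigl(1+e^{-y\param^\top\x+\delta\|\param\|_{\ps}}\bigr)$, establish the pointwise bound $\fadvp{p}\leq f+\delta\|\param\|_{\ps}$, and take expectations. The only differences are cosmetic: you verify the scalar step $\log(1+e^{u+c})\leq\log(1+e^u)+c$ by a one-line exponentiation, whereas the paper manipulates the ratio of the two logistic terms and bounds $\tfrac{e^{-y\x^\top\param}}{1+e^{-y\x^\top\param}}<1$; and you make the dual-norm attainment explicit, which the paper takes for granted.
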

\begin{proof}[Proof of Proposition~\ref{thm:adv_bound_dual_cls}]
	For binary cross-entropy loss we have:
	$$f^{\rm adv}(\x, \param; \delta) = \log\left(1 + \exp  \left(-y(\x^\top \param) + \delta\norm{\param}{\ps}\right) \right).$$
	
	We observe that $f^{\rm adv}(\x, \param; \delta) \leq f(\x, \param; \delta) + \delta\norm{\param}{\ps}$ since
	\begin{align*}
		f^{\rm adv}(\x, \param; \delta) &= \log\left(1 + \exp  \left(-y(\x^\top \param) + \delta\norm{\param}{\ps}\right) \right) \\
		&= \log\left(1 + \exp  \left(-y(\x^\top \param)\right) \right) + \log\left(\frac{1 + \exp  \left(-y(\x^\top \param) + \delta\norm{\param}{\ps}\right)}{1 + \exp  \left(-y(\x^\top \param)\right)}\right)\\
		&= f(\x, \param; \delta) + \log\left(1 + \left(\exp \left(\delta\norm{\param}{\ps}\right)  - 1\right)\frac{\exp \left(-y(\x^\top \param)\right)}{1 + \exp  \left(-y(\x^\top \param)\right)}\right)\\
		&\leq f(\x, \param; \delta) + \delta\norm{\param}{\ps},
	\end{align*}
	with the last inequality due to the fact that $\frac{\exp \left(-y(\x^\top \param)\right)}{1 + \exp  \left(-y(\x^\top \param)\right)} < 1$. Taking the expectation of the expression gives:
	\begin{equation*}
		F^{\mathrm{adv}}(\param; \delta) \leq F(\param; \delta) + \delta\norm{\param}{\ps}
	\end{equation*}
\end{proof}
\paragraph{Main results} We now present the proofs for \cref{thm:op_norm_rels,thm:lipschitz_fcn_op_full,thm:fcn_bound_pi}.
\begin{proof}[Proof of \cref{thm:op_norm_rels}]
    For brevity we will omit $\bnu$ as a subscript, such that $\epsilon=\epsilon_{\bnu}, k=k_{\bnu}, \beta=\beta_{\bnu}$. 
    
	For \textbf{(a)}, we assume $\bnu$ is in a descending order w.l.o.g., and $\hat{\bnu}$ is the corresponding vector of $\ell_2$ norms for each row. We note that 
	\begin{align}
		\|\bnu^{(k)}\|_1  = \sum^k_{i=1} \nu_i  %\label{eq:fcn_proof_struct_-1}
		&\geq k\nu_k  \label{eq:fcn_proof_struct_0}\\
		&\geq k(1-\beta)\nu_1 \label{eq:fcn_proof_struct_1}\\
		(1-\epsilon)\|\bnu\|_1  &\geq k(1-\beta)\nu_1\label{eq:fcn_proof_struct_2}\\
		\frac{(1-\epsilon)}{(1-\beta)}\frac{1}{k}\|\bnu\|_1  &\geq \nu_1 \label{eq:fcn_proof_struct_2.5}\\
		\frac{(1-\epsilon)}{(1-\beta)}\frac{1}{k}\|\bnu\|_1  &\geq \|\W\|_\infty\label{eq:fcn_proof_struct_3}
	\end{align}
	with \eqref{eq:fcn_proof_struct_0} being the smallest magnitude element in $\bnu^{(k)}$,  \eqref{eq:fcn_proof_struct_1} due to the definition of slack variable $\beta$, and \eqref{eq:fcn_proof_struct_2} due to Lemma \ref{thm:strict-comp}, and \eqref{eq:fcn_proof_struct_3} due to the fact that $\|\W\|_\infty = \nu_1$, as $\bnu$ is assumed to be magnitude-ordered. We then move on to characterizing $\|\bnu\|_1$. Notice that 
	\begin{align}
		\norm{\bnu}{1} = \sum_{i=1}^h \nu_i
		&\leq \sum_{i=1}^h \sqrt{h} \hat{\nu}_i \label{eq:standard_norm_ineq1_Linf}\\
		&\leq \sqrt{h} \norm{\hat{\bnu}}{1} \\
		&\leq \sqrt{h} \left(\sqrt{k_r}\norm{\hat{\bnu}^{(k_r)}}{2} + \sqrt{h }\norm{\hat{\bnu}}{2}\right)  \label{eq:within_row_comp}\\
		&\leq \left(\sqrt{hk_r}+ \sqrt{h }\epsilon_r \right)\norm{\hat{\bnu}}{2} \\
		&\leq \left(\sqrt{hk_r}+ \sqrt{h }\epsilon_r \right)\norm{\W}{F}\label{eq:fro_Linf}
	\end{align}
	Note that \eqref{eq:standard_norm_ineq1_Linf} is due to standard norm inequality between $\ell_1$ and $\ell_2$ rows, \eqref{eq:within_row_comp} is due to \cref{thm:ps_lt_q}, and \eqref{eq:fro_Linf} is due to $\ell_2$ norm of the vector of row $\ell_2$ rows equals the Frobenius norm. Plugging \eqref{eq:fro_Linf} back into \eqref{eq:fcn_proof_struct_3} gives the desired result.
	
	For \textbf{(b)} the proof follows similarly through steps \eqref{eq:fcn_proof_struct_0}-\eqref{eq:fcn_proof_struct_2.5} by replacing $\bnu$ with $\bsigma$. After that, we continue with 
	\begin{align}
		\frac{(1-\epsilon)}{(1-\beta)}\frac{1}{k}\|\bsigma\|_1  &\geq \sigma_1 \label{eq:fcn_proof_spec_1}\\
		\frac{(1-\epsilon)}{(1-\beta)}\frac{1}{k}\|\bsigma\|_1  &\geq \norm{\W}{2} \label{eq:fcn_proof_spec_2}\\
		\frac{(1-\epsilon)}{(1-\beta)}\frac{\sqrt{h}}{k}\|\bsigma\|_2 &\geq \norm{\W}{2} \label{eq:fcn_proof_spec_3}\\
		\frac{(1-\epsilon)}{(1-\beta)}\frac{\sqrt{h}}{k}\norm{\W}{F} &\geq \norm{\W}{2} \label{eq:fcn_proof_spec_4}
	\end{align}
	with \eqref{eq:fcn_proof_spec_2} due to $\norm{\W}{2} = \sigma_1$, \eqref{eq:fcn_proof_spec_3} due to standard norm inequality between $\ell_1$ and $\ell_2$ norms, and \eqref{eq:fcn_proof_spec_4} due to the fact that $\ell_2$ norm of singular values equals Frobenius norm, i.e. $\|\W\|_F = \|\bsigma\|_2$.
\end{proof}

\begin{proof}[Proof of \cref{thm:lipschitz_fcn_op_full}] Proofs for both conditions rely on an additive decomposition of the layer matrices $\W^{l}$ into dominant/leading terms vs. remainder terms, i.e. $\W^{l} = \W_k^{l} + \W_r^{l}$. In structured compressibility this takes the form of $\W_k^{l}$ and $\W_r^{l}$ including $k$ leading (largest $\ell_1$ norm) rows and $h-k$ remaining rows, respectively, with the rest of the rows set to $\mathbf{0}$ in both cases. In spectral compressibility, this takes the form of $\W_k^{l} + \W_r^{l} = \UST{k}{l} + \UST{r}{l}$, where the remaining $h-k$ vs. leading $k$ singular values are set to $0$ respectively. 
%Particular definitions of spectral and row pruning provided in parts \textbf{(a)} and \textbf{(b)} respectively.  

Let $\z^{l}$ denote the post-activation representations of the network after layer $l \in [\lambda]$. The Jacobian of the network output $\z^{\lambda}$ with respect to the input $\x$ is given by:
\begin{align}
    \mathbf{J}_{\Phi}(\x) = \D^{\lambda}(\x) \W^{\lambda} \D^{\lambda-1}(\x) \W^{\lambda-1} \D^{\lambda-2}(\x) \dots \D^{1}(\x) \W^{1},
\end{align}
where $\D^{l}(\x)$ is the diagonal binary matrix corresponding to the ReLU activation after layer $l$, i.e., $(\D^{l})_{ii} = \mathbb{I}[(\bar{\z}^{l})_i > 0]$, with $\bar{\z}^{l}$ being the pre-activation representation at layer $l$ for input $\x$.

Letting $L^{p}_{\Phi}$ denote the $p$-norm Lipschitz constant of the compressed encoder in the input domain, it can be computed as the maximum $p\to p$ operator norm of the Jacobian over the input space $\mathcal{X}$:

\begin{align}
    L^{p}_{\Phi} = \sup_{\x \in \mathcal{X}} \|\mathbf{J}_{\Phi}(\x)\|_{p} = \sup_{\x\in \mathcal{X}} \|\D^{\lambda}(\x)\W^{\lambda} \D^{\lambda-1}(\x) \W^{\lambda-1} \dots \D^{1}(\x) \W^{1}\|_{p}.
\end{align}
 
For brevity, we use the following notation:
\begin{align}
     \PD := \D^{\lambda}(\x)\W^{\lambda} \D^{\lambda-1}(\x) \W^{\lambda-1} \dots \D^{1}(\x) \W^{1}.
\end{align}
Note that the optimization over $\mathcal{X}$ can be replaced with the optimization over all binary activation matrices $\D^{l} \in \mathcal{D}$ for each layer whenever convenient. We replace the notation $\D^{l}(\x)$ with $\D^{l}$ when doing so. 

For brevity, we introduce the following abbreviations for the alignment terms with a slight abuse of notation:
\begin{align}
    A^{*}_{p, l} := A^{*}_{p}(\W^{l+1}, \W^{l}) := \max_{\mathbf{D}\in\mathcal{D}}A_{p,l} := \max_{\mathbf{D}\in\mathcal{D}}A_{p}(\W^{l+1}, \D, \W^{l}),
\end{align}
where $A_{p}(\W^{l+1}, \D, \W^{l})$ is the inner RHS term optimized over in \cref{eq:interlayer_alignment_inf} and \cref{eq:interlayer_alignment_2}.

\textbf{(a) Row/neuron compressibility} We aim to bound $L^{\infty}_{\Phi}$ as:
\begin{align}
    L^{\infty}_{\Phi} &\leq \max_{\D^{1}, \dots, \D^{\lambda}} \|\PD\|_{\infty}.
    \label{eq:appx_linf}
\end{align}
%Here, we define the row-pruned $\W^{l}_k$ as a matrix where the $k$ rows with the largest $\ell_1$ norms are identical with $\W^{l}$, and rest of the rows are set to $\mathbf{0}$. 

We start by noting that we can upper bound this norm by partitioning the inside terms based on the submultiplicative property:
\begin{align}
    \norm{\PD}{\infty}&\leq \|\D^{\lambda}\W^{\lambda} \D^{\lambda-1} \W^{\lambda-1} \dots \D^{1} \W^{1}\|_{\infty}\\
                      &\leq \|\W^{\lambda} \D^{\lambda-1} \W^{\lambda-1}\|_{\infty}\|\D^{\lambda-2}\|_\infty\|\W^{\lambda-2}\|_\infty \nonumber\\
                      &\qquad \qquad \dots \|\W^{l+1} \D^{l} \W^{l}\|_\infty \dots \|\D^{1}\|_\infty  \|\W^{1}\|_{\infty}
\end{align}
Note that any such parsing is valid as long as a layer does not appear in two interlayer terms at once. Given a valid parsing set $S \subseteq \{1, 2, \dots, \lambda-1\}$, we have the interlayer alignment terms for $l \in S$, i.e. $\|\W^{l+
1} \D^{l} \W^{l}\|_{\infty}$ and standalone terms for all remaining layers $\{l\ |\ l\notin S, l+1 \notin S\}$: $\|\W^{l}\|_\infty$. We denote all such valid parsing layer subsets with $\mathcal{S}$, where $S$ does not include any consecutive indices for any $S \in \mathcal{S}$. We will first prove the bound for any valid parsing set, and then define the optimal alignment parsing set that would lead to the tightest bound.

We first analyze a generic alignment term, using the additive decomposition into leading and remainder terms. Remember that for layer $l$ we denote the row $\ell_1$ norms with $\bnu^{l} = (\nu^{l}_1, \dots, \nu^{l}_h)$, and w.l.o.g. assume that the rows are ordered in descending order according to $\nu_l$. Also note that $\|\W^{l}_k\|_\infty = \|\W^{l}\|_\infty = \nu^{l}_1$.
\begin{align}
    \|\W^{l+1} \D^{l} \W^{l}\|_{\infty} 
    &\leq \|\W_k^{l+1} \D^{l} \W_k^{l}\|_{\infty} + \|\W_k^{l+1} \D^{l} \W_r^{l}\|_{\infty} \nonumber\\
    & \qquad + \|\W_r^{l+1} \D^{l} \W_k^{l}\|_{\infty} + \|\W_r^{l+1} \D^{l} \W_r^{l}\|_{\infty}\\
    &\leq \|\W_k^{l+1} \D^{l} \W_k^{l}\|_{\infty} + \|\W_k^{l+1}\|_\infty \|\W_r^{l}\|_{\infty} \nonumber\\
    & \qquad + \|\W_r^{l+1}\|_{\infty}\|\W_k^{l}\|_{\infty} + \|\W_r^{l+1}\|_\infty\|\W_r^{l}\|_{\infty}\\
    &\leq \|\W^{l+1}\|_\infty \|\W^{l}\|_{\infty}\bigl( \frac{\|\W_k^{l+1} \D^{l} \W_k^{l}\|_{\infty}}{\|\W^{l+1}\|_\infty \|\W^{l}\|_{\infty}} + \frac{\nu_{k+1}^{l}}{\nu_1^{l}} \nonumber\\
    & \qquad + \frac{\nu_{k+1}^{l+1}}{\nu_1^{l+1}} + \frac{\nu_{k+1}^{l}\nu_{k+1}^{l+1}}{\nu_1^{l}\nu_1^{l+1}} \bigr).\\
    &\leq \|\W^{l+1}\|_\infty \|\W^{l}\|_{\infty}\left( \frac{\|\W_k^{l+1} \D^{l} \W_k^{l}\|_{\infty}}{\|\W^{l+1}\|_\infty \|\W^{l}\|_{\infty}} + R_\infty(\epsilon)\right).
\end{align}
Since the remaining, standalone layer norms also contribute $\|\W^{l}\|_{\infty}$, we have 
\begin{align}
    \norm{\PD}{\infty}&\leq \prod^{\lambda}_{l=1} \norm{\W^{l}}{\infty}  \prod_{l\in S} \left( \frac{\|\W_k^{l+1} \D^{l} \W_k^{l}\|_{\infty}}{\|\W^{l+1}\|_\infty \|\W^{l}\|_{\infty}} + R_\infty(\epsilon)\right).
\end{align}
Bounding the Lipschitz constant accordingly:
\begin{align}
    L^{\infty}_{\Phi} &\leq \max_{\D^{1}, \dots, \D^{\lambda}} \prod^{\lambda}_{l=1} \norm{\W^{l}}{\infty}  \prod_{l=1}^{\lambda-1} \left( \frac{\|\W_k^{l+1} \D^{l} \W_k^{l}\|_{\infty}}{\|\W^{l+1}\|_\infty \|\W^{l}\|_{\infty}} + R_\infty(\epsilon)\right)\\
    &=\prod^{\lambda}_{l=1} \norm{\W^{l}}{\infty}  \prod_{l\in S} \left( \max_{\D\in\mathcal{D}} \frac{\|\W_k^{l+1} \D \W_k^{l}\|_{\infty}}{\|\W^{l+1}\|_\infty \|\W^{l}\|_{\infty}} + R_\infty(\epsilon)\right)\\
    &=\prod^{\lambda}_{l=1} \norm{\W^{l}}{\infty}  \prod_{l\in S} A^*_\infty
    (\W^{l+1},  \W^{l}) + R_\infty(\epsilon).
\end{align}
Contributing an alignment term of $1$ for $\{l\ |\ l\notin S, l+1 \notin S\}$ gives the desired result if $S = S_{opt}$, which we define below. 

Given multiple valid parsing sets are possible whenever $\lambda > 2$, we lastly define the \textit{optimal alignment parsing set}, $S_{opt}$.
\begin{definition}[Optimal Alignment Parsing Set]
\label{def:s_opt_final_final}
The Optimal Alignment Parsing Set $S_{opt}$ is a set in $\mathcal{S}$ that achieves the minimum product of the corresponding maximum alignment factors:
\begin{align}
    S_{opt} \in \underset{S \in \mathcal{S}}{\argmin}  \prod_{l \in S} A_{\infty,l}^{*}.
\end{align}
Note that $S_{opt}$ might not be unique, but $\underset{S \in \mathcal{S}}{\min}  \prod_{l \in S} A_{\infty,l}^{*}$ is.
\end{definition}

\textbf{Complexity of finding $S_{opt}$:} Finding $S_{opt} \in \underset{S \in \mathcal{S}}{\argmin}  \prod_{l \in S} A_{\infty,l}^{*}$ is equivalent to finding the independent set $S$ in the path graph $G=(V,E)$ with $V=\{1, \dots, L-1\}$ that maximizes $\sum_{l \in S} w_l$, where weights $w_l = -\log A_{\infty,l}^{*}$ (assuming $A_{\infty,l}^{*} > 0$; we handle $A_{\infty,l}^{*}=0$ as a special case yielding $\prod_{l\in S_{opt}}A_{\infty,l}^{*} = 0$). This is the Maximum Weight Independent Set, which can be solved in linear time in chordal graphs, of which path graphs are a subfamily \citep{frankPolynomialAlgorithms1976}.

\textbf{(b) Spectral compressibility:} We can upper bound $L^{2}_{\Phi}$ by considering all possible activation patterns (all possible binary diagonal matrices $\D^{l}$):
\begin{align}
    L^{2}_{\Phi} &\leq \max_{\D^{1}, \dots, \D^{\lambda}} \|\PD\|_{2}  
    \label{eq:max_D_bound_first}
\end{align}
We modify the SVD decomposition for layers as
\begin{align}
    \W^{l} &= \USTrt{}{l}\\ 
            &= \underbrace{\left(\U_k^{l}\sqrt{\Sg_k^{l}}+ \U_r^{l}\sqrt{\Sg_r^{l}}\right)}_{\A^{l}}\underbrace{\left(\sqrt{\Sg_k^{l}}\bigl(\V_{k}^{l}\bigr)^{\!\top} + \sqrt{\Sg_r^{l}}\bigl(\V_{r}^{l}\bigr)^{\!\top}\right)}_{\B^{l}}.
\end{align}
Note that we assume untruncated singular vector matrices for $\W_k^{l}$ and $\W_r^{l}$ for the equation above to be valid.
% and we denote the original additive decomposition by $\C^{l}:= \UST{k}{l} + \UST{r}{l}$. 
We then decompose the spectral norm using the submultiplicative property:
\begin{align}
    \norm{\PD}{2} &=\|\D^{\lambda}\W^{\lambda} \D^{\lambda-1} \W^{\lambda-1} \D^{\lambda-2} \dots \D^{1} \W^{1}\|_{2} \\
                 &\leq\|\A^{\lambda}\|_2\|\B^{\lambda}\D^{\lambda-1}\A^{{\lambda-1}}\|_2 \|\B^{{\lambda-1}} \D^{\lambda-2} \A^{{\lambda-2}}\|_2 \nonumber \\
                 &\qquad \qquad \dots \|\B^{{l+1}}\D^{l}\A^{l}\|_2 \dots \|\B^{{2}}\D^{1}\A^{1}\|_2 \|\B^{1}\|_{2}\label{eq:spectral_submult_decomposition}
\end{align}
We then analyze the central term $\|\B^{{l+1}}\D^{l}\A^{l}\|_2$, and decompose it using the submultiplicative and subadditivity properties. Remember that for layer $l$ we denote the singular values with $\bsigma^{l} = (\sigma^{l}_1, \dots, \sigma^{l}_h)$. Also note that $\|\W^{l}_k\|_2 = \|\W^{l}\|_2 = \sigma^{l}_1$.
\begin{align}
    &\|\B^{{l+1}}\D^{l}\A^{l}\|_2\nonumber\\ 
    &\leq \|\SrtV{k}{l+1}\D^{l}\USrt{k}{l}\|_2+ \|\SrtV{k}{l+1}\D^{l}\USrt{r}{l}\|_2\nonumber\\
    &\quad + \|\SrtV{r}{l+1}\D^{l}\USrt{k}{l}\|_2 + \|\SrtV{r}{l+1}\D^{l}\USrt{r}{l}\|_2\\
    &\leq \|\SrtV{k}{l+1}\D^{l}\USrt{k}{l}\|_2+\sqrt{\sigma_1^{{l+1}}}\|\bigl(\V_{k}^{l+1}\bigr)^{\!\top}\D^{l}\U_r^{l}\|_2\sqrt{\sigma_{k+1}^{{l}}}\nonumber\\
    &\quad + \sqrt{\sigma_{k+1}^{{l+1}}}\|\bigl(\V_{r}^{l+1}\bigr)^{\!\top}\D^{l}\U_r^{l}\|_2\sqrt{\sigma_1^{{l}}} + \sqrt{\sigma_{k+1}^{{l+1}}}\|\bigl(\V_{r}^{l+1}\bigr)^{\!\top}\D^{l}\U_r^{l}\|_2\sqrt{\sigma_{k+1}^{{l}}}\\
    &\leq \sqrt{\sigma_1^{{l+1}}}\sqrt{\sigma_1^{{l}}} \left(\frac{\|\SrtV{k}{l+1}\D^{l}\USrt{k}{l}\|_2}{\sqrt{\sigma_1^{l}\sigma_1^{l+1}}}+  \sqrt{\frac{\sigma_{k+1}^{{l}}}{\sigma_{1}^{{l}}}} + \sqrt{\frac{\sigma_{k+1}^{{l+1}}}{\sigma_{1}^{{l+1}}}} + \sqrt{\frac{\sigma_{k+1}^{{l}}\sigma_{k+1}^{{l+1}}}{\sigma_{1}^{{l}}\sigma_{1}^{{l+1}}}}\right) \\
    &\leq \sqrt{\sigma_1^{{l+1}}}\sqrt{\sigma_1^{{l}}} \left(\frac{\|\SrtV{k}{l+1}\D^{l}\USrt{k}{l}\|_2}{\sqrt{\sigma_1^{l}\sigma_1^{l+1}}} + R_2(\epsilon)\right),
\end{align}
where we set all cross-alignment terms other than dominant-dominant interaction to $1$. This is made possible by the fact that they are the multiplication of orthogonal matrices and a ReLU matrix, all of which have spectral norms upper bounded by $1$. Note that for all layers $l \in {1, \dots, \lambda}$, $\sqrt{\sigma_1^{l}}$ will appear twice in the multiplication, including the first and last layers due to the leading and final terms in \eqref{eq:spectral_submult_decomposition}, leading to the expression:
\begin{align}
    \norm{\PD}{2} \leq \prod^{\lambda}_{l=1} \norm{\W^{l}}{2}\prod^{\lambda-1}_{l=1}\left(\frac{\|\SrtV{k}{l+1}\D^{l}\USrt{k}{l}\|_2}{\sqrt{\sigma_1^{l}\sigma_1^{l+1}}} + R_2(\epsilon)\right)
\end{align}
Bounding the Lipschitz constant:
\begin{align}
    L^{2}_{\Phi} &\leq \max_{\D^{1}, \dots, \D^{\lambda}} \|\PD\|_{2}  
    \label{eq:max_D_bound}\\
        &\leq \max_{\D^{1}, \dots, \D^{\lambda}} \prod^{\lambda}_{l=1} \norm{\W^{l}}{2}\prod^{\lambda-1}_{l=1}\left(\frac{\|\SrtV{k}{l+1}\D^{l}\USrt{k}{l}\|_2}{\sqrt{\sigma_1^{l}\sigma_1^{l+1}}} + R_2(\epsilon)\right)\\
        &\leq \prod^{\lambda}_{l=1} \norm{\W^{l}}{2}\prod^{\lambda-1}_{l=1}\left(\max_{\D \in \mathcal{D}}\frac{\|\SrtV{k}{l+1}\D^{l}\USrt{k}{l}\|_2}{\sqrt{\sigma_1^{l}\sigma_1^{l+1}}} + R_2(\epsilon)\right)\\
        &\leq \prod^{\lambda}_{l=1} \norm{\W^{l}}{2}\prod^{\lambda-1}_{l=1}A_2^*(\W_k^{l+1}, \W_k^{l}),
\end{align}
yielding the desired result.

% MWIS on a path graph is solved efficiently using dynamic programming. Let $dp[i]$ be the maximum weight of an independent set in the subgraph induced by vertices $\{1, \dots, i\}$. The recurrence relation is:
% \begin{align}
%     dp[i] = \max(dp[i-1], w_i + dp[i-2]) \quad \text{for } i=2, \dots, L-1.
% \end{align}
% with base cases $dp[0] = 0$ and $dp[1] = \max(0, w_1)$.
% The values $dp[i]$ for $i=1, \dots, L-1$ can be computed iteratively in $O(L)$ time. The maximum weight is $dp[L-1]$. The optimal set $S_{opt}$ corresponding to this maximum weight can be reconstructed by backtracking through the DP table in $O(L)$ time.

% Therefore, given the $L-1$ values $A_{\infty,l}^{*}$, the optimal set $S_{opt}$ (required to define $\tilde{A}_{l+1, l}$) and the value $P_{align, opt}$ can be determined in $O(L)$ time.
\end{proof}

\begin{proof}[Proof of \cref{thm:fcn_bound_pi}]
    Let $\va$ denote the adversarial perturbation on the input $\x$, where $\|\va\|_p \leq \delta$. We define the \textit{effective perturbation budget} in $\ell_p$ norm for the feature encoder $\Phi_k$ as $\delta^{\Phi_k}_p:=\max\|\Phi(x) - \Phi(x + \p)\|_p$. Note that by definition of the Lipschitz constant and by \cref{thm:lipschitz_fcn_op_full}, we have 
    \begin{align}
        \delta^{\Phi}_p =\max\|\Phi(\x) - \Phi(\x + \va)\|_p \leq \|\x - (\x + \va)\|_p L^{2}_{\Phi} \leq \|\va\|_p\tilde{L}^{2}_{\Phi} = \delta\tilde{L}^{2}_{\Phi}.
    \end{align}
  Plugging the result back into \cref{eq:adv_bound_dual_cls} yields the desired result.
\end{proof}

\begin{lemma}
    Under the conditions described in \cref{thm:lipschitz_fcn_op_full}, $R_p(\epsilon) \to 0$ as $\epsilon \to 0$ for $p \in \{2, \infty\}$. 
\end{lemma}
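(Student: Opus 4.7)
The plan is to reduce the claim to showing that, for each layer $l$, the ratios $\nu^{l}_{k_{\bnu}+1}/\nu^{l}_1$ and $\sigma^{l}_{k_{\bsigma}+1}/\sigma^{l}_1$ vanish as the relevant compressibility parameters tend to zero: every summand of $R_\infty(\epsilon)$ and $R_2(\epsilon)$ is either one of these ratios (or its square root) or a product of two such quantities. I read the statement ``$\epsilon \to 0$'' as the joint vanishing of the compressibility parameters $\epsilon_{\bnu}$ (for $R_\infty$) and $\epsilon_{\bsigma}$ (for $R_2$), with the sparsity levels $k_{\bnu}, k_{\bsigma}$ and widths $h$ held fixed.

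For $R_\infty(\epsilon)$, I would combine two elementary ingredients. First, since the entries of $\bnu^{l}$ are arranged in descending magnitude, the tail mass gives $\nu^{l}_{k_{\bnu}+1} \leq \sum_{i > k_{\bnu}} \nu^{l}_i = \epsilon_{\bnu}\|\bnu^{l}\|_1$ directly from $(1, k_{\bnu}, \epsilon_{\bnu})$-compressibility. Second, monotonicity together with \cref{thm:strict-comp} at $q=1$ gives $\nu^{l}_1 \geq \nu^{l}_{k_{\bnu}} \geq \|\bnu^{l}_{k_{\bnu}}\|_1/k_{\bnu} = (1-\epsilon_{\bnu})\|\bnu^{l}\|_1/k_{\bnu}$. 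Dividing these yields $\nu^{l}_{k_{\bnu}+1}/\nu^{l}_1 \leq k_{\bnu}\epsilon_{\bnu}/(1-\epsilon_{\bnu})$, which vanishes as $\epsilon_{\bnu} \to 0$, and the same estimate applies verbatim to layer $l+1$. The mixed term is then a product of two vanishing ratios, so $R_\infty(\epsilon) \to 0$.

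The case $R_2(\epsilon)$ is structurally identical with the singular-value vector $\bsigma^{l}$ in place of $\bnu^{l}$ and $\epsilon_{\bsigma}$ in place of $\epsilon_{\bnu}$, producing $\sigma^{l}_{k_{\bsigma}+1}/\sigma^{l}_1 \leq k_{\bsigma}\epsilon_{\bsigma}/(1-\epsilon_{\bsigma}) \to 0$. The three summands of $R_2(\epsilon)$ are then continuous functions of these vanishing ratios (square roots, and a product of square roots), so $R_2(\epsilon) \to 0$ by continuity of $\sqrt{\cdot}$ at $0$. The only obstacle I foresee is notational: the first summand in the definition of $R_2$ is written with subscript $k$ rather than $k+1$, which I would treat as a typo consistent with the rest of the decomposition; if read literally as $\sigma^{l}_{k_{\bsigma}}$, the same conclusion still holds provided the spread $\beta_{\bsigma}$ also vanishes, which is implicit in the intended asymptotic regime.
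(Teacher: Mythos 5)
Your argument is sound and in fact more self-contained than the paper's own proof on one point. Both proofs start from the same tail-mass observation, $\nu^l_{k+1}\le\sum_{i>k}\nu^l_i\le\epsilon\|\bnu^l\|_1$ (and its singular-value analogue), but the paper stops after bounding the \emph{numerators} ($\nu^l_{k+1}\le\epsilon h\|\W^l\|_F$, $\sigma^l_{k+1}\le\epsilon\sqrt{h}\|\W^l\|_F$) and leaves implicit that the denominators $\nu^l_1,\sigma^l_1$ remain bounded away from zero relative to $\|\W^l\|_F$ as $\epsilon\to0$. You instead lower-bound the denominator via compressibility and divide, obtaining the scale-invariant estimate $\nu^l_{k+1}/\nu^l_1\le k\epsilon/(1-\epsilon)$; this directly controls the ratios appearing in $R_p(\epsilon)$ and is the cleaner way to close the argument, since compressibility itself is scale-invariant. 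Your handling of the $\sigma^l_{k}$ versus $\sigma^l_{k+1}$ subscript in the first summand of $R_2$ is also the right call.

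One intermediate step is stated in the wrong direction: you write $\nu^l_{k_{\bnu}}\ge\|\bnu^l_{k_{\bnu}}\|_1/k_{\bnu}$, but the minimum of the top-$k$ entries is at most their average, not at least (take $\bnu=(10,2)$, $k=2$). The chain still closes because what you actually need is $\nu^l_1\ge\|\bnu^l_{k_{\bnu}}\|_1/k_{\bnu}$, which holds directly since the maximum dominates the average of the top-$k$ entries; combined with \cref{thm:strict-comp} at $q=1$ this gives $\nu^l_1\ge(1-\epsilon_{\bnu})\|\bnu^l\|_1/k_{\bnu}$ as you claim. Drop the middle term $\nu^l_{k_{\bnu}}$ from the chain and the proof is correct as written, for both $p=\infty$ and $p=2$.
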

\begin{proof}
$p=\infty$: Due to the definition of compressibility, for all $l \in [\lambda]$,
\begin{align}
    \norm{\bnu^l - \bnu^l_k}{1} &\leq \epsilon\norm{\bnu^l}{1}\\
                          \nu^l_{k+1} &\leq \epsilon h \norm{\W^l}{F},
\end{align}
by applying standard norm inequalities across rows and columns. The result follows from noting that the final inequality applies to both $\nu^l_{k+1}$ and $\nu^{l+1}_{k+1}$.

$p=2$: Similarly, due to the definition of compressibility, for all $l \in [\lambda]$,
\begin{align}
    \norm{\bsigma^l - \bsigma^l_k}{1} &\leq \epsilon\norm{\bsigma^l}{1}\\
                          \sigma^l_{k+1} &\leq \epsilon \sqrt{h} \norm{\W^l}{F},
\end{align}
since $\norm{\bsigma^l}{2} = \norm{W^l}{F}$. The result follows from noting that the final inequality applies to both $\sigma^l_{k+1}$ and $\sigma^{l+1}_{k+1}$.
\end{proof}

\begin{lemma}
    Under the conditions described in \cref{thm:lipschitz_fcn_op_full}, $A_p^*(\W^{l+1}, \W^{l}) \leq 1$ for $p \in \{2, \infty\}$. 
\end{lemma}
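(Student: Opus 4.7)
My plan is to establish the bound $A_p^*\le 1$ by combining two ingredients: (i)~submultiplicativity of the operator norm together with $\|\D\|_p\le 1$ for any binary diagonal $\D$, and (ii)~the observation that hard truncation to the top-$k$ rows (resp.\ singular values) preserves the largest row $\ell_1$-norm (resp.\ the largest singular value), i.e.\ $\|\W_k^{l}\|_\infty = \nu_1^l = \|\W^{l}\|_\infty$ and $\|\W_k^{l}\|_2 = \sigma_1^l = \|\W^{l}\|_2$.

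For $p=\infty$, I would first bound the main ratio: submultiplicativity gives $\|\W_k^{l+1}\D\W_k^{l}\|_\infty \le \|\W_k^{l+1}\|_\infty\,\|\D\|_\infty\,\|\W_k^{l}\|_\infty \le \|\W^{l+1}\|_\infty \|\W^{l}\|_\infty$ using the two ingredients above, so that
\begin{equation*}
\max_{\D\in\mathcal{D}} \frac{\|\W_k^{l+1}\D\W_k^{l}\|_\infty}{\|\W^{l+1}\|_\infty\,\|\W^{l}\|_\infty}\;\le\; 1.
\end{equation*}
For $p=2$, the parallel argument runs on the SVD factors: $U_k^{l}$ and $V_k^{l+1}$ are submatrices of orthogonal matrices and hence have spectral norm at most one, $\|\D\|_2\le 1$, and $\|\sqrt{\Sigma_k^{l}}\|_2 = \sqrt{\sigma_1^l} = \sqrt{\|\W^{l}\|_2}$, so submultiplicativity pushes the numerator $\|\sqrt{\Sigma_k^{l+1}}(V_k^{l+1})^\top \D\, U_k^{l}\sqrt{\Sigma_k^{l}}\|_2$ below $\sqrt{\|\W^{l+1}\|_2\|\W^{l}\|_2}$, matching the denominator.

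The main obstacle is the remainder $R_p(\epsilon)$: each of its three summands is a ratio of a trailing to a dominant norm and thus individually lies in $[0,1]$, so the nominal sum defining $A_p^*$ can in principle exceed $1$ when $\epsilon$ is not small. To close this I would invoke the trivial submultiplicative ceiling $\|\W^{l+1}\D\W^{l}\|_p \le \|\W^{l+1}\|_p\,\|\W^{l}\|_p$, which implies that the bound of \cref{thm:lipschitz_fcn_op_full} remains valid after replacing every $A_p^*$ by $\min(A_p^*,1)$; hence without loss one may take $A_p^*\le 1$, which is the interpretation consistent with the main text's description of $A_p^*$ as a normalized correction factor. With this ceiling in place, the proof collapses to the one-line submultiplicativity argument above in each of the two cases.
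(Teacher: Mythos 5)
Your core argument for the main ratio is exactly the paper's: submultiplicativity, $\|\D\|_p\le 1$ for binary diagonal $\D$, and the fact that top-$k$ truncation preserves the largest row $\ell_1$-norm (resp.\ largest singular value), so the numerator is capped by the denominator; the $p=2$ case via the orthogonal-submatrix factors is likewise the argument the paper invokes. Where you genuinely diverge is in the treatment of $R_p(\epsilon)$, and here you are more careful than the paper itself: the paper's proof silently drops the remainder (and writes the numerator with the uncompressed $\W^{l+1},\W^{l}$ rather than $\W_k^{l+1},\W_k^{l}$), so it proves the bound only for the bare ratio, not for $A_p^*$ as defined in \cref{eq:interlayer_alignment_inf,eq:interlayer_alignment_2}. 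You are right that the literal statement can fail when $\epsilon$ is not small --- e.g.\ for $\W^{l}=\W^{l+1}=\mathbf{I}$ every ratio in $R_\infty$ equals $1$, giving $A_\infty^*=1+3=4$. Your repair via the ceiling $\min(A_p^*,1)$ is sound, because the trivial bound $\|\W^{l+1}\D\W^{l}\|_p\le\|\W^{l+1}\|_p\|\W^{l}\|_p$ means \cref{thm:lipschitz_fcn_op_full} remains valid with the capped factor, and this matches the intended role of $A_p^*$ as a normalized correction. The only caveat is that, strictly speaking, you prove a (correct) modified statement rather than the lemma as literally written; making the cap explicit in the definition, or adding the hypothesis that $\epsilon$ is small enough that the ratio-plus-remainder stays below $1$, would be the clean way to reconcile the two.
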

\begin{proof}
    For $p=\infty$,
    \begin{align}
        A_\infty^*(\W^{l+1}, \W^{l}) &= \max_{\D\in\mathcal{D}} \frac{\|\mathbf{W}^{l+1} \mathbf{D} \mathbf{W}^{l}\|_{\infty}}{\|\mathbf{W}^{l+1}\|_\infty \|\mathbf{W}^{l}\|_\infty}\\
                                     &\leq \frac{\|\mathbf{W}^{l+1}\|_\infty \max_{\D\in\mathcal{D}} \|\mathbf{D}\|_\infty \|\mathbf{W}^{l}\|_{\infty}}{\|\mathbf{W}^{l+1}\|_\infty \|\mathbf{W}^{l}\|_\infty}\\
                                     &\leq \frac{\|\mathbf{W}^{l+1}\|_\infty  \|\mathbf{W}^{l}\|_{\infty}}{\|\mathbf{W}^{l+1}\|_\infty \|\mathbf{W}^{l}\|_\infty} = 1.
    \end{align}
    The proof follows identically for $p=2$.
\end{proof}

\section{Additional Technical Results and Analyses}
\label{apx:further_technical_results}
\subsection{\qke-compressibility vs. other notions of approximate sparsity}

\paragraph{Further discussion of \qke-compressibility} 
%Although \qke-compressibility characterizes the dominance of $k$ terms in a given vector, our analyses reveal another variable that provides a crucial nuance. For example, while $\x_1 = (10, 2, 1, 1)$ and $\x_2 = (6, 6, 1, 1)$ have equal 2-term relative approximation error under $q=1$, their dominant terms display a contrasting pattern. To capture this intuition we introduce the \textit{spread variable} for our analysis. Assuming magnitude-ordered elements of a vector $\param$, the \textit{spread} $\slack \in [0, 1]$ is defined as $|\theta_{k}|= (1-\slack)|\theta_{1}|$. $\slack$ represents an independent degree of freedom in characterizing compressibility, and can be used to help describe known compressible distributions. For example, while i.i.d. samples from a heavy-tailed distribution would comprise a high dominance, high spread vector, samples from a spike-and-slab prior would comprise a high dominance but low spread vector.

  Our concept of compressibility can be thought of as the generalization of \textit{sparsity}, with the obvious advantage of being applicable to domains where true sparsity is rare, such as neural network parameter values. Note that our intuitive definition of compressibility is based on foundational results in compressed sensing and is well exploited in the established machine learning literature~\citep{aminiCompressibilityDeterministic2011,gribonvalCompressibleDistributions2012,barsbeyHeavyTails2021,  diaoPruningDeep2023, wanImplicitCompressibility2024}. More specifically, when $k\ll d$ and $\epsilon \ll 1$, Definition \ref{def:compressibility} is equivalent to \cite{gribonvalCompressibleDistributions2012}'s definition of \textit{compressible vector}. Inspired by desiderata from an ideal metric of sparsity in the economics literature, \cite{diaoPruningDeep2023} recently introduced another scale-invariant notion of approximate sparsity:
\begin{definition}[PQ Index \cite{diaoPruningDeep2023}]
	For any $0 < p < q$, the PQ Index of a non-zero vector $\mathbf{w} \in \mathbb{R}^d$ is 
	\begin{equation}
		\label{eq:pq_index}
		I_{p,q}(\mathbf{w}) = 1 - d^{\frac{1}{q} - \frac{1}{p}} \frac{\lVert \mathbf{w} \rVert_p}{\lVert \mathbf{w} \rVert_q}.
	\end{equation}
\end{definition}

Interestingly, it is possible to directly relate this notion of sparsity to \qke-compressibility, as shown in the following proposition.

\begin{proposition}
	\label{thm:comp_vs_pqi}
	Given $0 < p < q$, for a vector $\param$, its $(q,k,\epsilon)$ compressibility implies the following lower bound for its PQ Index:
	\begin{equation}
		\label{eq:comp_pqi}
		1 - \epsilon - \kappa^\phi \leq I_{p,q}(\param),
	\end{equation}
	where $\kappa = k/d$ and $\phi = \frac{1}{p}-\frac{1}{q}$. Note that the constraints on $p, q$ imply $\phi > 0$.
\end{proposition}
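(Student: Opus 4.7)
The plan is to reduce the claimed lower bound on $I_{p,q}(\param)$ to an upper bound on the ratio $\|\param\|_p / \|\param\|_q$, and then obtain that upper bound directly from \cref{thm:ps_lt_q}. Setting $\phi = 1/p - 1/q > 0$ and $\kappa = k/d$, the desired inequality $1 - \epsilon - \kappa^\phi \leq I_{p,q}(\param)$ is equivalent, via the definition of the PQ index, to
\begin{equation*}
\frac{\|\param\|_p}{\|\param\|_q} \;\leq\; d^\phi \bigl(\epsilon + \kappa^\phi\bigr) \;=\; \epsilon\, d^\phi + k^\phi,
\end{equation*}
so it suffices to establish this last inequality.

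Next, I would apply \cref{thm:ps_lt_q} with $p^* = p$ (noting $p < q$ so the hypothesis holds), which gives
\begin{equation*}
\|\param\|_p \;\leq\; k^{1/p - 1/q} \|\param_k\|_q + d^{1/p - 1/q} \epsilon \|\param\|_q \;=\; k^\phi \|\param_k\|_q + \epsilon\, d^\phi \|\param\|_q.
\end{equation*}
Since $\param_k$ retains only the top-$k$ entries (in magnitude) of $\param$, the monotonicity of the $\ell_q$ norm yields $\|\param_k\|_q \leq \|\param\|_q$. Substituting this in gives $\|\param\|_p \leq (k^\phi + \epsilon\, d^\phi) \|\param\|_q$, which is precisely the required upper bound on the ratio.

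Finally, rearranging back through the definition $I_{p,q}(\param) = 1 - d^{1/q - 1/p} \|\param\|_p/\|\param\|_q$ and multiplying by $d^{-\phi}$ gives the claimed bound $1 - \epsilon - \kappa^\phi \leq I_{p,q}(\param)$. The main obstacle is really only bookkeeping: keeping the exponents $1/p - 1/q$, $1/q - 1/p$, and $\phi$ straight, and invoking \cref{thm:ps_lt_q} with the correct choice of $p^*$. No new inequalities or technical machinery are needed beyond that lemma and the trivial monotonicity $\|\param_k\|_q \leq \|\param\|_q$.
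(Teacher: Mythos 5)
Your proof is correct and follows essentially the same route as the paper's: both reduce the claim to the bound $\|\param\|_p/\|\param\|_q \leq k^\phi + \epsilon d^\phi$ obtained from \cref{thm:ps_lt_q} and then rearrange via the definition of the PQ index. If anything, you make explicit the step $\|\param_k\|_q \leq \|\param\|_q$ that the paper applies silently.
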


\begin{proof}
	Let $\gamma = \frac{1}{p}-\frac{1}{q}$. Note that from \eqref{eq:p_smaller_q_proof} we know that $\|\param\|_{p} \leq \left( k^\gamma + d^\gamma\epsilon \right) \|\param\|_q$. This implies
	\begin{equation}
		\label{eq:pq_proof}
		\frac{\|\param\|_{p}}{\|\param\|_q} \leq k^\gamma + d^\gamma\epsilon.
	\end{equation}
	Note that PQ Index from \eqref{eq:pq_index} can be written as $(1 - I_{p,q}(\param))d^\gamma = \frac{\|\param\|_p}{\|\param\|_q}$.
	Plugging this into the LHS of \eqref{eq:pq_proof} and simple algebraic manipulation gives the desired result.
\end{proof} 
 
\begin{remark}
	Assume that $\param$ and $\param'$ are  $(q,k,\epsilon)$ and $(q,k',\epsilon')$ compressible respectively. If $k = k'$ and $\epsilon < \epsilon'$; or $k < k'$ and $\epsilon = \epsilon'$ implies a larger lower bound on PQI. That is, a larger $(q,k,\epsilon)$ compressibility suggests a larger PQI.
\end{remark}

\paragraph{Dominance vs. spread} While \qke\hspace{-2pt}-compressibility quantifies how well a vector can be approximated using its top-$k$ entries (\textit{e.g.} top-$k$ filters or singular values), it does not fully capture the internal structure among those dominant terms. Consider the vectors $\x_1 = (10, 2, 1, 1)$ and $\x_2 = (6, 6, 1, 1)$: both yield the same 2-term relative approximation error under $q = 1$, yet their dominant components differ markedly in structure. To formalize this distinction, we introduce the \textbf{spread variable} as a complementary descriptor. Given a vector $\param$ with elements sorted by magnitude, we define its \textit{spread} $\slack \in [0, 1]$ via the relation $|\theta_k| = (1 - \slack)|\theta_1|$. Intuitively, $\slack$ quantifies the relative decay from the largest to the $k$-th largest entry, capturing an additional degree of freedom in the geometry of compressibility, better describing and distinguishing compressible distributions beyond what is possible with approximation error alone.

\subsection{Lower bounds on operator norms}
The following theorem characterizes the compressibility-based lower bounds of operator norms, complementing the upper bounds presented in the main paper.
\begin{theorem}
\label{thm:op_norm_rels_lower}
The following statements lower bound operator norms using compressibility and Frobenius norm. 
\begin{enumerate}[noitemsep,leftmargin=*,topsep=0em,align=left]
    \item [\textbf{(a) Neuron compressibility (i.e. row-sparsity):}] Let $\w_i, i\in[h]$ denote the rows of the matrix $\W$, and  let $\bnu := (\norm{\w_1}{1}, \dots, \norm{\w_h}{1})$ denote $\ell_1$ norms of its rows. Assuming $\bnu$ is $(1, k_{\bnu}, \epsilon_{\bnu})$  and each row $\w_i$ is $(2,k_r,\epsilon_r)$  compressible implies:
\begin{align}
\label{eq:opnorm_lower_bound_across_rows}
      \left(\frac{\sqrt{k_r}}{\sqrt{k_r (1-\epsilon_r^2) } + \sqrt{\epsilon_r}}\right)\frac{(1-\epsilon_{\bnu})}{k_{\bnu}}\norm{\W}{F} \leq \norm{\W}{\infty}.
\end{align}

\item [\textbf{(b) Spectral compressibility (i.e. low-rankness):}] Let $\bsigma:=(\sigma_1, \sigma_2, \dots)$ denote the singular values of matrix $\W$. Assuming  $\bsigma$ is $(1, k_{\bsigma}, \epsilon_{\bsigma})$ compressible implies:
\begin{align}
\label{eq:opnorm_lower_bound_spectral}
	 \sqrt{\frac{(1-h\epsilon^2_{\bsigma})}{k_{\bsigma}}}\norm{\W}{F} \leq \norm{\W}{2}.
     % \frac{(1-\epsilon_{\bsigma)}}{k_{\bsigma}}\norm{\W}{F} \leq \norm{\W}{2}.
\end{align} 
\end{enumerate}
\end{theorem}

\begin{proof}
For \textbf{(a)} note that $\norm{\W}{\infty} = \norm{\bnu}{\infty}$. Note that the minimum value this value can take is $\norm{\bnu_k}{1}/k_{\bnu}$. By the definition of strict compressibility, we know that $\norm{\bnu_k}{1} = (1-\epsilon)\norm{\bnu}{1}$. This gives us the inequality: 
\begin{align}
      \frac{(1-\epsilon_{\bnu})}{k_{\bnu}}\norm{\bnu}{1} \leq \norm{\W}{\infty}.
\end{align}
We then turn to the components of $\bnu$, and examine the relationship between $\norm{\w}{2}$ and $\norm{\w}{1}$ for any row $\w$. We will use $\w_k$, $\w_r$ to refer to the dominant and remainder terms of $\w$ respectively. We invoke Minkowski's inequality:
	\begin{align}
		\|\w\|_{2} \leq \|\w_k\|_{2} + \|\w_r\|_{2}.
	\end{align} We bound the leftmost term by $\|\w_k\|_{2} \leq \sqrt{1-\epsilon_r^2}\|\w\|_{2} \leq \sqrt{1-\epsilon_r^2}\|\w\|_{1}$ due to Lemma A.1. For the term $\|\w_r\|_{2}$, we observe that due to interpolation inequality:	
	\begin{align}
	    \norm{\w_r}{2} \leq \norm{\w_r}{1}^{\frac{1}{2}}\norm{\w_r}{\infty}^{\frac{1}{2}}.
	\end{align}
	Examining $\norm{\w_r}{\infty}$, we note that the maximum magnitude $\w_r$ can contain is less than or equal to the maximum value the lowest magnitude element of $\w_k$ can take. This is the case when all elements of $\w_k$ are equal, %which is the lower bound of $\norm{\w_k}{\infty}$. Standard norm inequality implies that $\norm{\w_k}{1}/k\leq \norm{\w_k}{\infty}$, 
    therefore $\norm{\w_r}{\infty} \leq \norm{\w_k}{1}/k$. Using this, the fact that $\|\w_k\|_1 \leq \|\w\|_1$, and that $\norm{\w_r}{1} \leq \epsilon \norm{\w}{1}$ by compressibility definition, we can write:
	\begin{align*}
		\norm{\w_r}{2} \leq \norm{\w_r}{1}^{\frac{1}{2}}\norm{\w_r}{\infty}^{\frac{1}{2}} 
		\leq \epsilon^{\frac{1}{2}} \norm{\w}{1}^{\frac{1}{2}}\left(\frac{\norm{\w}{1}}{k}\right)^{\frac{1}{2}} 
		\leq \frac{\sqrt{\epsilon}}{\sqrt{k}} \norm{\w}{1},
	\end{align*}
	Plugging this back into the additive decomposition of $\norm{\w}{2}$ above, we have:
    \begin{align}
        \frac{\sqrt{k}}{\sqrt{k (1-\epsilon^2) } + \sqrt{\epsilon}}\norm{\w}{2}\leq \norm{\w}{1}.
    \end{align}
    Let $\hat{\bnu}$ denote the $\ell_2$ norms of $\W$'s rows. Then, plugging this back to the main inequality:
    \begin{align}
       \norm{\W}{\infty} & \geq \frac{(1-\epsilon_{\bnu})}{k_{\bnu}}\norm{\bnu}{1} .\\
    &\geq \frac{\sqrt{k}}{\sqrt{k (1-\epsilon^2) } + \sqrt{\epsilon}}\frac{(1-\epsilon_{\bnu})}{k_{\bnu}}\norm{\hat{\bnu}}{1} \\
      &\geq \frac{\sqrt{k}}{\sqrt{k (1-\epsilon^2) } + \sqrt{\epsilon}}\frac{(1-\epsilon_{\bnu})}{k_{\bnu}}\norm{\hat{\bnu}}{2} \\
      &\geq\frac{\sqrt{k}}{\sqrt{k (1-\epsilon^2) } + \sqrt{\epsilon}}\frac{(1-\epsilon_{\bnu})}{k_{\bnu}}\norm{\W}{F} 
    \end{align}
which gives use the desired inequality.

For \textbf{(b)}, we will use $\bsigma_k$, $\bsigma_r$ to refer to the dominant and remainder terms of $\bsigma$ respectively. Note that $\norm{\W}{F}^2 = \norm{\bsigma}{2}^2 = \norm{\bsigma_k}{2}^2 + \norm{\bsigma_r}{2}^2$. We bound the norm of the dominant singular values by $\|\bsigma_k\|_2^2 \le k \bsigma_1^2 = k\norm{\W}{2}^2$. We bound the remainder singular values by noting that 
\begin{align}
    \norm{\bsigma_r}{2}^2 \le (\norm{\bsigma_r}{1})^2 \le (\epsilon_{\bsigma} \norm{\bsigma}{1})^2 \le \epsilon_{\bsigma}^2 (\sqrt{h} \norm{\bsigma}{2})^2 = h\epsilon_{\bsigma}^2 \norm{\W}{F}^2.
\end{align}
This gives us the inequality:
\begin{align}
\norm{\W}{F}^2 \leq k\norm{\W}{2}^2 + h\epsilon_{\bsigma}^2 \norm{\W}{F}^2.
\end{align}
Rearranging the terms gives the desired lower bound.
\end{proof}

\subsection{Relationships between operator norms} 
Although \cref{thm:op_norm_rels} directly relates $\ell_\infty$ and $\ell_2$ operator norms to neuron and spectral compressibility, both the known norm inequality relationships and our results on cross-norm adversarial attacks imply that these two quantities are likely to be strongly correlated under this context. We conduct simple experiment to test this hypothesis: We optimize for either $\ell_\infty$ or $\ell_2$ operator norm of a random i.i.d. Gaussian matrix $\A$ where $A_{i,j}\overset{\mathrm{i.i.d.}}{\sim} \mathcal{N}(0, 1)$. We then conduct a gradient ascent-based optimization of the matrix's either $\ell_\infty$ or $\ell_2$ operator norms, while normalizing the Frobenius norm to its initialization value. In \cref{fig:op_norms}, as an average of 10 random seeds, we show how $\ell_\infty$ and $\ell_2$ evolve while either $\ell_\infty$ (top) and $\ell_2$ (bottom) are optimized. We note that in both case both norms are strongly associated in increasing simultaneously. Note that given the inequality $\norm{\A}{2} \leq \norm{\A}{F}$, by the end of optimization the spectral norm reaches its limit in Frobenius norm. While the left column shows the norms across iterations, center and right columns portray the qualitative differences produced by optimizing for either columns. As expected, optimizing for $\ell_\infty$ collects all energy in a single row, while optimizing for $\ell_2$ produces a 1-rank matrix.

\begin{figure}[t]
	\centering%		
\begin{subfigure}[b]{1.0\linewidth}
	\centering
	\includegraphics[width=\linewidth]{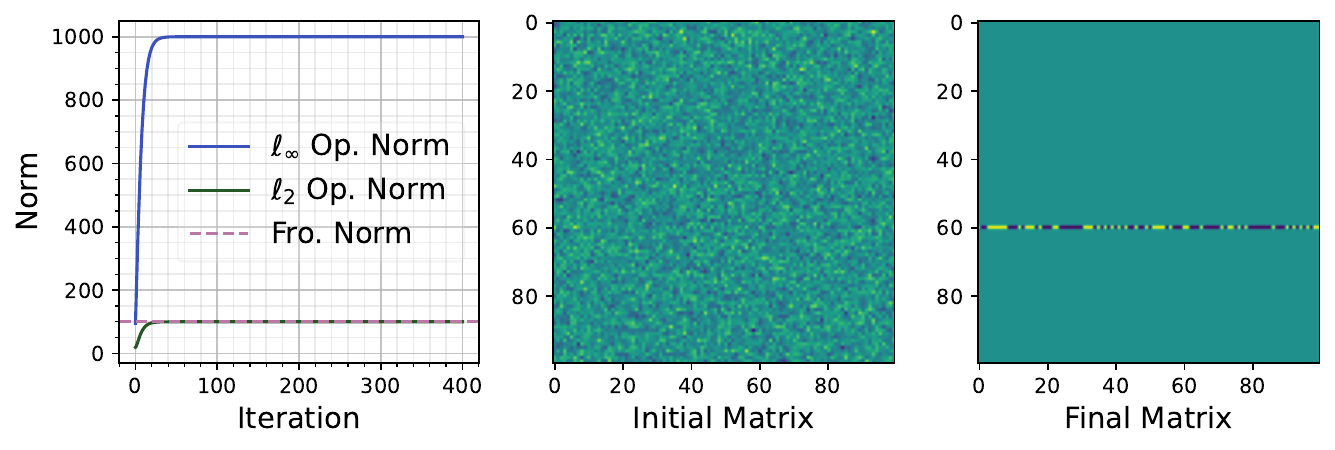}
	% \caption{Row sparsity reg. with FCN and CIFAR-10}
	% \label{fig:l2_tm}
\end{subfigure}

\begin{subfigure}[b]{1.0\linewidth}
	\centering
	\includegraphics[width=\linewidth]{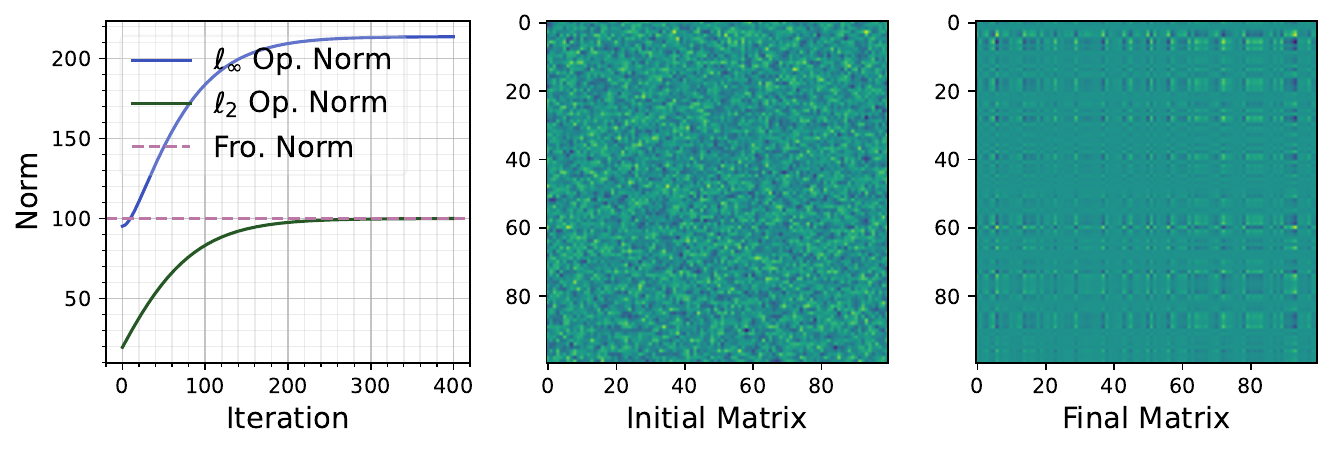}
	% \caption{Low-rank Fact. with FCN and CIFAR-10}
	% \label{fig:l2_tm}
\end{subfigure}
\caption{Optimizing for $\ell_\infty$ (top) and $\ell_2$ (bottom) operator norms.}
\label{fig:op_norms}
\end{figure}

\begin{figure}[t]
	\centering%		
\begin{subfigure}[b]{0.32\linewidth}
	\centering
	\includegraphics[width=\linewidth]{figures/main_layer_rank_theory_bound_layer_rank_ra.pdf}
	% \caption{Row sparsity reg. with FCN and CIFAR-10}
	% \label{fig:l2_tm}
\end{subfigure}
\begin{subfigure}[b]{0.305\linewidth}
	\centering
	\includegraphics[width=\linewidth]{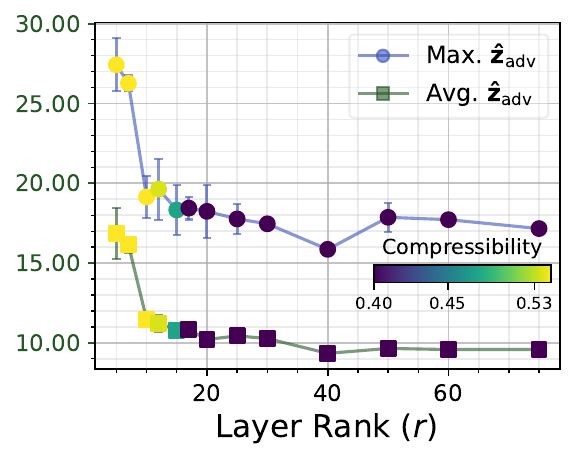}
	% \caption{Low-rank Fact. with FCN and CIFAR-10}
	% \label{fig:l2_tm}
\end{subfigure}
\begin{subfigure}[b]{0.335\linewidth}
	\centering
	\includegraphics[width=\linewidth]{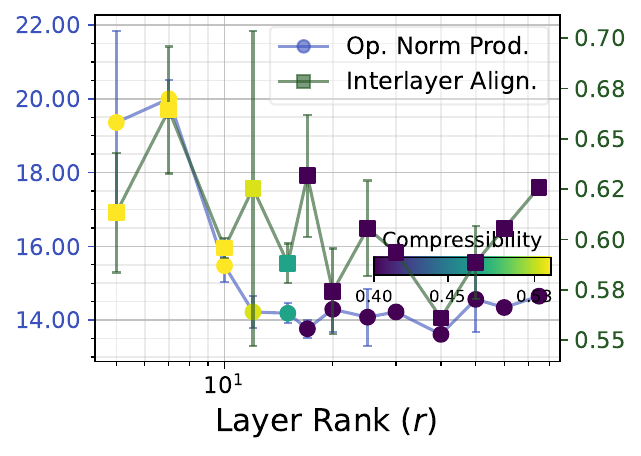}
	% \caption{Low-rank Fact. with FCN and CIFAR-10}
	% \label{fig:l2_tm}
\end{subfigure}
\caption{Empirically investigating  the implications of \cref{thm:lipschitz_fcn_op_full}.}
\label{fig:bound}
\end{figure}

\subsection{Empirical analyses of the robustness bound and related quantities}
In this section, we directly investigate how well our bound correlates with the adversarial robustness gap, as predicted in \cref{thm:fcn_bound_pi}. In order to fully conform to the setting of \cref{thm:fcn_bound_pi}, we convert the previously introduced MNIST dataset to a binary classification task by converting its labels to 0-1, by assigning 0-4 to class 0 and 5-9 to class 1. We create a fully connected network (FCN) with two hidden layers of width 300, with ReLU activations after each layer. We then create networks with various spectral compressibility through varying the rank of the hidden layers, imposed through low-rank factorization. While computing the bound, we determine $k$ (num. dominant terms), and compute $\epsilon$ and $\beta$ as statistics. Note that if $\beta = 1$, this would make the bound undefined - however, instead of being a numerical problem, this implies that $k$ should be selected lower, as dominant terms including $0$ is an undesired corner case. \cref{fig:bound} demonstrates the results of our experiment. First, \cref{fig:bound} (left) shows that our bound is closely correlated with adversarial robustness gap. This shows that although our bound is an order of magnitude above the empirical loss difference, it is still a faithful indicator of adversarial robustness.

We then investigate whether local input sensitivity of the network tracks its global properties. As in the main paper, letting $\z=\Phi(\x)$ and $\z_{\mathrm{adv}}=\Phi(\x + \va^*)$ denote the learned representations of clean and perturbed input images, we compute $\|\z - \z_{\mathrm{adv}}\|_2/\|\va^*\|_2$ for 1000 test samples. We take this metric as a secant approximation of the local Lipschitz constant around input $\x$. We then use the maximum and the mean of this statistic over the samples as \textit{empirical lower bounds} to the global and expected local Lipschitz constants respectively. \cref{fig:bound} (center) shows that these two values are closely correlated: An increase in the maximum sensitivity to perturbation is reflected in a similar increase in the average sensitivity. Lastly, \cref{fig:bound} (right) investigates the effect of spectral compressibility on interlayer alignment, in parallel to product of spectral norms of the layers (to quantify the intra- vs. interlayer dynamics in our bound). Results show that while norms increase as expected, interlayer alignment does not necessarily portray a consistent pattern. We consider how and why interlayer alignment changes in response to various compressibility inducing sparsity and training dynamics to be a crucial future research direction.

\begin{figure}[t]
%\begin{figure}
\centering
% \vspace{-1.5em}
\begin{subfigure}[b]{0.30\textwidth}
	\centering
	\includegraphics[width=\linewidth]{figures/at_lowrank.pdf}
	% \caption{Low-rank Fact. with FCN and CIFAR-10}
	% \label{fig:l2_tm}
\end{subfigure}
\hfill
\begin{subfigure}[b]{0.32\textwidth}
	\centering
	\includegraphics[width=\linewidth]{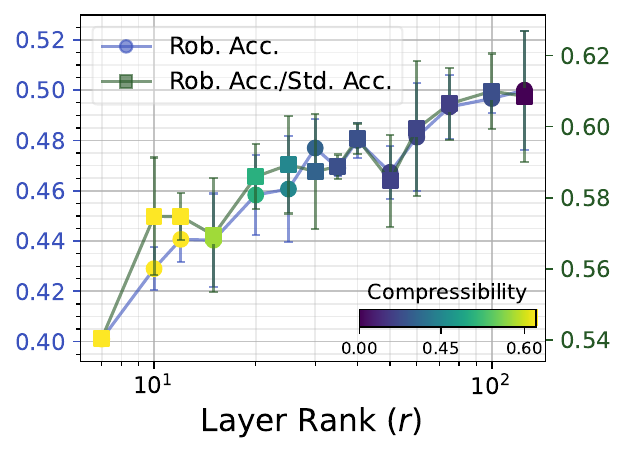}
	% \caption{Row sparsity reg. with FCN and CIFAR-10}
	% \label{fig:l2_tm}
\end{subfigure}
\begin{subfigure}[b]{0.32\textwidth}
	\centering
	\includegraphics[width=\linewidth]{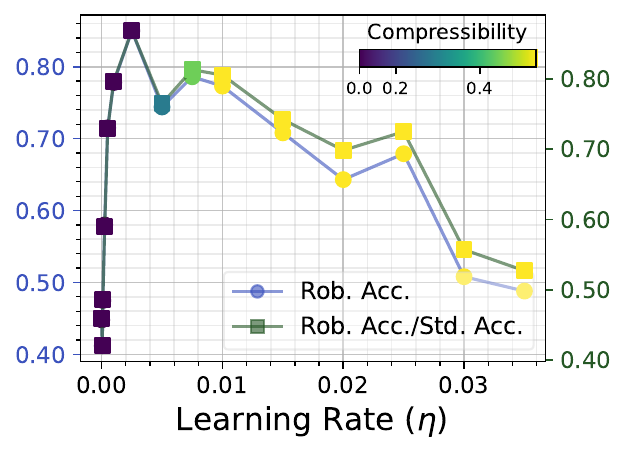}
	% \caption{Low-rank Fact. with FCN and CIFAR-10}
	% \label{fig:l2_tm}
\end{subfigure}
% \vspace{-2mm}
\caption{Adversarial fine-tuning (left) and training (center). Robust accuracy under increasing learning rate (right).}
  \label{fig:adv_training_spectral_comp_lr}
  % \vspace{1em}
\end{figure}

\subsection{Approximating the interlayer alignment terms}
\begin{wrapfigure}[12]{r}{0.35\textwidth}
\centering
\vspace{-1em}
%\begin{subfigure}[b]{0.35\textwidth}
%	\centering
	\includegraphics[width=\linewidth]{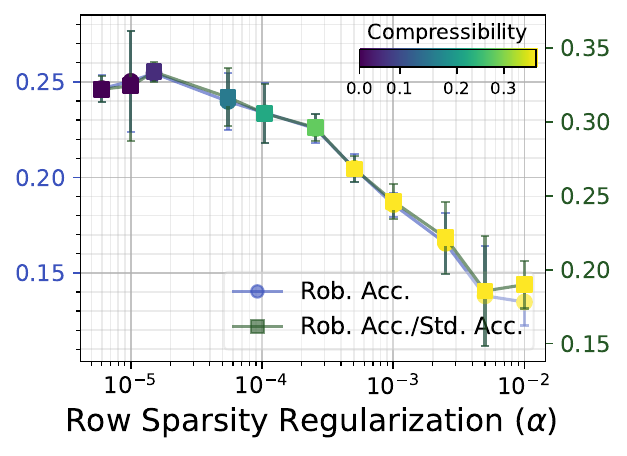}
	% \caption{Low-rank Fact. with FCN and CIFAR-10}
	% \label{fig:l2_tm}
%\end{subfigure}
\vspace{-4mm}
\caption{Effects of standard group lasso on compressibility and adversarial robustness.}
  \label{fig:group_lasso_tug_of_war}
  \vspace{1em}
\end{wrapfigure} 
Note that the interlayer alignment terms used in \cref{thm:lipschitz_fcn_op_full} lead to a combinatorial optimization problem due to the discreteness of ReLU gradients, i.e. $\{0, 1\}$. A closely related precedent from the literature is SeqLip by \cite{scamanLipschitzRegularity2018}, with the differences relating to the normalization of the terms, and the $k$-term adaptation. However, since these differences do not lead to any changes with respect to the optimization of these terms (\ie their maxima), the authors' approximation methodology is an attractive choice for determining $A^*_p$. \cite{scamanLipschitzRegularity2018} report that their gradient-ascent based greedy search algorithm is in $\sim1\%$ of the analytical solution for cases where the latter is computationally feasible. We adopt their solution to our case for both interlayer alignment terms.
\newpage
\section{Details of the Experimental Settings}
\label{apx:exp_setting}

\subsection{Datasets}

Our experiments are conducted using the most commonly utilized datasets and architectures in research on adversarial robustness under pruning \citep{pirasAdversarialPruning2024}.
\begin{wrapfigure}[18]{r}{0.58\textwidth}
\centering
\vspace{-.5cm}
\begin{subfigure}[b]{0.285\textwidth}
	\centering
	\includegraphics[width=\linewidth]{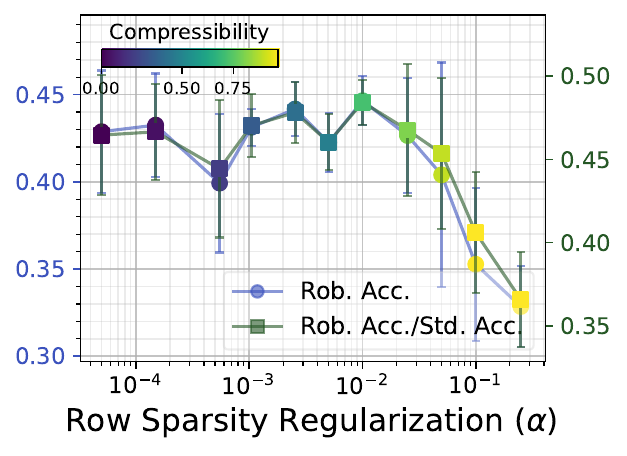}
\end{subfigure}
\hfill
\begin{subfigure}[b]{0.285\textwidth}
	\centering
	\includegraphics[width=\linewidth]{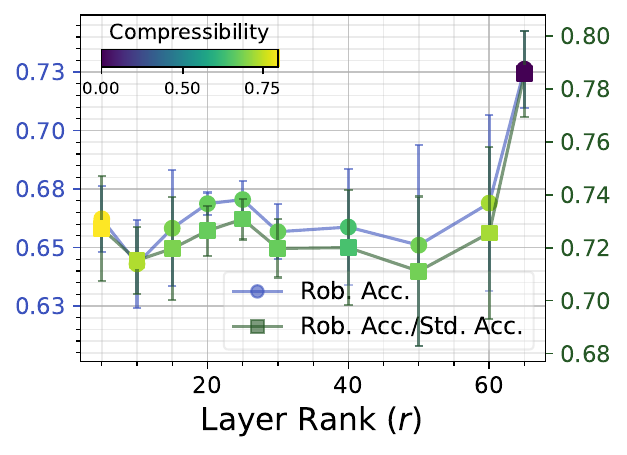}
\end{subfigure}
\begin{subfigure}[b]{0.285\textwidth}
	\centering
	\includegraphics[width=\linewidth]{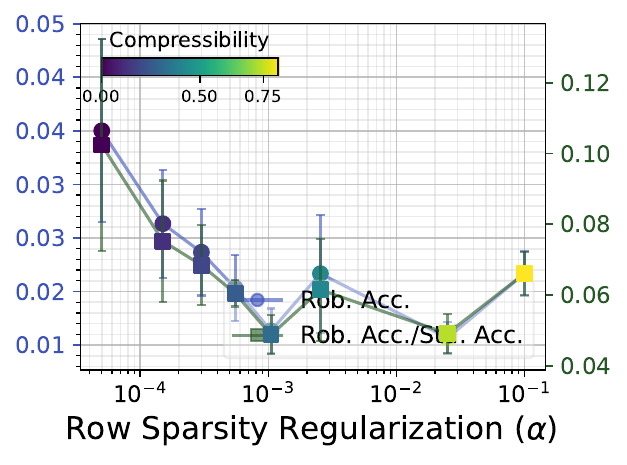}
\end{subfigure}
\hfill
\begin{subfigure}[b]{0.285\textwidth}
	\centering
	\includegraphics[width=\linewidth]{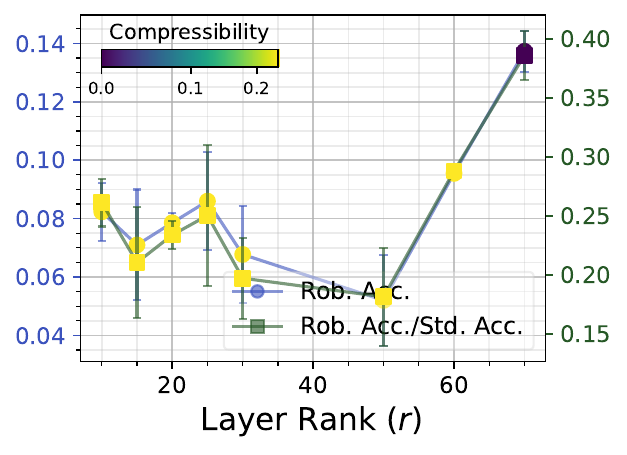}
\end{subfigure}
\caption{Results with SVHN \& Wide ResNet 101-2 (top), CIFAR-100 \&  VGG16 (bottom).}
  \label{fig:apx_robcompstd}
\end{wrapfigure} 
Our datasets include MNIST \citep{dengMNISTDatabase2012}, CIFAR-10, CIFAR-100 \citep{krizhevskyLearningMultiple2009}, SVHN \citep{netzerReadingDigits2011}, Flickr30k \citep{youngImageDescriptions2014}, and ImageNet-1k \citep{dengImageNetLargescale2009a}. As detailed in \cref{apx:further_technical_results}, we convert MNIST into a binary classification task for empirically investigating how our bound correlates with adversarial robustness gap. In all datasets, we use the canonical train-test splits. Whenever validation set-based model selection or early stopping is used, we utilize $5\%$ of the training set for this task, and conduct early stopping with a patience of 10 epochs based on validation loss.
%\clearpage

%\vspace{3mm}
\begin{wrapfigure}[18]{r}{0.36\textwidth}
\centering
\vspace{-0.5em}
\begin{subfigure}[b]{0.35\textwidth}
	\centering
	\includegraphics[width=\linewidth]{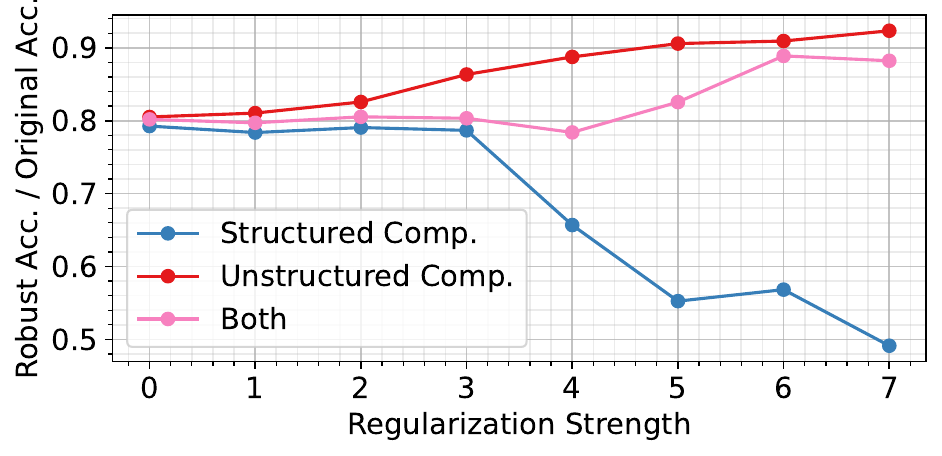}
	% \caption{Low-rank Fact. with FCN and CIFAR-10}
	% \label{fig:l2_tm}
\end{subfigure}
\begin{subfigure}[b]{0.35\textwidth}
	\centering
	\includegraphics[width=\linewidth]{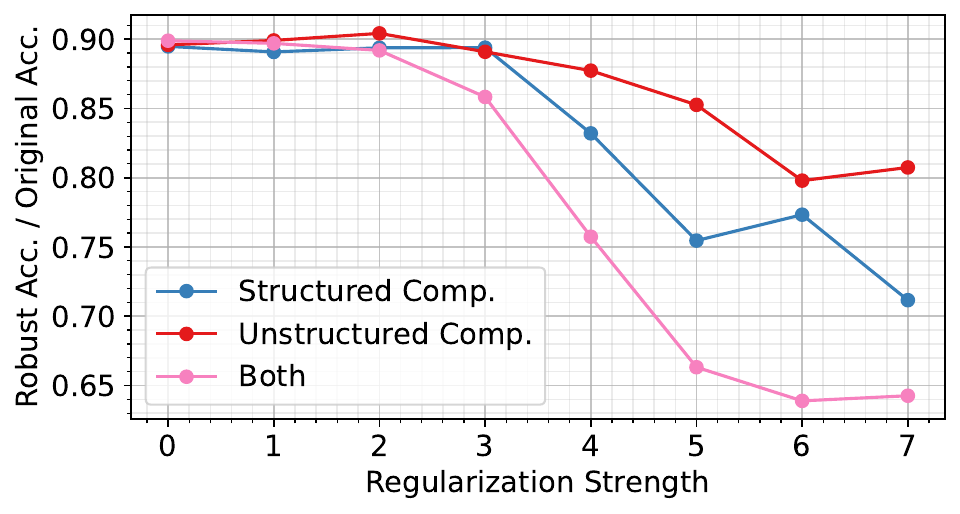}
	% \caption{Low-rank Fact. with FCN and CIFAR-10}
	% \label{fig:l2_tm}
\end{subfigure}
\vspace{-2mm}
\caption{Unstructured alongside structured comp., for row sparsity (top) and spectral comp. (bottom).}
  \label{fig:unstructured_comp}
  \vspace{1em}
\end{wrapfigure}
\subsection{Models} 
Architectures we utilize include fully connected networks (FCN), ResNet18 \citep{he2016deep}, VGG16 \citep{simonyan2014very}, WideResNet-101-2 \citep{zagoruyko2016wide}, vision transformer (ViT) - both as a standalone classifier \citep{dosovitskiyImageWorth2021} and as part of a CLIP encoder \citep{radfordLearningTransferable2021}, and Swin Transformer \citep{liuSwinTransformer2021c}.
Whenever needed, we apply modifications to the standard architectures in question. For our visualization experiments at the beginning of \cref{sec:experiments}, we utilize a 1-hidden layer FCN with ReLU activation, with no bias nodes, and a width of 400. For our main results with CIFAR-10, we utilize a 2000-width FCN with 4 hidden layers, with the remaining architectural choices remain identical. Regarding the VGG16 architecture, due to our datasets being  size $32\times32$, we remove the redundant $4096$-width linear layers (along with their interleaving dropout and ReLU layers). Lastly, when conducting the low-rank factorization experiments, we modify linear layers with a factorized layer, and do the equivalent for 2D convolutional layers \citep{zhongOneLess2023}.

For transformer models, we utilize a Base ViT architecture with $8\times8$ patch size. When fine-tuning a pre-trained version, we utilize a version pretrained on ImageNet-21K and fine-tuned on ImageNet-1K, hosted by the HuggingFace platform \citep{wolfHuggingFacesTransformers2020}. For the Swin Transformer we use a tiny version of the architecture, and utilize an ImageNet-1K pretrained version hosted by torchvision
\citep{torchvision2016}. For CLIP experiments, we utilize a pre-trained CLIP model, 
CLIP ViT-B/32, trained on LAION 2B dataset, hosted by Open CLIP \citep{ilharco_gabriel_2021_5143773}. To conduct the zero-shot classification with the fine-tuned CLIP, we fine-tune the dataset with the Flickr30k dataset using a weight decay of $0.01$ and a learning rate of $1e-5$ for $30$ epochs. For the classification that follows, we present results with top-5 (standard and adversarial) accuracy, and we utilize the following prompts to embed the text descriptions, which serve as the class vectors:

\begin{itemize}
    \item a photo of a \dots
    \item a blurry photo of a \dots
    \item a photo of the \dots
    \item a close-up photo of a \dots
    \item a black and white photo of a \dots
    \item a cropped photo of a \dots
    \item a bright photo of a \dots
\end{itemize}

\subsection{Standard and Adversarial Training}
\paragraph{Standard training} We normally use softmax cross-entropy loss, the AdamW optimizer with a weight decay of $0.01$, a learning rate of $0.001$, and use validation set based model selection for early stopping. For adversarial training tasks, we also include a cosine learning rate annealing schedule (epochs = 60, min. learning rate = 0), basic data augmentation in the form of random cropping and horizontal flips, and an adversarial validation set, again constituting $10\%$ of the training set.

\paragraph{Evaluating and training for adversarial robustness}
For evaluating adversarial robustness, we primarily employ the AutoPGD attack \citep{croceReliableEvaluation2020}, using the implementation from \cite{nicolae2018adversarial}. During adversarial training, we generate adversarial examples at each iteration using the PGD attack \citep{madryDeepLearning2018}. Unless stated otherwise, adversarial examples make up 50\% of each training minibatch. For models trained end-to-end with adversarial robustness, we set $\epsilon=8/255$ for $\ell_\infty$ attacks and $\epsilon=0.5$ for $\ell_2$ attacks. For standard or adversarially fine-tuned models, we use 25\% of these budgets to enable a clear comparison.

\subsection{Implementation and Hardware}

\paragraph{Implementation} We utilize the Python programming language and PyTorch deep learning framework for our implementation \citep{paszkePyTorchImperative2019}. Whenever possible, we utilize the default torchvision \citep{torchvision2016} implementations of our models - we modify these baselines for the changes mentioned above. For adversarial training and evaluation, we use the Adversarial Robustness Toolbox \citep{nicolae2018adversarial}.
Our source code provides further details regarding  implementation\footnote{\url{https://github.com/mbarsbey/advcomp}}.

\paragraph{Hardware and resources} All experiments are conducted on the computational server of an institute, utilizing Nvidia L40S GPUs. The main paper experiments took a total of 600 GPU hours to complete, including $\ge 3$ seed replication for the main results. Total development time is estimated to be $3.5\times$ of the compute time for the final publication. 
\section{Additional Empirical Results}
\label{apx:additional_empirical_results}
\subsection{Experiments with other datasets and architectures}
As mentioned in the main paper, we now extend our empirical findings to other datasets and architectures. \cref{fig:apx_robcompstd} demonstrates results with SVHN dataset and Wide ResNet 101-2 architecture (top), and CIFAR-100 dataset and VGG16 architecture (bottom). Our results replicate with novel datasets and architectures, as qualitatively identical results are obtained in these alternative settings.

\subsection{Group sparsity regularization}
In the main paper, we highlight that we utilize a scale-invariant version of group lasso to disentangle the downstream effects of increasing compressibility vs. decreasing overall parameter scale. \cref{fig:group_lasso_tug_of_war} replicates our main results on ResNet18 and CIFAR-10 while using standard group lasso regularization. While its effects are mostly similar to our version of group lasso, we note that \cref{fig:group_lasso_tug_of_war} presents a subtle difference, where group lasso first creates a slight but statistically significant (error bars = 1 std. deviation) increase in robustness at very low levels. However, as indicated in the main text, these benefits are overtaken by the negative effects of row compressibility as regularization strength increases.

\subsection{Adversarial training results for spectral compressibility}
\begin{wrapfigure}[16]{r}{0.58\textwidth}
\centering
\vspace{-.5cm}
\begin{subfigure}[b]{0.285\textwidth}
	\centering
	\includegraphics[width=\linewidth]{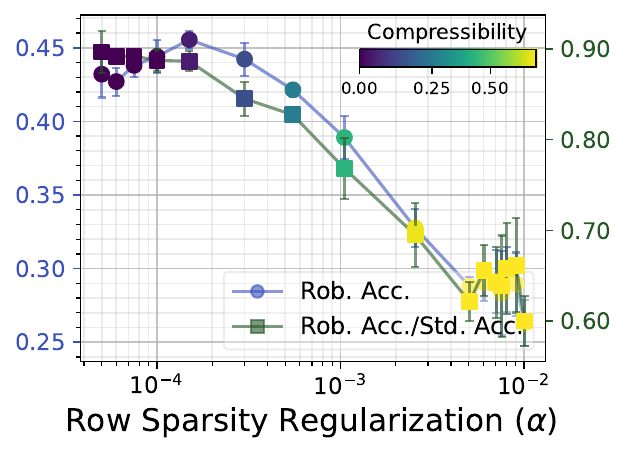}
	% \caption{Row sparsity reg. with FCN and CIFAR-10}
	% \label{fig:l2_tm}
\end{subfigure}
\hfill
\begin{subfigure}[b]{0.285\textwidth}
	\centering
	\includegraphics[width=\linewidth]{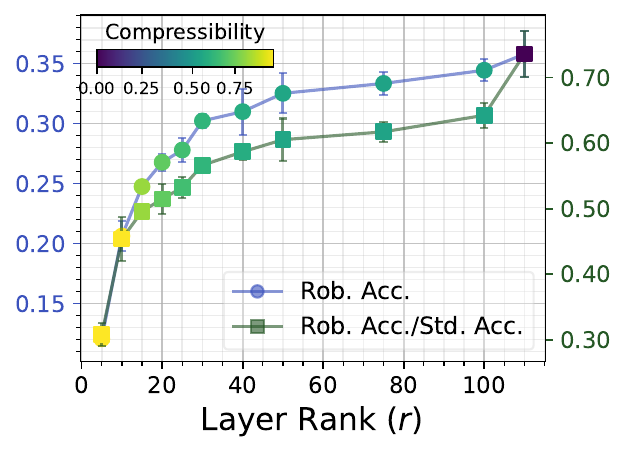}
	% \caption{Low-rank Fact. with FCN and CIFAR-10}
	% \label{fig:l2_tm}
\end{subfigure}
\begin{subfigure}[b]{0.285\textwidth}
	\centering
	\includegraphics[width=\linewidth]{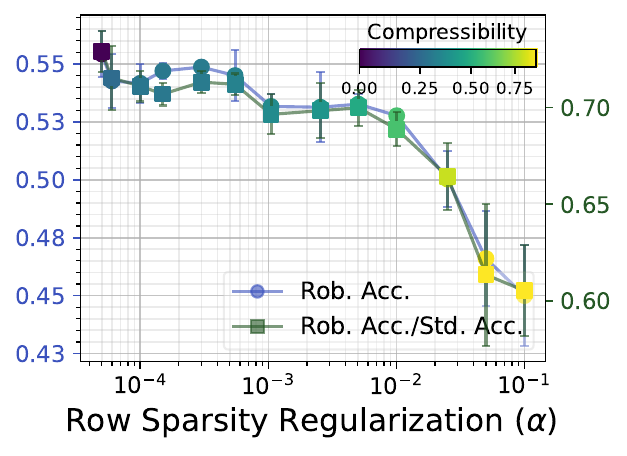}
	% \caption{Row sparsity reg. with ResNet50 and CIFAR-10}
	% \label{fig:l2_tm}
\end{subfigure}
\hfill
\begin{subfigure}[b]{0.285\textwidth}
	\centering
	\includegraphics[width=\linewidth]{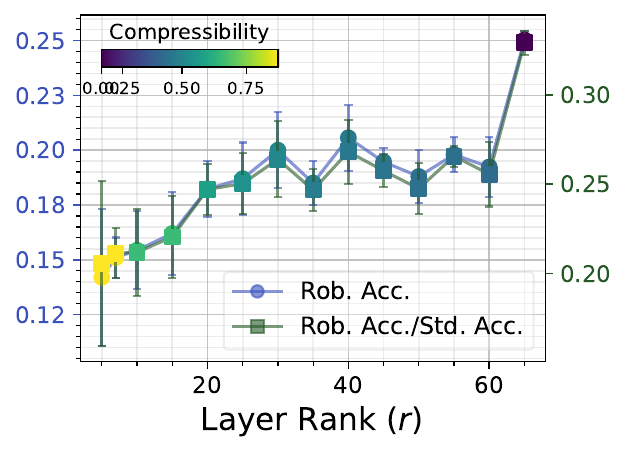}
	% \caption{Low-rank Fact. with ResNet50 and CIFAR-10}
	% \label{fig:l2_tm}
\end{subfigure}
\vspace{-7mm}
\caption{Results with CIFAR-10, FCN (top) and ResNet18 (bottom), with alternative attack norms to \cref{fig:RobCompStd}.}
  \label{fig:crossnorm}
\end{wrapfigure}
\cref{fig:adv_training_spectral_comp_lr} (left, center) presents the spectral compressibility counterpart for adversarial fine-tuning and training results from the main paper, under $\ell_2$ adversarial attacks. The patterns clearly mirror those presented in the main paper under row sparsity conditions.

\subsection{Compressibility through inductive bias}
We now examine whether the results we have observed with explicit regularization methods also apply to cases when compressibility is obtained through the inductive bias of the learning algorithm. For this, we go back to the setting presented in \cref{apx:further_technical_results}, and instead of increasing regularization hyperparameter, we increase initial learning rate ($\eta$) of the training algorithm. The results, presented \cref{fig:adv_training_spectral_comp_lr} (right), paint an intriguing picture. While initially increasing $\eta$ \textit{improves} adversarial robustness under $\ell_\infty$ attacks (perhaps paralleling its well-known benefits for standard generalization), as soon as it starts to increase row compressibility, its benefits of $\eta$ quickly disappear. This highlights the fact that our results not only inform the adversarial robustness behavior under explicit regularization and architecture design, but also inductive biases of the learning algorithm. 

\subsection{Unstructured compressibility}
While unstructured compressibility is not the focus of our study, we note that it appears in the bound for $L^\infty_\Phi$ in \cref{thm:lipschitz_fcn_op_full}, unlike that for $L^2_\Phi$. To investigate the significance of this result, we replicate the setting presented in \cref{apx:further_technical_results}, but this time in addition to increasing the group lasso/nuclear norm regularization, we run a separate set of experiments where we either solely increase L1 regularization, or increase it along with structured sparsity-inducing regularization. We then compare the performance of the resulting models under the corresponding adversarial attacks. The results are presented in \cref{fig:unstructured_comp}. Remember that our bound implies \textit{positive} effects of unstructured compressibility for $L^\infty_\Phi$. Indeed, in \cref{fig:unstructured_comp} we see that L1 regularization can compensate for the negative effects of structured compressibility (top), while it has no such benefits for spectral compressibility (bottom). We believe that understanding the intricate relationships among different types of compressibility is a crucial future research direction.

\subsection{Results with alternative norms and budgets} While for brevity we presented our main results to include robustness against $\ell_\infty$ attacks under neuron sparsity, and $\ell_2$ attacks under spectral compressibility, for completeness we provide our central results with the cross-norm attacks, \ie $\ell_\infty$ attacks under spectral compressibility, and $\ell_2$ attacks under neuron sparsity. The results are presented in \cref{fig:crossnorm}, and are fully in line with the results presented in the main paper.

\paragraph{Model performance under varying attack budgets.} As described in the main paper, in order to investigate the effects of structural interventions on standard trained models' adversarial robustness, we utilize a smaller attack budget to avoid floor effects from obscuring the effects we are investigating. \cref{tbl:budget_experiments} demonstrates that our results are not dependent on a specific attack budget, and the patterns that confirm our hypotheses hold across various attack budgets; however in standard trained models floor effects indeed prevent the observation of the results of our interventions, justifying our utilization of a reduced budget in such cases.

\begin{table}[h]
\caption{Robust accuracy of a ViT model trained on CIFAR-10, under increasing adversarial sample ratio in training ($\rho$) vs. increasing $\ell_\infty$ attack budgets ($\epsilon$).}
\label{tbl:budget_experiments}
\centering
\begin{tabular}{|c|c|c|c|c|c|c|}
\hline
 & $\rho=0.0$ & $\rho=0.05$ & $\rho=0.1$ & $\rho=0.25$ & $\rho=0.5$ \\
\hline
$\epsilon=2/255$ & $0.111$ & $0.333$ & $0.479$ & $0.519$ & $0.510$ \\
  \hline
$\epsilon=4/255$ & $0.002$ & $0.061$ & $0.263$ & $0.371$ & $0.390$ \\
  \hline
$\epsilon=8/255$ & $0.000$ & $0.002$ & $0.032$ & $0.113$ & $0.179$ \\
  \hline
$\epsilon=16/255$ & $0.000$ & $0.000$ & $0.000$ & $0.005$ & $0.019$ \\
\hline
\end{tabular}
\end{table}

\subsection{Fine-tuning results with transformers}
As described in the main text and above, we investigate whether we can replicate our results while fine-tuning ImageNet-pretrained transformer models, ViT and Swin Transformer, on CIFAR-10 and SVHN respectively, while utilizing sparsification regularization. The results are presented in \cref{tbl:vit-finetuned} and \cref{tbl:swin-finetuned}, and replicate our hypotheses.

\begin{table}[h]
\caption{Robust and standard accuracies of pretrained ViT models fine-tuned on CIFAR-10 dataset under varying neuron sparsification regularization strength ($\alpha$), i.e. group lasso.}
\label{tbl:vit-finetuned}
\centering
\begin{tabular}{|c|c|c|c|c|c|c|}
\hline
& $\alpha=0.0$ & $\alpha=0.001$ & $\alpha=0.005$ & $\alpha=0.01$ & $\alpha=0.05$ & $\alpha=0.1$\\
\hline
Rob. Acc. & $0.383$ & $0.362$ & $0.369$ & $0.219$ & $0.123$ & $0.111$ \\
\hline
Std. Acc. & $0.920$ & $0.926$ & $0.921$ & $0.893$ & $0.873$ & $0.829$ \\
\hline
RA/SA     & $0.416$ & $0.401$ & $0.391$ & $0.245$ & $0.141$ & $0.134$ \\
\hline
\end{tabular}
\end{table}

\begin{table}[h]
\caption{Robust and standard accuracies of pretrained Swin Transformer models fine-tuned on SVHN dataset under varying neuron sparsification regularization strength ($\alpha$).}
\label{tbl:swin-finetuned}
\centering
\begin{tabular}{|c|c|c|c|c|c|c|}
\hline
 & $\alpha=0.0$ & $\alpha=0.001$ & $\alpha=0.005$ & $\alpha=0.01$ & $\alpha=0.05$ & $\alpha=0.1$\\
\hline
Rob. Acc. & $0.384$ & $0.360$ & $0.357$ & $0.326$ & $0.155$ & $0.083$ \\
\hline
Std. Acc. & $0.889$ & $0.877$ & $0.887$ & $0.880$ & $0.881$ & $0.875$ \\
\hline
RA/SA     & $0.432$ & $0.410$ & $0.402$ & $0.370$ & $0.176$ & $0.095$ \\
\hline
\end{tabular}
\end{table}

Given that classification accuracy is the most commonly utilized and communicated metric in the literature on adversarial robustness, the main paper reports these as our primary metric. However, we find that same hypothesized patterns can be observed when robust loss - standard loss is utilized as the main metric, instead of accuracy. \cref{tbl:adv_loss} demonstrates these results in the fine-tuning experiments described above, replicating our findings with robust and standard accuracy.
\begin{table}[h!]
\caption{Robust and standard accuracies and loss differences for pretrained Swin Transformer models fine-tuned on SVHN dataset under varying neuron sparsification regularization strength ($\alpha$).}
\label{tbl:adv_loss}
\centering
\begin{tabular}{|c|c|c|c|c|c|c|}
\hline
 & $\alpha=0.0$ & $\alpha=0.001$ & $\alpha=0.005$ & $\alpha=0.01$ & $\alpha=0.05$ & $\alpha=0.1$\\
\hline
Rob. Acc. & $0.384$ & $0.360$ & $0.357$ & $0.326$ & $0.155$ & $0.083$ \\
\hline
Std. Acc. & $0.889$ & $0.877$ & $0.887$ & $0.880$ & $0.881$ & $0.875$ \\
\hline
Adv. Loss - Test Loss & $0.505$ & $0.517$ & $0.530$ & $0.554$ & $0.726$ & $0.792$ \\
\hline
\end{tabular}
\end{table}

\subsection{Results with post-pruning fine-tuning}
\begin{wrapfigure}[10]{r}{0.28\textwidth}
\centering
\vspace{-1cm}
\includegraphics[width=\linewidth]{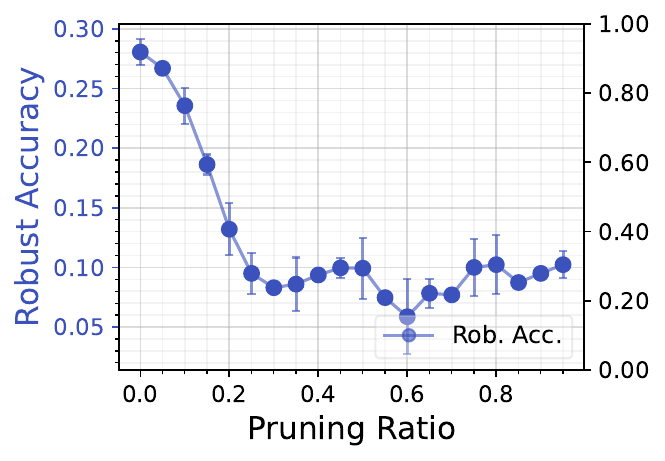}
\vspace{-0.5cm}
\caption{Post-pruning fine-tuning and robustness.}
\label{fig:pruning_ft}
\end{wrapfigure}
Utilizing a baseline model adversarially trained on CIFAR-10 dataset with ResNet18 architecture, instead of regularizing for compressibility, we investigate whether post-pruning fine tuning also creates a similar vulnerability. After layerwise structured pruning, we adversarially fine-tune the model with an adversarial sample ratio of 0.5; with training stop based on adversarial accuracy on validation set. Our results, presented in \cref{fig:pruning_ft}, demonstrate that even the commonly utilized pruning and fine-tuning pipeline can create adversarial vulnerability.

%\section*{Appendix References} % Use section* for appendix references too

% % \clearpage % Optional: start appendix bibliography on a new page
% \phantomsection

% \bibliography{refs.bib}
% \bibliographystyle{icml2025}
\end{document}